\documentclass[]{article}
\usepackage{geometry}
\usepackage{algorithm}
\usepackage{algpseudocode}

\usepackage{amsmath}
\usepackage{amssymb}

\usepackage{bigfoot}

\usepackage{enumerate}

\usepackage{fancyhdr}
\usepackage{fancyvrb}

\usepackage[hidelinks]{hyperref} 

\usepackage{amsthm} 
\newtheorem{theorem}{Theorem}[section]
\newtheorem{lemma}[theorem]{Lemma}
\newtheorem{corollary}[theorem]{Corollary}

\newtheorem{definition}[theorem]{Definition}
\newtheorem{proposition}[theorem]{Proposition}

\newtheorem{remark}[theorem]{Remark}

\newtheorem{example}[theorem]{Example}

\newcommand{\dd}{\mathrm{d}}

\usepackage{mathrsfs}
\usepackage{physics}

\usepackage{xcolor} 

\usepackage{caption}
\usepackage{subcaption}
\usepackage{graphicx}

\title{Physics-informed reduced-order modelling with equivariant spectral submanifolds}

\author{%
    Georg Maierhofer\\
    \small Department of Applied Mathematics and Theoretical Physics\\
    \small University of Cambridge, UK\\
    \small \texttt{gam37@cam.ac.uk}
}

\date{}

\usepackage{xcolor}

\usepackage[toc,page]{appendix}

\usepackage{enumerate}

\usepackage{bm}

\usepackage{mathdots}

\renewcommand{\Re}{\mathrm{Re}}
\renewcommand{\Im}{\mathrm{Im}}

\usepackage{enumitem}
\usepackage{booktabs}

\begin{document}
\maketitle
\begin{abstract}
  \noindent Spectral submanifold (SSM) reduction has emerged as a mathematically principled route to reliable nonlinear reduced-order models, capturing dynamics beyond the reach of linear techniques such as Dynamic Mode Decomposition (DMD). The computation of SSMs, however, remains computationally expensive, particularly for high-dimensional systems. In this work, we introduce equivariant spectral submanifold (eSSM) reduction, a novel extension of the SSM framework that explicitly incorporates symmetries of the full-order model into the reduction process. We establish the mathematical foundations of this approach by showing that SSMs are naturally equivariant submanifolds and that the associated charts and reduced dynamics inherit the appropriate induced group actions. Building on this framework, we develop a novel equivariant SSM reduction algorithm that exploits these symmetries to achieve substantially faster computations while also improving model robustness. We demonstrate the advantages of this approach on several benchmark problems including a test from the Common Task Framework for Science.
\end{abstract}

\paragraph{Keywords:} equivariance, spectral submanifolds, model reduction, nonlinear dynamics, data-driven\\ modelling

\paragraph{MSC codes:} 37D10, 37C81, 37M21

\section{Introduction}
Reduced order models are of paramount importance in many scientific disciplines facilitating the simulation of the main features of complex dynamics in a computationally efficient manner \cite{brenneretal2017}. There are a range of techniques that can be applied for this task, including linear methods such as dynamic mode decomposition (DMD) \cite{SCHMID_2010, tu2014_koopman, kutz_dmd_book2016}, as well as nonlinear methods such as the Koopman operator framework \cite{bruntonkutzkoopmanreview22,Colbrook2016} and spectral submanifold (SSM) reduction \cite{haller2016nonlinear,haller2023nonlinear}. Such reduced-order methods are particularly useful in the context of data-driven modelling, where they can provide a principled and interpretable framework for modelling and predicting complex dynamics often outperforming other types of data-driven models in terms of predictive accuracy and generalisation \cite{riva2026ctf4nuclear,wyder2026common,yermakov2026the}.

In particular, SSMs have recently emerged as a powerful tool for model reduction of nonlinear dynamical systems. SSMs are invariant manifolds that serve as the smoothest nonlinear continuation of spectral subspaces of the linearised system around a fixed point. As such, SSMs provide a natural and rigorous framework for reduced-order modelling that incorporates the nonlinear aspect of the dynamics without having to resort to linearisation or choose ad-hoc observables. The existence of a unique SSM was first proved by Haller and Ponsioen \cite{haller2016nonlinear} using the parametrisation method of Cabré et al. \cite{Cabre2003a,Cabre2003b,Cabre2005} in the case of stable spectral subspaces and later extended to mixed internal
stability types in \cite{haller2023nonlinear}. Since then, there have been several successful developments of capable and versatile toolboxes for the efficient computation of SSMs and their associated reduced dynamics. In particular, SSMTool \cite{jain2022compute,shobhit_jain_2021_4614202} is a powerful Matlab toolbox for the computation of SSMs in mechanical systems, while the data-driven SSMLearn reduction method introduced by \cite{SSMLearnpaper22,FastSSMpaper23} allows the discovery of SSM reduced models directly from data. However, several challenges remain including the cost of the optimisation problems associated with finding parametrisations of the spectral submanifold and the reduced dynamics which grows with reduced order dimension and the polynomial degree in the reduction algorithm, as well as the incorporation of physical symmetries in the reduced models to faithfully reflect underlying physical principles.

The incorporation of such physical symmetries has been a long-studied problem in the context of numerical methods \cite{mclachlan2024functional,hairer2013geometric,maierhofer_schratz_24,bronsard2026symmetric,feng2025explicit,faou2026fully} and has also seen increasing attention in the context of data-driven modelling with the idea of creating more physically consistent and robust models learned from data. In particular, in the context of DMD recent work by \cite{baddoo2023physics} has introduced a physics-informed DMD (piDMD) method that incorporates symmetries into the DMD framework. In symbolic regression \cite{sindy,rudy2017data}, symmetries and conservation laws are treated in \cite{Yang2024} and, in the context of Koopman operator learning, this has been explored in recent work by \cite{salova2019koopman,HARDER2025134725}.

Notably, structure-preserving reduced-order modelling remains largely
unexplored in the context of SSMs. A first step in this direction is the recent work of \cite{Kaszas_Haller_2024}, who exploited the shift-reflect symmetry of pipe flow by restricting the dynamics to a symmetry-invariant subspace, in which the edge state becomes amenable to SSM-based reduction. In that approach, however, the symmetry serves only to constrain the ambient dynamics prior to reduction: the SSM parametrisation and the reduced dynamics are still computed without reference to the group action, so the reduction step itself does not benefit computationally from the symmetry.

In this work we fill this gap by showing how linear symmetries of the
full-order model can be incorporated directly into the SSM reduction process. We prove that SSMs of equivariant systems are themselves equivariant submanifolds, and that suitably chosen charts and the associated reduced dynamics inherit induced actions of the symmetry group. Building on these results, we develop an equivariant SSM reduction algorithm (eSSM) that constrains both the manifold parametrisation and the reduced dynamics to symmetry-adapted coefficient spaces. This yields significantly faster computations as well as higher model
robustness, since the learned model cannot drift away from the symmetric dynamics of the full system.

The remainder of this manuscript is structured as follows. We begin in \S\ref{sec:equivariant_ssms} by reviewing the basic theory of SSMs before developing their equivariance properties. In \S\ref{sec:computing_ssms} we then discuss the computation of SSMs and the associated reduced dynamics as well as associated equivariance properties, before introducing our novel eSSM reduction algorithm in \S\ref{sec:data_driven_ssm_reduction_with_equivariance}. We demonstrate the advantages of this approach on several benchmark problems in \S\ref{sec:examples}, including a test problem from the Common Task Framework for Science. Finally, concluding remarks are provided in \S\ref{sec:conclusions}.

The code associated with this manuscript is publicly available as an installable Python package \texttt{eSSM} at \url{https://github.com/GeorgAUT/eSSM}.

\section{Spectral submanifolds and equivariance}\label{sec:equivariant_ssms}
In this work we consider the following general form of nonlinear dynamical systems around a given fixed point $\mathbf{x}=\mathbf{0}$:
\begin{align}\label{eqn:general_system}
\dot{\mathbf{x}} = \mathbf{A}\mathbf{x} + \mathbf{f}(\mathbf{x}), \quad \mathbf{x} \in \mathbb{R}^n,
\end{align}
where $\mathbf{A}$ is a linear operator (the linearisation of the system at $\mathbf{x}=\mathbf{0}$) and $\mathbf{f}=\mathcal{O}(\|\mathbf{x}\|^2)$ is a smooth nonlinear function $\mathbf{f}\in C^\infty(\mathbb{R}^n, \mathbb{R}^n)$. We assume that its fixed point at $\mathbf{x}=\mathbf{0}$ is hyperbolic, i.e. the spectrum $\operatorname{Spect}(\mathbf{A})$ does not intersect the imaginary axis, $0 \notin \Re \left[\operatorname{Spect}(\mathbf{A})\right]$.

Spectral submanifolds are a nonlinear extension of the concept of spectral subspaces of the linearised system. Let us denote the eigenvalues of $\mathbf{A}$ by $\lambda_j=\alpha_j + i \omega_j, j=1,\ldots,n,$ ordered by
\begin{align*}
\Re \lambda_1 \leq \Re \lambda_2 \leq \ldots \leq \Re \lambda_n.
\end{align*}
Since $\mathbf{A}$ is a real matrix, its eigenvalues are either real or come in complex conjugate pairs, and we denote the corresponding eigenvectors by $\mathbf{v}_j$.
\begin{definition}
  The real modal eigenspaces $E_j$ are defined as the real span of the eigenvectors associated with $\lambda_j$. In particular, if $\lambda_j$ is a real eigenvalue, then $E_j = \operatorname{span}_{\mathbb{R}} \{\mathbf{v}_j\}$, while if $\lambda_j$ is a complex eigenvalue, then $E_j = \operatorname{span}_{\mathbb{R}} \{\mathbf{v}_j, \overline{\mathbf{v}}_j\}$.
\end{definition}
The modal eigenspaces $E_j$ are invariant under the linear flow generated by $\mathbf{A}$. A direct sum of modal eigenspaces, $E=\bigoplus_{j \in J} E_j$, is called a spectral subspace and is also invariant under the flow generated by $\mathbf{A}$. For a given spectral subspace $E$, we can ask whether there exists an invariant manifold of the full nonlinear system \eqref{eqn:general_system} that is tangent to $E$ at the fixed point $\mathbf{x}=\mathbf{0}$ and that captures the nonlinear dynamics associated with the modes in $E$. Based purely on this tangency requirement there are infinitely many such invariant manifolds, however, \cite{haller2023nonlinear,haller2016nonlinear} proved that under certain non-resonance conditions, there exists a unique invariant manifold $\mathcal{W}(E)$ that is as smooth as the full system and that is tangent to $E$ at the fixed point. This ``smoothest'' invariant manifold is called the spectral submanifold (SSM) associated with the spectral subspace $E$. The existence of this SSM is guaranteed under certain non-resonance conditions on the eigenvalues of $\mathbf{A}$. These resonance conditions are common practice in the relevant literature and generally hold for generic parameter configurations of typical dissipative systems, however, they may fail for simple toy examples with non-generic parameter choices.

\begin{definition}[Global non-resonance condition]
\label{def:global-nonresonance}
We say that $\mathbf{A}$ satisfies the
{global non-resonance condition} if, with the exception of possible $1{:}1$ resonances created by repeated eigenvalues, for any $j\in\{1,\dots,n\}$ and any $\mathbf{m}=(m_1,\dots,m_n)\in\mathbb{N}^{n}$ with $|\mathbf{m}|:=\sum_{k=1}^{n} m_k \ge 2$ we have
\begin{align*}
  \lambda_j \;\neq\; \sum_{k=1}^{n} m_k\,\lambda_k.
\end{align*}
\end{definition}

Under this global non-resonance condition, we then have the following existence result proved by \cite{haller2023nonlinear} adapted to our setting.
\begin{theorem}[{Theorem 1 in \cite{haller2023nonlinear}}]\label{thm:existence-ssm}
Assume that $\mathbf{A}$ is semi-simple and satisfies the global non-resonance condition of Definition \ref{def:global-nonresonance} and let $E$ be a spectral subspace of $\mathbf{A}$. Then there is a unique invariant manifold $\mathcal{W}(E)$ of class $C^\infty$ and dimension $\operatorname{dim} E$ that is tangent to $E$ at $\mathbf{x}=\mathbf{0}$. Moreover, $\mathcal{W}(E)$ admits, near the origin, a local representation as a graph over $E$ with integer-powered Taylor expansion.
\end{theorem}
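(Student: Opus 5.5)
The plan is to prove Theorem~\ref{thm:existence-ssm} by the parametrisation method of Cabr\'e--Fontich--de la Llave, in the form used in \cite{haller2016nonlinear,haller2023nonlinear}. Since $\mathbf{A}$ is semi-simple, we diagonalise it over $\mathbb{C}$ and write $\mathbf{x}=\mathbf{V}\mathbf{y}$, with $\mathbf{y}=(\mathbf{y}_E,\mathbf{y}_{\perp})$ split according to $E=\bigoplus_{j\in J}E_j$ and its complementary spectral subspace. We seek an embedding $\mathbf{W}:\mathbb{C}^d\to\mathbb{C}^n$, $d=\dim E$, with $\mathbf{W}(\mathbf{0})=\mathbf{0}$ and $D\mathbf{W}(\mathbf{0})$ spanning $E$, together with reduced dynamics $\mathbf{R}$, such that the invariance equation
\begin{align*}
\mathbf{A}\mathbf{W}(\mathbf{s})+\mathbf{f}(\mathbf{W}(\mathbf{s}))=D\mathbf{W}(\mathbf{s})\,\mathbf{R}(\mathbf{s})
\end{align*}
holds. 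Because $\mathbf{A}$ has no resonances, we choose $\mathbf{R}(\mathbf{s})=\Lambda_E\mathbf{s}$ to be linear, where $\Lambda_E=\operatorname{diag}(\lambda_j)_{j\in J}$. We then impose the reality condition that conjugate coordinates map to conjugate points, which yields a real manifold.

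First, we solve the invariance equation order by order in formal power series. At order $|\mathbf{m}|=k\ge2$, the coefficient $\mathbf{W}_{\mathbf{m}}$ satisfies
\begin{align*}
\Big(\mathbf{A}-\textstyle\sum_{j\in J}m_j\lambda_j\Big)\mathbf{W}_{\mathbf{m}}=-[\mathbf{f}\circ\mathbf{W}]_{\mathbf{m}},
\end{align*}
where the right-hand side depends only on coefficients of lower order. In the eigenbasis this equation decouples into scalar equations with divisors $\lambda_i-\sum_j m_j\lambda_j$. By Definition~\ref{def:global-nonresonance}, these divisors are nonzero for all $i$ and all $|\mathbf{m}|\ge2$. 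The excepted $1{:}1$ resonances only occur at $|\mathbf{m}|=1$, where the coefficients are fixed by the tangency condition. Hence a formal solution exists and is unique. It is real by conjugation symmetry, and it is a graph over $E$ after reparametrising with the inverse of the projection onto $E$. This gives the integer-powered Taylor expansion.

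Next, we upgrade the formal solution to an actual $C^\infty$ invariant manifold. We truncate the series at some order $N$, and write $\mathbf{W}=\mathbf{W}^{\le N}+\mathbf{W}^{>}$. The remainder solves a fixed-point problem, which we treat separately by stability type.
\begin{itemize}
\item \textbf{Stable $E$} (all $\Re\lambda_j<0$, $j\in J$): we use the Cabr\'e et al.\ contraction argument, integrating along the linear flow $e^{\Lambda_E t}$ backward or forward in weighted $C^r$ spaces. $N$ is taken larger than the ratio of the spectral gaps, so the operator $\mathbf{W}^{>}\mapsto\int e^{-\mathbf{A}t}\,(\cdots)\,e^{\Lambda_E t}\,dt$ is a contraction.
\item \textbf{Mixed stability}: as in \cite{haller2023nonlinear}, we split the normal directions into stable and unstable parts and integrate each in the convergent time direction.
\end{itemize}
Since $N$ can be taken arbitrarily large and the solutions for different $N$ agree by uniqueness, the manifold is $C^\infty$.

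Finally, uniqueness among $C^\infty$ manifolds tangent to $E$ follows from the same argument. Any such manifold, written as a graph over $E$, has a Taylor expansion that must satisfy the same homological equations, so its jet to all orders coincides with ours. The contraction then forces the remainders to agree.

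I expect the main obstacle to be the analytic step in the mixed-stability case. The naive contraction can fail when $E$ contains both stable and unstable modes, because the weights must be adapted to the rates inside $E$ and the normal rates simultaneously. The plan is to follow \cite{haller2023nonlinear}: localise the nonlinearity with a cutoff and apply a Lyapunov--Perron-type argument with exponential dichotomies, using the hyperbolicity of the origin to obtain the gaps. Because the statement is cited as Theorem~1 of \cite{haller2023nonlinear}, in the paper it is reasonable to reduce to checking that their hypotheses are met. Those hypotheses are semi-simplicity, non-resonance restricted to the indices relevant to $E$ (which is implied by the global condition), and hyperbolicity.
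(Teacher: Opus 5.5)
The paper gives no proof of Theorem~\ref{thm:existence-ssm}; it imports it from \cite{haller2023nonlinear}, so your closing reduction to that result is what the paper does. Your self-contained sketch, however, has a genuine gap in the mixed-stability case. It only goes through when $E$ has a single stability type, and the mixed case cannot be repaired by a cut-off Lyapunov--Perron argument.

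The Cabr\'e--Fontich--de la Llave contraction for the remainder $\mathbf{W}^{>}=O(|\mathbf{s}|^{N+1})$ works because $|e^{\Lambda_E t}\mathbf{s}|$ decays in one time direction, so taking $N$ large beats the normal rates. When $E$ contains both stable and unstable modes, $e^{\Lambda_E t}\mathbf{s}$ grows in both time directions for generic $\mathbf{s}$, and increasing $N$ buys nothing. Hyperbolicity of the origin supplies an exponential dichotomy only across $\Re\lambda=0$. It does not separate $E$ from its spectral complement, whose rates may interlace with those in $E$. Where such a gap does exist, Lyapunov--Perron gives only $C^r$ with $r$ limited by gap ratios, and a manifold that depends on the cut-off. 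Likewise, ``solutions for different $N$ agree by uniqueness'' presupposes exactly the uniqueness that is missing.

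In fact your uniqueness step (all jets agree, then the contraction forces the remainders to agree) cannot be completed for mixed $E$, because uniqueness among $C^\infty$ invariant manifolds fails there. Take $\mathbf{f}\equiv\mathbf{0}$ and $\mathbf{A}=\operatorname{diag}(-\pi,-1,\sqrt{2})$. This is semi-simple, hyperbolic and globally non-resonant, since $1,\sqrt{2},\pi$ are rationally independent. Let $E=\operatorname{span}\{\mathbf{e}_2,\mathbf{e}_3\}$. Along the flow, $I=|x_2|^{\sqrt{2}}|x_3|$ is conserved and $|x_2|^{\pi}$ scales exactly like $x_1$. Hence for every $G$ smooth on $[0,\infty)$ and flat at $0$ (e.g.\ $G(u)=e^{-1/u}$), the graph $x_1=|x_2|^{\pi}G(|x_2|^{\sqrt{2}}|x_3|)$ is an invariant surface tangent to $E$. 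All its derivatives extend by zero across the coordinate axes, so it is $C^\infty$, and it has the same (zero) Taylor expansion as $E$.

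What your formal step does prove correctly is uniqueness of the Taylor expansion. Genuine uniqueness, in $C^r$ with $r$ above the spectral quotient, is what Cabr\'e et al.\ provide in the pseudo-stable or pseudo-unstable case.

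For existence under the stated hypotheses, the natural tool is Sternberg's $C^\infty$ linearisation theorem. Its hypotheses are precisely hyperbolicity and non-resonance among \emph{all} eigenvalues, which is why the statement assumes the global condition rather than only the $E$-related conditions you list. Conjugating to $\dot{\mathbf{y}}=\mathbf{A}\mathbf{y}$ and taking the image of $E$ gives a $C^\infty$ invariant manifold tangent to $E$ for every stability type at once. The non-uniqueness above is exactly the freedom to compose the conjugacy with flat perturbations of the identity that commute with $e^{\mathbf{A}t}$.
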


\begin{remark}
  Note there are several versions of this result which relax various of the assumptions and prove existence of a unique smoothest invariant manifold in further generality, for example without the global non-resonance conditions (provided the associated spectral subspace is stable or unstable). There are also extensions for non-autonomous systems. In the interest of clarity we have restricted the current presentation to the simplest version of the result, and the interested reader is referred to the original papers \cite{haller2023nonlinear,haller2016nonlinear} and follow-on works for further details.
\end{remark}

\subsection{Linear symmetries and equivariance}
For completeness let us recall the definition of equivariance for a dynamical system. We will, throughout this work, focus on linear symmetries that fix the origin, i.e. $S\mathbf{0} = \mathbf{0}$ for all $S \in \mathcal{G}$.
\begin{definition}
  The dynamical system \eqref{eqn:general_system} is said to be \textit{equivariant} with respect to a linear symmetry group $\mathcal{G}\subset GL(n,\mathbb{R})$ acting on $\mathbb{R}^n$ if for all $S \in \mathcal{G}$ and $\mathbf{x} \in \mathbb{R}^n$ we have
  \begin{align}\label{eqn:equivariance_condition}
    S \left(A\mathbf{x} + \mathbf{f}(\mathbf{x})\right) = A S \mathbf{x} + \mathbf{f}(S\mathbf{x}).
  \end{align}
\end{definition}

Such linear symmetries naturally arise in many applications, for example as a result of spatial symmetries in the underlying domain (e.g.\ evolution of waves on periodic or spherical domains) and intrinsic symmetries of the dynamical equations themselves, such as permutation symmetries of coupled identical units. The following is a simple guiding example of such a system with a linear symmetry that we will use as a running example throughout the paper, further examples are provided in \S\ref{sec:examples}.

\begin{example}\label{ex:chain_of_oscillators}
Our first example is a damped oscillator chain with $\ell$ masses and an additional nonlinear spring attached to the leftmost mass, cf. Figure \ref{fig:oscillators}, similarly to the basic setup provided in \cite[\S 4.1]{axaas2023model}. The equations of motion are
\begin{align*}
  \mathbf{M}\ddot{\mathbf{q}} + \mathbf{C}\dot{\mathbf{q}} + \mathbf{K}\mathbf{q} + \mathbf{f}_{\mathrm{nl}}(\mathbf{q},\dot{\mathbf{q}}) = \mathbf{0},
\end{align*}
where $\mathbf{q} = (q_1, \ldots, q_\ell)^\top$ is the vector of displacements from equilibrium. The mass matrix is diagonal,
\begin{align*}
  \mathbf{M} = \operatorname{diag}(m_1, m, \ldots, m), \qquad m_1 = 1.5,\ m = 1,
\end{align*}
and the stiffness matrix is the standard tridiagonal coupling matrix with a free right end,
\begin{align*}
  \mathbf{K} = k\begin{pmatrix}
    2 & -1 &        &        &    \\
    -1 & 2 & -1     &        &    \\
       & \ddots & \ddots & \ddots &    \\
       &        & -1     & 2      & -1 \\
       &        &        & -1     & 1
  \end{pmatrix}, \qquad k = 1.
\end{align*}
Damping is of Rayleigh type, $\mathbf{C} = \alpha \mathbf{M} + \beta \mathbf{K}$, with $\alpha = 2\cdot 10^{-3}$ and $\beta = 5\cdot 10^{-3}$. The nonlinear spring acts only on the leftmost mass and depends on both its displacement and velocity:
\begin{align*}
  \mathbf{f}_{\mathrm{nl}}(\mathbf{q},\dot{\mathbf{q}}) = m_1 \bigl( 3\,q_1^3 + 7\,q_1^4\,\dot q_1 + 5\,\dot q_1^{\,3} \bigr)\, \mathbf{e}_1,
\end{align*}
where $\mathbf{e}_1$ is the first standard basis vector. Introducing the state $\mathbf{x} = (\mathbf{q}^\top, \dot{\mathbf{q}}^\top)^\top \in \mathbb{R}^{2\ell}$, the system can be recast in first-order form as
\begin{align*}
  \dot{\mathbf{x}} = \mathbf{A}\mathbf{x} + \mathbf{F}(\mathbf{x}), \qquad
  \mathbf{A} = \begin{pmatrix} \mathbf{0} & \mathbf{I} \\ -\mathbf{M}^{-1}\mathbf{K} & -\mathbf{M}^{-1}\mathbf{C} \end{pmatrix}, \qquad
  \mathbf{F}(\mathbf{x}) = \begin{pmatrix} \mathbf{0} \\ -\mathbf{M}^{-1}\mathbf{f}_{\mathrm{nl}}(\mathbf{q},\dot{\mathbf{q}}) \end{pmatrix}.
\end{align*}

We note that the system is equivariant with respect to the spatial symmetry group $\mathcal{G} = \langle -\mathbf{I}_{2\ell} \rangle = \{\mathbf{I}_{2\ell}, -\mathbf{I}_{2\ell}\} \cong C_2$ (where $C_2$ denotes the cyclic group of order 2), which acts on the phase space $\mathbb{R}^{2\ell}$ by negation. In other words, the system is invariant under the transformation $\mathbf{x} \mapsto -\mathbf{x}$, which corresponds to simultaneously reversing the direction of all displacements and velocities and can easily be verified by noting that $\mathbf{F}(\mathbf{x})$ contains only odd terms.

\begin{figure}[h!]
  \centering
  \includegraphics[width=0.8\textwidth]{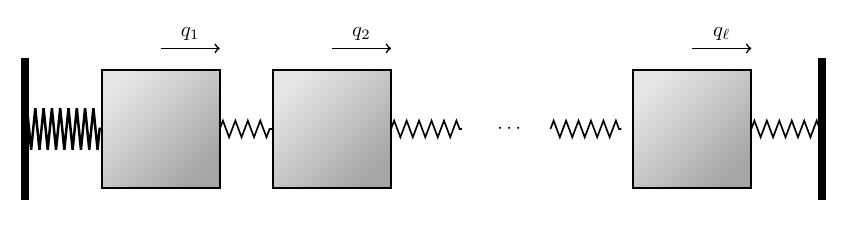}
  \caption{Schematic of the chain of oscillators, with nonlinear force at left end.}
  \label{fig:oscillators}
\end{figure}
\end{example}

\subsection{Equivariance of SSMs}
The uniqueness of the SSM $\mathcal{W}(E)$ in Theorem~\ref{thm:existence-ssm} is especially useful for the treatment of equivariant systems. In particular, we can immediately deduce the following result about the equivariance of SSMs.
\begin{theorem}\label{thm:equivariance-ssm} Suppose that the system \eqref{eqn:general_system} is equivariant with respect to a linear symmetry group $\mathcal{G}$ and let $E$ be a spectral subspace of $\mathbf{A}$. Then the following statements hold:
  \begin{enumerate}[label=(\roman*)]
    \item the spectral subspace $E$ is invariant under the action of $\mathcal{G}$, i.e. for all $S \in \mathcal{G}$ we have $SE = E$;
    \item the unique smoothest SSM $\mathcal{W}(E)$ associated with $E$ is also invariant under the action of $\mathcal{G}$, i.e. for all $S \in \mathcal{G}$ we have $S\mathcal{W}(E) = \mathcal{W}(E)$.
  \end{enumerate}
\end{theorem}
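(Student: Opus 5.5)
The plan has two steps. First, linearise the equivariance relation \eqref{eqn:equivariance_condition} to get $SA = AS$. Then (i) follows from commutation, and (ii) follows from the uniqueness statement in Theorem~\ref{thm:existence-ssm}.

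For (i), fix $S \in \mathcal{G}$. Replace $\mathbf{x}$ by $\varepsilon\mathbf{x}$ in \eqref{eqn:equivariance_condition}, divide by $\varepsilon$ and let $\varepsilon\to 0$. Since $S$ is linear and $\mathbf{f}=\mathcal{O}(\|\mathbf{x}\|^2)$, both $\varepsilon^{-1}S\mathbf{f}(\varepsilon\mathbf{x})$ and $\varepsilon^{-1}\mathbf{f}(\varepsilon S\mathbf{x})$ vanish in the limit. This leaves $SA\mathbf{x} = AS\mathbf{x}$ for all $\mathbf{x}$, i.e.\ $SA=AS$, and hence also $S\mathbf{f}=\mathbf{f}\circ S$.

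Extend $S$ complex-linearly to $\mathbb{C}^n$. Commutation implies that $S$ maps each generalised eigenspace $\ker(A-\lambda)^n$ into itself, and $\ker(A-\lambda)$ too under semisimplicity. Since $S$ is real, it also maps the eigenspace of $\overline{\lambda}$ into itself, so it preserves the real modal eigenspace $E_j$.

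One subtlety needs care. If $\lambda_j$ is repeated, $E_j$ as literally defined (the span of one chosen eigenvector) need not be $S$-invariant. I would interpret $E_j$ as the full (real) eigenspace of $\lambda_j$, or require $E$ to be a union of full eigenspaces. Then $E$, being a direct sum of such eigenspaces, satisfies $SE\subseteq E$, and invertibility of $S$ with finite dimensionality gives $SE=E$. I would flag this convention explicitly in the proof.

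For (ii), set $\mathcal{W}' := S\mathcal{W}(E)$ and check that it satisfies every property that characterises $\mathcal{W}(E)$ in Theorem~\ref{thm:existence-ssm}.
\begin{itemize}
\item \emph{Invariance.} If $\mathbf{x}(t)$ is a trajectory of \eqref{eqn:general_system}, then $\mathbf{y}=S\mathbf{x}$ satisfies $\dot{\mathbf{y}} = S(A\mathbf{x}+\mathbf{f}(\mathbf{x})) = A\mathbf{y}+\mathbf{f}(\mathbf{y})$, so $S$ maps trajectories to trajectories and $\mathcal{W}'$ is invariant.
\item \emph{Smoothness and dimension.} $S$ is a linear diffeomorphism, so $\mathcal{W}'$ is $C^\infty$ with dimension $\dim E$.
\item \emph{Tangency.} $S\mathbf{0}=\mathbf{0}$ and $T_{\mathbf{0}}\mathcal{W}' = S\,T_{\mathbf{0}}\mathcal{W}(E) = SE = E$ by (i).
\item \emph{Graph representation.} Write $\mathcal{W}(E)$ locally as a graph $\{\mathbf{e}+\mathbf{h}(\mathbf{e})\}$ over $E$ with values in a complementary $A$-invariant spectral subspace $F$ (the sum of the remaining eigenspaces). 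By commutation $SF=F$, so $S(\mathbf{e}+\mathbf{h}(\mathbf{e})) = S\mathbf{e} + S\mathbf{h}(\mathbf{e})$. Hence $\mathcal{W}'$ is the graph of $\mathbf{h}' = S\mathbf{h}\circ S^{-1}|_E$, which again has an integer-powered Taylor expansion.
\end{itemize}
Uniqueness in Theorem~\ref{thm:existence-ssm} then gives $\mathcal{W}'=\mathcal{W}(E)$.

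I expect the main obstacle to be the uniqueness step. Uniqueness holds only among manifolds in the class singled out by the theorem (sufficiently smooth, or with the stated graph form and Taylor expansion), so the argument is only as clean as the class is. If uniqueness there is purely local, near the origin, I would restrict to a $\mathcal{G}$-invariant neighbourhood, e.g.\ a ball in a $\mathcal{G}$-invariant norm, which exists if $\mathcal{G}$ is finite or compact by averaging an inner product. Checking that $\mathcal{W}'$ lies in exactly the class in which uniqueness is asserted is the point I would verify most carefully, together with the repeated-eigenvalue convention in (i).
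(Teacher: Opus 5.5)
Your proposal is correct and follows essentially the same route as the paper. You linearise the equivariance relation at the origin to get $S\mathbf{A}=\mathbf{A}S$ and deduce $SE=E$. You then show that $S\mathcal{W}(E)$ is a smooth, flow-invariant manifold through $\mathbf{0}$ tangent to $E$, so that uniqueness in Theorem~\ref{thm:existence-ssm} forces $S\mathcal{W}(E)=\mathcal{W}(E)$.

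Your repeated-eigenvalue caveat is a genuine refinement that the paper's one-line justification of (i) skips. For example, with $\mathbf{A}=-\mathbf{I}_2$, $\mathbf{f}(\mathbf{x})=-\|\mathbf{x}\|^2\mathbf{x}$ and a rotation group, the span of a single eigenvector and its SSM are both rotated away. Reading $E_j$ as the full real eigenspace of $\lambda_j$, as you propose, is exactly what is needed. Your extra checks on the graph form and on the local nature of uniqueness are sound additions.
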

\begin{proof} Let $S\in \mathcal{G}$ be a linear symmetry of the system. Let us differentiate the equivariance condition \eqref{eqn:equivariance_condition} with respect to $\mathbf{x}$ at $\mathbf{x}=\mathbf{0}$ to obtain (since $\mathbf{f}(\mathbf{0}) = \mathbf{0}$ and $D\mathbf{f}(\mathbf{0}) = \mathbf{0}$):
\begin{align*}
  S\mathbf{A} = \mathbf{A}S.
\end{align*}
Subtracting from \eqref{eqn:equivariance_condition} implies that for any $\mathbf{x} \in \mathbb{R}^n$ we also have
  \begin{align*}
 S \mathbf{f}(\mathbf{x}) = \mathbf{f}(S\mathbf{x}).
  \end{align*}
  This implies that $S$ commutes with the linear flow generated by $\mathbf{A}$, which in turn implies (together with invertibility of $S$) that $SE = E$ for any spectral subspace $E$ of $\mathbf{A}$.
  
  It remains to show that the unique smoothest SSM $\mathcal{W}(E)$ is also invariant under the action of $S$. Let $\varphi_{t}:\mathbb{R}^n \to \mathbb{R}^n$ be the flow map of the system \eqref{eqn:general_system}. Since the system is equivariant with respect to $\mathcal{G}$, we have that for all $S \in \mathcal{G}$ and $\mathbf{x} \in \mathbb{R}^n$,
\begin{align*}
  \frac{d}{dt} S \varphi_{t}(\mathbf{x}) = S (\mathbf{A}\mathbf{x}+\mathbf{f}(\mathbf{x}))=\mathbf{A}S\mathbf{x}+\mathbf{f}(S\mathbf{x}) = \frac{d}{dt} \varphi_{t}(S\mathbf{x}).
\end{align*}
Thus, by smoothness of \eqref{eqn:general_system} and uniqueness of solutions, the flow-map is $S$-equivariant, i.e. $S\varphi_{t}(\mathbf{x})=\varphi_t(S\mathbf{x})$. Let us now consider $S(\mathcal{W}(E))$. Since $S$ is linear and invertible, this is $\mathcal{C}^{\infty}$-manifold with the following properties:
\begin{enumerate}[label=(\roman*)]
  \item $\mathbf{0} = S\mathbf{0} \in S\mathcal{W}(E)$, since $\mathbf{0}\in\mathcal{W}(E)$;
  \item $T_{\mathbf{0}}S\mathcal{W}(E) = S\,T_{\mathbf{0}}\mathcal{W}(E) = SE = E$, since $T_{\mathbf{0}}\mathcal{W}(E) = E$ by Theorem~\ref{thm:existence-ssm} and $SE = E$ by the first part of this proposition;
  \item for any $\mathbf{y}=S\mathbf{x}\in S\mathcal{W}(E)$ with $\mathbf{x}\in\mathcal{W}(E)$,
  \begin{align*}
    \varphi_t(\mathbf{y}) = \varphi_t(S\mathbf{x}) = S\varphi_t(\mathbf{x})\in S\mathcal{W}(E),
  \end{align*}
  thus $S\mathcal{W}(E)$ is invariant under the flow $\varphi_t$.
\end{enumerate}
By uniqueness in Theorem~\ref{thm:existence-ssm} we thus must have $S\mathcal{W}(E) = \mathcal{W}(E)$, which completes the proof.
\end{proof}

\section{Computing spectral submanifolds and reduced dynamics on SSMs}\label{sec:computing_ssms} The central pillar of SSM-based reduced order modelling, as first introduced by \cite{jain2022compute}, is to find a parametrisation of the SSM $\mathcal{W}(E)$ as a graph over the spectral subspace $E$ and to then obtain the reduced dynamics on $\mathcal{W}(E)$ by restriction. The standard construction of SSM-based reduced order models \cite{jain2022compute} takes the parametrisation of the SSM along the orthogonal complement $E^\perp$. We will see that in the equivariant setting it is advantageous to allow the construction of this parametrisation along a more general complement of $E$ that is not necessarily orthogonal. Thus, let us begin by fixing a complement $E^c$ of $E$, i.e. a subspace such that $\mathbb{R}^n = E \oplus E^c$. Note this choice of $E^c$ is not unique, and we will discuss two natural choices for $E^c$ in the equivariant setting in \S\ref{sec:equivariant_ssm_reduction} before fixing the choice $E^c$ to the most efficient choice for the rest of the manuscript. A similar ``oblique projection'' construction was considered in the context of strongly non-normal $\mathbf{A}$ in \cite{bettini2025data}, however the use of this in the equivariant setting has not previously been explored. Throughout this section we will focus on compact symmetry groups $\mathcal{G}$.

Let $d=\mathrm{dim}(\mathcal{W}(E))=\mathrm{dim}(E)$. We begin by parametrising $E$ and $E^c$ with a suitable basis $\mathbf{U}_1=\left(\mathbf{u}_1 \ldots \mathbf{u}_d\right)\in \mathbb{R}^{n \times d}$ and $\mathbf{U}_2=\left(\mathbf{u}_{d+1} \ldots \mathbf{u}_n\right)\in \mathbb{R}^{n \times (n-d)}$ respectively, such that $[\mathbf{U}_1 \mid \mathbf{U}_2]\in\mathbb{R}^{n\times n}$ is invertible. The inverse then defines the dual basis $\mathbf{V}_1 \in\mathbb{R}^{n \times d},\mathbf{V}_2 \in\mathbb{R}^{n \times (n-d)}$ given by
\begin{align*}
  \begin{pmatrix} \mathbf{V}_1^\top \\ \mathbf{V}_2^\top \end{pmatrix} = [\mathbf{U}_1 \mid \mathbf{U}_2]^{-1},
\end{align*}
which satisfies $\mathbf{V}_1^\top \mathbf{U}_1 = \mathbf{I}_d$, $\mathbf{V}_2^\top \mathbf{U}_2 = \mathbf{I}_{n-d}$, $\mathbf{V}_1^\top \mathbf{U}_2 = 0$ and $\mathbf{V}_2^\top \mathbf{U}_1 = 0$. This means that the projection $\pi_{E}$ onto $E$ along $E^c$ is given by $\pi_{E} = \mathbf{U}_1 \mathbf{V}_1^\top$, while the projection $\pi_{E^c}$ onto $E^c$ along $E$ is given by $\pi_{E^c} = \mathbf{U}_2 \mathbf{V}_2^\top$.
\begin{remark}
  In the case $E^c = E^\perp$ the primal basis $\mathbf{U}_1$ can be chosen as orthonormal, in which case the dual basis is in fact the same as the original basis, i.e. $\mathbf{V}_1 = \mathbf{U}_1$ and $\mathbf{V}_2 = \mathbf{U}_2$, since the columns of $[\mathbf{U}_1 \mid \mathbf{U}_2]$ are orthonormal, thus reducing what follows to the standard construction of SSM-based reduced order models as in \cite{jain2022compute}.
\end{remark}

The reduced coordinates $\boldsymbol{\eta}$ on $E$ can then be obtained from a given point $\mathbf{y}\in\mathcal{W}(E)$ by projection along $E^c$ (cf. Figure~\ref{fig:graph_parametrisation}), i.e.
  $\boldsymbol{\eta} = \mathbf{V}_1^\top \mathbf{y}$. By Theorem~\ref{thm:existence-ssm} we can then, in a neighbourhood of $\mathbf{0}$, find a parametrisation of $\mathcal{W}(E)$ as a graph over $E$ with integer-powered Taylor expansion, i.e.\ (for a fixed choice of $E^c$) there exists a unique function $\mathbf{h}: E \rightarrow \mathbb{R}^n$ such that $\mathbf{h}(\mathbf{0}) = \mathbf{0}$, $\mathbf{V}_1^\top \mathbf{h}(\boldsymbol{\eta}) = \mathbf{0}$ for all $\boldsymbol{\eta} \in \mathbb{R}^d$, and $D_{\boldsymbol{\eta}}\mathbf{h}(\mathbf{0}) = \mathbf{0}$ such that
\begin{align}\label{eqn:parametrisation_WE}
  \mathbf{y} = \mathbf{U}_1\boldsymbol{\eta} + \mathbf{h}(\boldsymbol{\eta}).
\end{align}
The reduced dynamics on $E$ are then given by 
\begin{align}\label{eqn:reduced_dynamics}
\dot{\boldsymbol{\eta}}=\mathbf{r}(\boldsymbol{\eta}) = \mathbf{V}_1^\top \left( \mathbf{A}\mathbf{U}_1\boldsymbol{\eta} + \mathbf{A}\mathbf{h}(\boldsymbol{\eta}) + \mathbf{f}(\mathbf{U}_1\boldsymbol{\eta} + \mathbf{h}(\boldsymbol{\eta})) \right).
\end{align}\vspace{-0.5cm}
\begin{figure}[h!]
  \centering
  \includegraphics[width=0.7\textwidth]{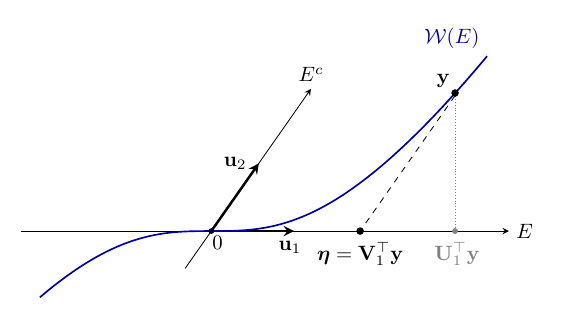}
  \caption{Projection onto $E$ along $E^c$.}
  \label{fig:graph_parametrisation}
\end{figure}
In principle, if $\mathbf{f}$ is analytic, then $\mathbf{h}$ and $\mathbf{r}$ can be written as a convergent series expansion in $\boldsymbol{\eta}$ (cf. \cite{haller2023nonlinear}). However, generically, the expansion of the vector field $\mathbf{r}$ contains coefficients that are messy and partly redundant artefacts of the particular choice of representation of the system \eqref{eqn:general_system}. Thus \cite{jain2022compute,SSMLearnpaper22} advocate for the representation of the reduced dynamics in an extended Poincar\'e normal form style (cf. \cite{poincare1892methodes1,arnold1983geometrical,guckenheimer1983nonlinear}) to promote model sparsity and help develop efficient and robust algorithms for computing the reduced dynamics from data. For this we seek a nonlinear change of coordinates $\boldsymbol{\eta}=\mathbf{t}(\mathbf{z})$ such that the transformed vector field $\dot{\mathbf{z}} = \mathbf{n}(\mathbf{z})$ has a diagonal linear part and as few nonlinear terms in its Taylor expansion as possible. The ``extended'' part of the normal form refers to the fact that we do not seek to remove all non-resonant terms from \eqref{eqn:reduced_dynamics}. Following \cite{SSMLearnpaper22} we do not remove the non-resonant terms that would lead to small denominators in the Taylor expansion of $\mathbf{t}$. This means effectively that near-resonant terms are assigned to the reduced dynamics and all other terms to the change of coordinates. Specifically, we let $\mathbf{B}:= \mathbf{V}_1^\top\mathbf{A}\mathbf{U}_1 \in \mathbb{R}^{d\times d}$, i.e. $\mathbf{B}$ is the linearisation of the reduced dynamics in $E$ around $\mathbf{0}$, and we seek a change of coordinates $\mathbf{t}:\mathbb{C}^d \rightarrow \mathbb{C}^d$ such that the transformed vector field $\mathbf{n}:\mathbb{C}^d \rightarrow \mathbb{C}^d,$ given by
\begin{align}\label{eqn:extended_normal_form}
  \dot{\mathbf{z}} = \mathbf{n}(\mathbf{z}), \quad \boldsymbol{\eta} = \mathbf{t}(\mathbf{z}),
\end{align}
has a diagonal linear part and as few nonlinear terms in its Taylor expansion as possible. Since the reduced coordinates $\boldsymbol{\eta}$ are real, we require that the change of coordinates $\mathbf{t}$ maps the invariant subspace $\{\mathbf{z} : z_{k} = \bar{z}_{k'} \text{ for conjugate pairs } (k,k')\}$ into $\mathbb{R}^d$ and that $\mathbf{n}$ is real on this invariant subspace. We diagonalise
\begin{align}\label{eqn:def_B_and_Lambda}
  \mathbf{B} = \mathbf{W}\boldsymbol{\Lambda}\mathbf{W}^{-1}, \quad \boldsymbol{\Lambda} = \operatorname{diag}(\lambda_1,\dots,\lambda_d),
\end{align}
where the eigenvalues and the columns of $\mathbf{W}\in\mathbb{C}^{d\times d}$ come in complex conjugate pairs (and the eigenvalues correspond to the eigenvalues of $\mathbf{A}$ associated with the spectral subspace $E$). Our new coordinates and normal form are then sought as expansions
\begin{align*}
  \mathbf{t}(\mathbf{z}) = \mathbf{W}\mathbf{z}
    + \sum_{|\mathbf{m}|\geq 2}\mathbf{t}_{\mathbf{m}}\mathbf{z}^{\mathbf{m}},
  \qquad
  \mathbf{n}(\mathbf{z}) = \boldsymbol{\Lambda}\mathbf{z}
    + \sum_{|\mathbf{m}|\geq 2}\mathbf{n}_{\mathbf{m}}\mathbf{z}^{\mathbf{m}},
\end{align*}
subject to the change of coordinates condition between the reduced dynamics and the normal form
\begin{align}\label{eqn:conjugacy}
  D\mathbf{t}(\mathbf{z})\,\mathbf{n}(\mathbf{z}) = \mathbf{r}(\mathbf{t}(\mathbf{z})).
\end{align}
Solving \eqref{eqn:conjugacy} order by order in $\mathbf{z}$ yields the homological equations
\begin{align}\label{eqn:homological}
  \left(\mathbf{m}\cdot\boldsymbol{\lambda} - \lambda_j\right)
  \widetilde{t}_{j,\mathbf{m}} + n_{j,\mathbf{m}} = g_{j,\mathbf{m}},
\end{align}
where $\widetilde{\mathbf{t}} := \mathbf{W}^{-1}\mathbf{t}$ and $g_{j,\mathbf{m}}$ collects terms arising from \eqref{eqn:reduced_dynamics} and from lower-order coefficients. At each order, \eqref{eqn:homological} is a single scalar equation in the two unknowns $\widetilde{t}_{j,\mathbf{m}}$ and $n_{j,\mathbf{m}}$, so every monomial may be assigned to exactly one of the two maps: removing it from the reduced dynamics ($n_{j,\mathbf{m}} = 0$, $\widetilde{t}_{j,\mathbf{m}} = g_{j,\mathbf{m}}/(\mathbf{m}\cdot\boldsymbol{\lambda}-\lambda_j)$) is possible whenever $\mathbf{m}\cdot\boldsymbol{\lambda} \neq \lambda_j$, but produces a small denominator whenever the system is near an inner resonance. The extended normal form formulation in \cite{SSMLearnpaper22} therefore retains these terms in the reduced dynamics by fixing a threshold $\delta > 0$ and defining the near-resonant index set
\begin{align*}
  \mathcal{I}_\delta := \left\{ (j,\mathbf{m}) :
  |\mathrm{Im}(\mathbf{m}\cdot\boldsymbol{\lambda} - \lambda_j)|  \leq \delta
  \right\}.
\end{align*}
We then set 
\begin{align}\label{eqn:extended_normal_form_solution}
(\widetilde{t}_{j,\mathbf{m}}, n_{j,\mathbf{m}}) = \begin{cases}
  (g_{j,\mathbf{m}}/(\mathbf{m}\cdot\boldsymbol{\lambda}-\lambda_j), 0), & (j,\mathbf{m}) \notin \mathcal{I}_\delta,\\
  (0, g_{j,\mathbf{m}}), & (j,\mathbf{m}) \in \mathcal{I}_\delta.
\end{cases}
\end{align}
With this convention, the coefficients of $\mathbf{t}$ and $\mathbf{n}$ are uniquely determined order by order. The motivation for considering only the imaginary parts of the eigenvalues is to make sure that the normal form depends continuously on the damping (i.e. the real parts of the eigenvalues), reducing to the classical normal form of the limiting conservative system as $\operatorname{Re}\boldsymbol{\lambda}\to\mathbf{0}$, and follows the convention established in \cite{SSMLearnpaper22}.

We will discuss how this extended normal form can be computed from data in \S\ref{sec:data_driven_ssm_reduction_with_equivariance} and how equivariance can be preserved in \S\ref{sec:equivariant_ssm_reduction}.

\subsection{Equivariant SSM reduction}\label{sec:equivariant_ssm_reduction}
In order to preserve equivariance in the reduced model, it is crucial to choose the complement $E^c$ in a way that is compatible with the symmetry group $\mathcal{G}$. In particular, we need to ensure that $E^c$ is also invariant under the action of $\mathcal{G}$, i.e. for all $S \in \mathcal{G}$ we have $SE^c = E^c$. There are two natural ways to achieve this.

\begin{enumerate}[label=(\Roman*)]
  \item \textbf{Spectral complement:} We pick $E^c$ to be the direct sum of the remaining spectral subspaces of $\mathbf{A}$, i.e. $E^c = \bigoplus_{j \notin J} E_j$, where $E = \bigoplus_{j \in J} E_j$. Invariance of $E^c$ under the action of $\mathcal{G}$ is then guaranteed by Theorem~\ref{thm:equivariance-ssm}. However, for large $n$, it is expensive to compute the spectral complement $E^c$ since it requires computing all eigenvalues and eigenvectors of $\mathbf{A}$, which is not feasible for large-scale systems. Thus, in practice, we prefer to use the second option below, which is computationally cheaper and still guarantees equivariance of the reduced model.
  \item \textbf{Orthogonal complement with respect to the equivariant inner product:} For a compact group $\mathcal{G}$, an alternative choice is to define $E^c$ as the orthogonal complement of $E$ with respect to the \textit{equivariant inner product} $\langle \cdot, \cdot \rangle_\mathcal{G}$ as defined in Definition~\ref{def:equivariant_inner_product}.
  Every $S \in \mathcal{G}$ acts as an isometry of $\langle \cdot, \cdot \rangle_\mathcal{G}$, so the orthogonal complement of the $\mathcal{G}$-invariant subspace $E$ is again $\mathcal{G}$-invariant.
\end{enumerate}
  \begin{definition}\label{def:equivariant_inner_product}
    Let $\mathcal{G}$ be a compact linear symmetry group acting on $\mathbb{R}^n$,
    equipped with its normalised Haar measure $\mu$ (so that $\mu(\mathcal{G}) = 1$).
    The \textit{equivariant inner product} $\langle \cdot, \cdot \rangle_\mathcal{G}$
    is defined as
    \begin{align*}
      \langle \mathbf{v}, \mathbf{w} \rangle_{\mathcal{G}}
      := \int_{\mathcal{G}} \langle S\mathbf{v}, S\mathbf{w} \rangle \, \dd\mu(S),
    \end{align*}
    where $\langle \cdot, \cdot \rangle$ is the standard inner product on
    $\mathbb{R}^n$. For a finite group, the Haar measure is the normalised counting
    measure and this reduces to the average
    $\langle \mathbf{v}, \mathbf{w} \rangle_{\mathcal{G}}
    = \frac{1}{|\mathcal{G}|} \sum_{S \in \mathcal{G}} \langle S\mathbf{v}, S\mathbf{w} \rangle$.
  \end{definition}
With any $\mathcal{G}$-invariant choice of $E^c$, it turns out that the reduced dynamics on the SSM $\mathcal{W}(E)$ also inherits an equivariance property from the full system, which can be used to significantly simplify the computations of the parametrisation map $\mathbf{h}$ and the reduced vector field $\mathbf{r}$ as well as to improve the model fidelity of the reduced model. For this we introduce the standard restriction $S|_E:= \mathbf{V}_1^\top S \mathbf{U}_1$ for $S \in \mathcal{G}$ and let:
\begin{align*}
  \mathcal{G}|_E := \{ S|_E, S \in \mathcal{G} \}.
\end{align*}
\begin{lemma}\label{lem:restricted_group_action}
  The restriction $\mathcal{G}|_E$ is a well-defined linear symmetry group acting on $\mathbb{R}^d\cong E$, $\mathcal{G}|_E \subset GL(d,\mathbb{R})$.
\end{lemma}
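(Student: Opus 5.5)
The plan is to show that $S \mapsto S|_E = \mathbf{V}_1^\top S \mathbf{U}_1$ is a group homomorphism from $\mathcal{G}$ into $GL(d,\mathbb{R})$. The image $\mathcal{G}|_E$ is then automatically a subgroup of $GL(d,\mathbb{R})$, namely a linear group acting on $\mathbb{R}^d$. The key input is Theorem~\ref{thm:equivariance-ssm}(i), which gives $SE = E$ for all $S\in\mathcal{G}$. We also use the standing assumption of \S\ref{sec:equivariant_ssm_reduction} that the complement satisfies $SE^c = E^c$; this holds for either choice (I) or (II).

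\textbf{Step 1: a block-diagonal form.} Write $S$ in the basis $[\mathbf{U}_1\mid\mathbf{U}_2]$. Since $SE\subseteq E = \operatorname{range}\mathbf{U}_1$, there is a matrix $\mathbf{R}_S\in\mathbb{R}^{d\times d}$ with $S\mathbf{U}_1 = \mathbf{U}_1\mathbf{R}_S$. Applying $\mathbf{V}_1^\top$ and using $\mathbf{V}_1^\top\mathbf{U}_1=\mathbf{I}_d$ gives $\mathbf{R}_S = S|_E$. Likewise $S\mathbf{U}_2 = \mathbf{U}_2\mathbf{Q}_S$ for some $\mathbf{Q}_S$, so $\mathbf{V}_1^\top S\mathbf{U}_2 = 0$. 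Together these say that $S$ commutes with the projection $\pi_E=\mathbf{U}_1\mathbf{V}_1^\top$. We only need the relation $S\mathbf{U}_1 = \mathbf{U}_1 S|_E$ together with $\mathbf{V}_1^\top\mathbf{U}_1=\mathbf{I}_d$.

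\textbf{Step 2: the homomorphism property.} For $S,T\in\mathcal{G}$ we compute
\begin{align*}
(ST)|_E = \mathbf{V}_1^\top S T\mathbf{U}_1 = \mathbf{V}_1^\top S\mathbf{U}_1\, T|_E = \mathbf{V}_1^\top\mathbf{U}_1\, S|_E\, T|_E = S|_E\,T|_E .
\end{align*}
Clearly $\mathbf{I}|_E = \mathbf{V}_1^\top\mathbf{U}_1=\mathbf{I}_d$. It follows that $S|_E\,(S^{-1})|_E = \mathbf{I}_d$, so each $S|_E$ is invertible and lies in $GL(d,\mathbb{R})$. Closure under products and inverses then shows that $\mathcal{G}|_E$ is a group.

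\textbf{Step 3: well-definedness.} The main subtlety is what ``well-defined'' should mean here. First, the restriction is intrinsic: $S|_E$ is exactly the matrix of the linear map $S\vert_E : E\to E$ in the basis $\mathbf{U}_1$. This follows from $S\mathbf{U}_1=\mathbf{U}_1 S|_E$. Consequently $S|_E$ depends only on $E$ and $\mathbf{U}_1$, and not on $E^c$. The $\mathcal{G}$-invariance of $E^c$ is therefore not even needed for this lemma; it matters only for the later results on $\mathbf{h}$ and $\mathbf{r}$. Second, a change of basis $\mathbf{U}_1\mapsto\mathbf{U}_1\mathbf{P}$ conjugates the whole group by $\mathbf{P}$, so the group action is well defined up to isomorphism. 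Third, the identification $\mathbb{R}^d\cong E$ via $\boldsymbol{\eta}\mapsto\mathbf{U}_1\boldsymbol{\eta}$ intertwines $S|_E$ with $S$ acting on $E$. I expect the only real care to be needed in Step 1, in making sure that $SE\subseteq E$ really comes from Theorem~\ref{thm:equivariance-ssm}(i). The lemma itself does not claim the map $S\mapsto S|_E$ is injective, and it need not be: in Example~\ref{ex:chain_of_oscillators} it is, but in general it may have a kernel.
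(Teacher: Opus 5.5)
Your proposal is correct and follows essentially the same route as the paper: both use $SE=E$ from Theorem~\ref{thm:equivariance-ssm}(i) to get $\mathbf{U}_1\mathbf{V}_1^\top S\mathbf{U}_1 = S\mathbf{U}_1$ (equivalently $S\mathbf{U}_1=\mathbf{U}_1 S|_E$), and from this deduce the homomorphism property $(ST)|_E=S|_E\,T|_E$. Your explicit check that $\mathbf{I}|_E=\mathbf{I}_d$, so that each $S|_E$ lies in $GL(d,\mathbb{R})$, and your remark that $\mathcal{G}$-invariance of $E^c$ is not needed for this lemma are correct details that the paper leaves implicit.
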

\begin{proof} Since $E$ is invariant under the action of $\mathcal{G}$ (Theorem~\ref{thm:equivariance-ssm}), and $\mathbf{U}_1\mathbf{V}_1^\top$ is the projection onto $E$ (cf. Figure~\ref{fig:graph_parametrisation}) we have, for any $S_1, S_2 \in \mathcal{G}$,
\begin{align*}
  S_1|_E S_2|_E = \mathbf{V}_1^\top S_1 \mathbf{U}_1 \mathbf{V}_1^\top S_2 \mathbf{U}_1 = \mathbf{V}_1^\top S_1 S_2 \mathbf{U}_1  = (S_1 S_2)|_E,
\end{align*}
where the second equality follows from the fact that the columns of $S_2\mathbf{U}_1$ are in $S_2E=E$, where $\mathbf{U}_1\mathbf{V}_1^\top = \pi_E$ acts as the identity. Thus the restriction is closed under group multiplication and hence a group homomorphism. The result follows.
\end{proof}

We can now show that the parametrisation map $\mathbf{h}$ and the reduced vector field $\mathbf{r}$ are also equivariant with respect to this restricted group action.
\begin{proposition}\label{prop:equivariance_reduced_dynamics} If \eqref{eqn:general_system} is equivariant with respect to a compact linear symmetry group $\mathcal{G}$ and $E$ is a spectral subspace of $\mathbf{A}$, then, in a neighbourhood of the origin $\mathbf{0}$, the parametrisation map $\mathbf{h}: E \rightarrow \mathbb{R}^n$ and the reduced vector field $\mathbf{r}: E \rightarrow E$ are equivariant with respect to the restricted group action $\mathcal{G}|_E$, in the sense that
\begin{align*}
  \mathbf{h}(S|_E \boldsymbol{\eta}) = S \mathbf{h}(\boldsymbol{\eta}), \qquad \mathbf{r}(S|_E \boldsymbol{\eta}) = S|_E \mathbf{r}(\boldsymbol{\eta}),
\end{align*}for all $S \in \mathcal{G}$ and $\boldsymbol{\eta} \in E$.
\end{proposition}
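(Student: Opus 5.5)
The plan is to use Theorem~\ref{thm:equivariance-ssm}, the $\mathcal{G}$-invariance of both $E$ and $E^c$, and the uniqueness of the graph representation \eqref{eqn:parametrisation_WE}. I will work throughout with a fixed $\mathcal{G}$-invariant complement $E^c$, as assumed in this subsection. The first step is to record how $S$ interacts with the projections. Since $SE=E$ and $SE^c=E^c$, $S$ commutes with $\pi_E=\mathbf{U}_1\mathbf{V}_1^\top$ and with $\pi_{E^c}=\mathbf{U}_2\mathbf{V}_2^\top$. Consequently $S\mathbf{U}_1=\mathbf{U}_1\mathbf{V}_1^\top S\mathbf{U}_1=\mathbf{U}_1 S|_E$ and $\mathbf{V}_1^\top S=\mathbf{V}_1^\top S\,\pi_E+\mathbf{V}_1^\top S\,\pi_{E^c}$. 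The last term vanishes because $S\pi_{E^c}$ maps into $E^c=\ker\mathbf{V}_1^\top$. This gives the intertwining identity
\begin{align*}
  \mathbf{V}_1^\top S = S|_E\,\mathbf{V}_1^\top .
\end{align*}
This identity is the key algebraic fact, and it is exactly the point where invariance of $E^c$, and not merely of $E$, is needed.

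The second step proves equivariance of $\mathbf{h}$. Take $\boldsymbol{\eta}$ small and set $\mathbf{y}=\mathbf{U}_1\boldsymbol{\eta}+\mathbf{h}(\boldsymbol{\eta})\in\mathcal{W}(E)$. By Theorem~\ref{thm:equivariance-ssm}(ii), $S\mathbf{y}\in\mathcal{W}(E)$. We can write $S\mathbf{y}=\mathbf{U}_1(S|_E\boldsymbol{\eta})+S\mathbf{h}(\boldsymbol{\eta})$, and by the intertwining identity $\mathbf{V}_1^\top S\mathbf{h}(\boldsymbol{\eta})=S|_E\mathbf{V}_1^\top\mathbf{h}(\boldsymbol{\eta})=\mathbf{0}$. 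Thus $S\mathbf{y}$ decomposes into an $E$-component with coordinate $S|_E\boldsymbol{\eta}$ and an $E^c$-component $S\mathbf{h}(\boldsymbol{\eta})$. Near the origin, $\mathcal{W}(E)$ is a graph over $E$ along $E^c$, so each coordinate has exactly one point of the manifold above it. Hence $S\mathbf{h}(\boldsymbol{\eta})=\mathbf{h}(S|_E\boldsymbol{\eta})$.

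The main subtlety is the neighbourhood. Both $\boldsymbol{\eta}$ and $S|_E\boldsymbol{\eta}$ must lie in the domain where the graph representation holds, and $S\mathbf{y}$ must lie in the local piece of $\mathcal{W}(E)$. For this I would use compactness of $\mathcal{G}$. Replacing the domain by the intersection of its $\mathcal{G}|_E$-translates, or simply choosing a ball in the $\mathcal{G}$-invariant norm from Definition~\ref{def:equivariant_inner_product} restricted to $E$, gives a $\mathcal{G}|_E$-invariant neighbourhood on which both statements hold. For the inclusion $S\mathcal{W}(E)\subset\mathcal{W}(E)$ applied to the local piece, I would make the same invariant choice of neighbourhood, which is possible because $\mathcal{G}$ is bounded.

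The third step proves equivariance of $\mathbf{r}$ by direct computation from \eqref{eqn:reduced_dynamics}. Write $\mathbf{F}(\mathbf{x})=\mathbf{A}\mathbf{x}+\mathbf{f}(\mathbf{x})$, which satisfies $\mathbf{F}(S\mathbf{x})=S\mathbf{F}(\mathbf{x})$. Step two gives $\mathbf{U}_1 S|_E\boldsymbol{\eta}+\mathbf{h}(S|_E\boldsymbol{\eta})=S(\mathbf{U}_1\boldsymbol{\eta}+\mathbf{h}(\boldsymbol{\eta}))$. Therefore
\begin{align*}
  \mathbf{r}(S|_E\boldsymbol{\eta})=\mathbf{V}_1^\top \mathbf{F}\bigl(S(\mathbf{U}_1\boldsymbol{\eta}+\mathbf{h}(\boldsymbol{\eta}))\bigr)=\mathbf{V}_1^\top S\,\mathbf{F}(\mathbf{U}_1\boldsymbol{\eta}+\mathbf{h}(\boldsymbol{\eta}))=S|_E\,\mathbf{r}(\boldsymbol{\eta}),
\end{align*}
where the last equality uses the intertwining identity once more.
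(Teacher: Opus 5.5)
Your proposal is correct and follows essentially the same route as the paper: the $\mathcal{G}$-invariance of $E$, $E^c$ and $\mathcal{W}(E)$, together with uniqueness of the local graph representation, gives $S\mathbf{h}(\boldsymbol{\eta})=\mathbf{h}(S|_E\boldsymbol{\eta})$; direct substitution into \eqref{eqn:reduced_dynamics} then gives equivariance of $\mathbf{r}$, with compactness of $\mathcal{G}$ supplying an invariant neighbourhood. Your one refinement is to establish the intertwining identity $\mathbf{V}_1^\top S = S|_E\mathbf{V}_1^\top$ on all of $\mathbb{R}^n$ up front; the paper verifies this only for points of $\mathcal{W}(E)$ but implicitly uses it in its final step for $\mathbf{r}$, where $\mathbf{V}_1^\top S$ acts on the vector-field value rather than on a point of the manifold.
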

\begin{proof}
Let $S \in \mathcal{G}$ be a linear symmetry of the system, and denote by $\Omega\subset\mathbb{R}^n$ the neighbourhood of $\mathbf{0}$ in which the parametrisation \eqref{eqn:parametrisation_WE} is valid. Note, in the following we will assume that $S\mathbf{y} \in \Omega$ for $\mathbf{y} \in \mathcal{W}(E) \cap \Omega$. This can be achieved by shrinking $\Omega$ to $\bigcap_{S\in\mathcal{G}}S\Omega$, which is non-empty by compactness of $\mathcal{G}$. Then, by virtue of $E$ and $E^c$ being $\mathcal{G}$-invariant, we have for any $\mathbf{y} \in \mathcal{W}(E) \cap \Omega$,
\begin{align*}
  S\vert_E\mathbf{V}_1^\top \mathbf{y} = \mathbf{V}_1^\top S \mathbf{U}_1\mathbf{V}_1^\top \mathbf{y} &= \mathbf{V}_1^\top S \mathbf{U}_1\mathbf{V}_1^\top (\mathbf{U}_1\boldsymbol{\eta} + \mathbf{h}(\boldsymbol{\eta}))\\
  & = \mathbf{V}_1^\top S\mathbf{U}_1\boldsymbol{\eta}= \mathbf{V}_1^\top S(\mathbf{U}_1\boldsymbol{\eta}+\mathbf{h}(\boldsymbol{\eta})) =\mathbf{V}_1^\top S\mathbf{y},
  \end{align*}
where we have used the fact that $S\mathrm{Im}(\mathbf{h}) \subset SE^{c}=E^c\subset\mathrm{ker}(\mathbf{V}_1^\top)$ and wrote $\boldsymbol{\eta} = \mathbf{V}_1^\top \mathbf{y}$ for the last equality. Thus $S|_E \mathbf{V}_1^\top\mathbf{y}$ is the unique reduced coordinate of $S\mathbf{y}$. Lifting this back to $\mathcal{W}(E)$ (using the fact that $\mathcal{W}(E)$ is $S$-invariant by Theorem~\ref{thm:equivariance-ssm}) we thus have
\begin{align*}
  S\mathbf{U}_1\boldsymbol{\eta} + S \mathbf{h}(\boldsymbol{\eta}) = S\mathbf{y}=\mathbf{U}_1S|_E \boldsymbol{\eta}+\mathbf{h}(S|_E \boldsymbol{\eta}).
\end{align*}
Applying the dual bases $\mathbf{V}_1^\top$ and $\mathbf{V}_2^\top$ respectively to this equation (i.e. the projection onto coordinates of $E, E^c$) we have (noting that $\mathbf{V}_1^\top \mathbf{U}_2 = 0$, $\mathbf{V}_2^\top \mathbf{U}_1 = 0$ and that $\mathrm{Im}(\mathbf{U}_1)=E,\mathrm{Im}(\mathbf{h})\subset E^c$ with $E,E^c$ invariant under the action of $S$):
\begin{align*}
  S\mathbf{U}_1\boldsymbol{\eta} &= \pi_E (S\mathbf{U}_1\boldsymbol{\eta} + S \mathbf{h}(\boldsymbol{\eta})) =\mathbf{U}_1 \mathbf{V}_1^\top \left(\mathbf{U}_1S|_E \boldsymbol{\eta} + \mathbf{h}(S|_E \boldsymbol{\eta}) \right) =\mathbf{U}_1 S|_E \boldsymbol{\eta},\\
S \mathbf{h}(\boldsymbol{\eta})&=  \pi_{E^c}\left( S\mathbf{U}_1\boldsymbol{\eta} + S \mathbf{h}(\boldsymbol{\eta}) \right) = \mathbf{U}_2\mathbf{V}_2^\top \left(\mathbf{U}_1S|_E \boldsymbol{\eta} + \mathbf{h}(S|_E \boldsymbol{\eta}) \right) = \mathbf{h}(S|_E \boldsymbol{\eta}).
\end{align*}
Finally, using this, we have
\begin{align*}
  \mathbf{r}(S|_E \boldsymbol{\eta}) &= \mathbf{V}_1^\top \left( \mathbf{A}\mathbf{U}_1S|_E \boldsymbol{\eta} + \mathbf{A}\mathbf{h}(S|_E \boldsymbol{\eta}) + \mathbf{f}(\mathbf{U}_1S|_E \boldsymbol{\eta} + \mathbf{h}(S|_E \boldsymbol{\eta})) \right)\\
  &=\mathbf{V}_1^\top S\left( \mathbf{A}\mathbf{U}_1 \boldsymbol{\eta} + \mathbf{A}\mathbf{h}(\boldsymbol{\eta}) + \mathbf{f}(\mathbf{U}_1 \boldsymbol{\eta} + \mathbf{h}(\boldsymbol{\eta})) \right)\\
  &= S|_E \mathbf{r}(\boldsymbol{\eta}).
\end{align*}
\end{proof}
Having established the properties of the reduced dynamics on the SSM $\mathcal{W}(E)$, we turn to the canonical construction of an equivariant normal form for the reduced dynamics, cf.\ \eqref{eqn:extended_normal_form}. It turns out that the change of variables $\mathbf{f}$ and $\mathbf{n}$, uniquely constructed as per \eqref{eqn:extended_normal_form_solution}, are also equivariant with respect to appropriate restricted group action of $\mathcal{G}$. We will discuss the implications of this equivariance for the data-driven computation of the extended normal form in \S\ref{sec:data_driven_ssm_reduction_with_equivariance}.
\begin{proposition}\label{prop:equivariant_normal_form}
 Suppose \eqref{eqn:general_system} is equivariant with respect to the compact linear symmetry group $\mathcal{G}$. Let $\mathbf{t},\mathbf{n}$ be constructed termwise from \eqref{eqn:extended_normal_form_solution} and, for $S\in\mathcal{G}$, let $\widehat{S}:=\mathbf{W}^{-1}(S|_E)\mathbf{W}$ be the restricted action in the diagonalising coordinates $\mathbf{W}$ defined in \eqref{eqn:def_B_and_Lambda}. Then $\hat{\mathbf{t}}:=\mathbf{W}^{-1}\mathbf{t}$ and $\mathbf{n}$ are both equivariant with respect to the restricted group action of $\mathcal{G}$, i.e.
  \begin{align*}
    \hat{\mathbf{t}}(\widehat{S}\mathbf{z}) = \widehat{S} \hat{\mathbf{t}}(\mathbf{z}), \qquad \mathbf{n}(\widehat{S}\mathbf{z}) = \widehat{S} \mathbf{n}(\mathbf{z}), \qquad \forall S\in\mathcal{G},\ \mathbf{z}\in\mathbb{C}^d.
  \end{align*}
\end{proposition}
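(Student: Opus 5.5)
Our approach is to work entirely in the diagonalising coordinates and show that the operator which assigns coefficients order by order, via \eqref{eqn:extended_normal_form_solution}, commutes with the action of $\widehat{S}$. Write $\hat{\mathbf{r}}(\mathbf{z}) := \mathbf{W}^{-1}\mathbf{r}(\mathbf{W}\mathbf{z})$. By Proposition~\ref{prop:equivariance_reduced_dynamics}, $\hat{\mathbf{r}}(\widehat{S}\mathbf{z}) = \widehat{S}\hat{\mathbf{r}}(\mathbf{z})$. Differentiating at $\mathbf{0}$ gives $\widehat{S}\boldsymbol{\Lambda} = \boldsymbol{\Lambda}\widehat{S}$, so $\widehat{S}$ is block diagonal with respect to the eigenvalue clusters of $\boldsymbol{\Lambda}$. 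That is, $\widehat{S}_{jk}\neq 0$ only if $\lambda_j=\lambda_k$; these are exactly the repeated eigenvalues allowed by the $1{:}1$ exception in Definition~\ref{def:global-nonresonance}. The conjugacy \eqref{eqn:conjugacy} in these coordinates reads
\[
D\hat{\mathbf{t}}(\mathbf{z})\,\mathbf{n}(\mathbf{z}) = \hat{\mathbf{r}}(\mathbf{W}^{-1}\mathbf{t}(\mathbf{z})),
\]
which is to say $D\hat{\mathbf{t}}\,\mathbf{n} = \hat{\mathbf{r}}\circ\hat{\mathbf{t}}$.

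The key step is a uniqueness argument. We define the conjugated pair
\[
\hat{\mathbf{t}}'(\mathbf{z}) := \widehat{S}^{-1}\hat{\mathbf{t}}(\widehat{S}\mathbf{z}), \qquad \mathbf{n}'(\mathbf{z}) := \widehat{S}^{-1}\mathbf{n}(\widehat{S}\mathbf{z}).
\]
The plan is to check that this pair again solves the conjugacy equation, that it has the same linear parts, and that it obeys the same assignment rule \eqref{eqn:extended_normal_form_solution}. Uniqueness, order by order, then forces $\hat{\mathbf{t}}'=\hat{\mathbf{t}}$ and $\mathbf{n}'=\mathbf{n}$, which is the claim.

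The first two properties are routine. For the conjugacy, the chain rule gives
\[
D\hat{\mathbf{t}}'(\mathbf{z})\,\mathbf{n}'(\mathbf{z}) = \widehat{S}^{-1}D\hat{\mathbf{t}}(\widehat{S}\mathbf{z})\,\mathbf{n}(\widehat{S}\mathbf{z}) = \widehat{S}^{-1}\hat{\mathbf{r}}(\hat{\mathbf{t}}(\widehat{S}\mathbf{z})) = \hat{\mathbf{r}}(\hat{\mathbf{t}}'(\mathbf{z})),
\]
where the last equality uses the equivariance of $\hat{\mathbf{r}}$. For the linear parts, the linear part of $\hat{\mathbf{t}}'$ is $\mathbf{I}$, and that of $\mathbf{n}'$ is $\widehat{S}^{-1}\boldsymbol{\Lambda}\widehat{S}=\boldsymbol{\Lambda}$.

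The main obstacle is the third property: showing that $(\hat{\mathbf{t}}',\mathbf{n}')$ respects the sparsity pattern \eqref{eqn:extended_normal_form_solution}. Concretely, we need $\tilde t'_{j,\mathbf{m}}=0$ for $(j,\mathbf{m})\in\mathcal{I}_\delta$ and $n'_{j,\mathbf{m}}=0$ for $(j,\mathbf{m})\notin\mathcal{I}_\delta$. Composition with the linear map $\widehat{S}$ mixes monomials, since $(\widehat{S}\mathbf{z})^{\mathbf{m}}$ expands into monomials $\mathbf{z}^{\mathbf{m}'}$. The observation we will use is that, because $\widehat{S}$ only mixes coordinates $k,k'$ with $\lambda_k=\lambda_{k'}$, every such $\mathbf{m}'$ satisfies $\mathbf{m}'\cdot\boldsymbol{\lambda}=\mathbf{m}\cdot\boldsymbol{\lambda}$. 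Likewise, premultiplying by $\widehat{S}^{-1}$ only mixes output components $j,j'$ with $\lambda_j=\lambda_{j'}$. Hence $\mathbf{m}\cdot\boldsymbol{\lambda}-\lambda_j$ is invariant under both operations, and so is membership in $\mathcal{I}_\delta$. The coefficient spaces
\[
\{\tilde t_{j,\mathbf{m}}: (j,\mathbf{m})\notin\mathcal{I}_\delta\}\quad\text{and}\quad\{n_{j,\mathbf{m}}:(j,\mathbf{m})\in\mathcal{I}_\delta\}
\]
are therefore each preserved by the conjugation $\mathbf{p}\mapsto \widehat{S}^{-1}\mathbf{p}(\widehat{S}\,\cdot)$, and this conjugation also preserves polynomial degree. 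We will verify the spectral block structure of $\widehat{S}$ carefully, since the whole argument rests on it.

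With the three properties in hand, we conclude by induction on $|\mathbf{m}|$. Suppose the two pairs agree up to order $k-1$. Then the right-hand sides $g_{j,\mathbf{m}}$ in \eqref{eqn:homological} coincide at order $k$, and the homological equation together with the assignment rule determines $(\tilde t_{j,\mathbf{m}},n_{j,\mathbf{m}})$ uniquely. So the pairs agree at order $k$ as well. Since $\mathcal{G}$ is a group, this gives equivariance for all $S\in\mathcal{G}$.
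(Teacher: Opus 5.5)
Your proposal is correct and follows essentially the same route as the paper. Your $\hat{\mathbf{t}}'=\widehat{S}^{-1}\hat{\mathbf{t}}(\widehat{S}\,\cdot)$ is just $\mathbf{W}^{-1}$ applied to the paper's $\mathbf{t}'=(S|_E)^{-1}\mathbf{t}(\widehat{S}\,\cdot)$, and you then check the conjugacy, the linear parts, and preservation of the near-resonant support via the eigenvalue-block structure of $\widehat{S}$ forced by $\widehat{S}\boldsymbol{\Lambda}=\boldsymbol{\Lambda}\widehat{S}$, concluding by order-by-order uniqueness. Your explicit induction on $|\mathbf{m}|$ simply spells out the uniqueness step that the paper states more briefly.
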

\begin{remark}
  Note this equivariance of $\hat{\mathbf{t}}$ is equivalent to $\mathbf{t}(\widehat{S}\mathbf{z}) = S|_E \mathbf{t}(\mathbf{z})$ for all $S\in\mathcal{G},\ \mathbf{z}\in\mathbb{C}^d$.
\end{remark}
\begin{proof} We will exploit uniqueness of the construction of $\mathbf{t},\mathbf{n}$ as described in \S\ref{sec:computing_ssms}. Let $S\in\mathcal{G}$ and define
  \begin{align*}
    \mathbf{t}'(\mathbf{z}):=S|_E^{-1}\mathbf{t}(\widehat{S}\mathbf{z}), \qquad
    \mathbf{n}'(\mathbf{z}):=\widehat{S}^{-1}\mathbf{n}(\widehat{S}\mathbf{z}).
  \end{align*}
 We will show that $(\mathbf{t}',\mathbf{n}')$ satisfies the same conjugacy equation \eqref{eqn:conjugacy} and has the same linear parts and near-resonant support as $(\mathbf{t},\mathbf{n})$. By uniqueness of the solution to the term-wise conditions \eqref{eqn:homological}, we then conclude $\mathbf{t}'=\mathbf{t}$ and $\mathbf{n}'=\mathbf{n}$, which gives the desired equivariance properties.

  Differentiating $\mathbf{t}'$ and using the conjugacy equation \eqref{eqn:conjugacy} at
  $\widehat{S}\mathbf{z}$ together with linearity of the group action and the equivariance of $\mathbf{r}$,
  \begin{align*}
    D\mathbf{t}'(\mathbf{z})\,\mathbf{n}'(\mathbf{z})
    &=(S|_E)^{-1}\,D\mathbf{t}(\widehat{S}\mathbf{z})\widehat{S}\widehat{S}^{-1}\,\mathbf{n}(\widehat{S}\mathbf{z})=(S|_E)^{-1}\,D\mathbf{t}(\widehat{S}\mathbf{z})\,\mathbf{n}(\widehat{S}\mathbf{z})\\
    &=(S|_E)^{-1}\,\mathbf{r}\big(\mathbf{t}(\widehat{S}\mathbf{z})\big)=(S|_E)^{-1}\,\mathbf{r}\big(S|_E\,\mathbf{t}'(\mathbf{z})\big)
    =\mathbf{r}\big(\mathbf{t}'(\mathbf{z})\big),
  \end{align*}
  so $(\mathbf{t}',\mathbf{n}')$ satisfies the same conjugacy equation \eqref{eqn:conjugacy} as $(\mathbf{t},\mathbf{n})$. Next we show that the linear parts of $\mathbf{t}'$ and $\mathbf{n}'$ coincide with those of $\mathbf{t}$ and $\mathbf{n}$. The linear part of $\mathbf{t}'$ is given by
  \begin{align*}
    D\mathbf{t}'(\mathbf{0}) = (S|_E)^{-1}D\mathbf{t}(\mathbf{0})\widehat{S} = (S|_E)^{-1}\mathbf{W}\widehat{S} = (S|_E)^{-1}\mathbf{W}(\mathbf{W}^{-1}(S|_E)\mathbf{W}) = \mathbf{W} = D\mathbf{t}(\mathbf{0}),
  \end{align*}
  and the linear part of $\mathbf{n}'$ is given by
  \begin{align*}
    D\mathbf{n}'(\mathbf{0}) = \widehat{S}^{-1}D\mathbf{n}(\mathbf{0})\widehat{S} = \widehat{S}^{-1}\boldsymbol{\Lambda}\widehat{S} = \boldsymbol{\Lambda} = D\mathbf{n}(\mathbf{0}),
  \end{align*}
  where we have used that $\mathbf{B}=\mathbf{V}_1^\top\mathbf{A}\mathbf{U}_1$ commutes with $S|_E$ for all $S\in\mathcal{G}$, so that $\boldsymbol{\Lambda}$ commutes with $\widehat{S}$. It remains to show that the assignment of nonlinear terms to $\mathbf{t}'$ and $\mathbf{n}'$, i.e. their near-resonant support, is the same as that for $\mathbf{t}$ and $\mathbf{n}$ given in \eqref{eqn:extended_normal_form_solution}. In principle the linear substitution $\mathbf{z}\mapsto \widehat{S}\mathbf{z}$ preserves the degree of a monomial, but a single monomial $\mathbf{z}^\mathbf{m}$ may be mapped to a linear combination of monomials of the same degree under this map.
  
  Let us consider a single monomial $\mathbf{e}_j\mathbf{z}^{\mathbf{m}}$ in the Taylor expansion of $\mathbf{n}$, contributing $z_1^{m_1}\cdots z_d^{m_d}$ to the $j$-th component of $\mathbf{n}$. The assignment of this monomial to $\mathbf{n}$ or $\mathbf{t}$ is determined by the size of $|\mathrm{Im}(\mathbf{m}\cdot\boldsymbol{\lambda}-\lambda_j)|$ relative to the threshold $\delta$. Under the action of $\widehat{S}$, this monomial is mapped to
\begin{align*}
   \widehat{S}^{-1}\mathbf{e}_j(\widehat{S}\mathbf{z})^{\mathbf{m}}
= \Big(\sum_{i=1}^d (\widehat{S}^{-1})_{ij}\,\mathbf{e}_i\Big)
  \prod_{l=1}^d\Big(\sum_{k=1}^d \widehat{S}_{lk}\,z_k\Big)^{m_l},
\end{align*}
which is a linear combination of degree-$|\mathbf{m}|$ monomial vector fields $\mathbf{e}_i\mathbf{z}^{\mathbf{m}'}$
with $|\mathbf{m}'|=|\mathbf{m}|$. Now, $\widehat{S}$ commutes with the diagonal matrix
$\boldsymbol{\Lambda}$, i.e.
$\widehat{S}_{lk}(\lambda_k-\lambda_l)=(\widehat{S}\boldsymbol{\Lambda}-\boldsymbol{\Lambda}\widehat{S})_{lk}=0$,
so $\widehat{S}_{lk}=0$ whenever $\lambda_l\neq\lambda_k$, and likewise for $\widehat{S}^{-1}$. In other
words $\widehat{S}$ couples only coordinates belonging to the same eigenvalue. This implies the following
\begin{enumerate}[label=(\roman*)]
  \item $\widehat{S}^{-1}\mathbf{e}_j$ involves only the basis vectors $\mathbf{e}_i$ with $\lambda_i=\lambda_j$;
  \item each factor $(\widehat{S}\mathbf{z})_l=\sum_{k:\lambda_k=\lambda_l}\widehat{S}_{lk}z_k$ involves only
  variables $z_k$ with $\lambda_k=\lambda_l$, so every monomial $\mathbf{z}^{\mathbf{m}'}$ produced by
  expanding the product carries the same eigenvalue weight,
  $\mathbf{m}'\cdot\boldsymbol{\lambda}=\sum_{l=1}^d m_l\lambda_l=\mathbf{m}\cdot\boldsymbol{\lambda}$.
\end{enumerate}
Hence every term $\mathbf{e}_i\mathbf{z}^{\mathbf{m}'}$ appearing in
$\widehat{S}^{-1}\mathbf{e}_j(\widehat{S}\mathbf{z})^{\mathbf{m}}$ has the same value for the resonance condition as the original term $\mathbf{e}_j\mathbf{z}^{\mathbf{m}}$:
\begin{align*}
  \mathbf{m}'\cdot\boldsymbol{\lambda}-\lambda_i=\mathbf{m}\cdot\boldsymbol{\lambda}-\lambda_j,
\end{align*}
and therefore, for any contribution $\mathbf{e}_i\mathbf{z}^{\mathbf{m}'}$ to $\mathbf{n}'$ or $\mathbf{t}'$, we have $(i,\mathbf{m}')\in\mathcal{I}_\delta$ if and only if $(j,\mathbf{m})\in\mathcal{I}_\delta$. This means that the support of the Taylor expansion of $\mathbf{n}'$ is exactly the same as that of $\mathbf{n}$, and likewise for $\widehat{\mathbf{t}}'$ and $\widehat{\mathbf{t}}$. Since the support of the Taylor expansions of $\mathbf{n}$ and $\mathbf{t}$ are disjoint, this implies the term-wise coefficients which are obtained as solutions of \eqref{eqn:homological} are exactly the same as those of $\mathbf{t}$ and $\mathbf{n}$, so that $\mathbf{t}'=\mathbf{t}$ and $\mathbf{n}'=\mathbf{n}$. The result follows.
\end{proof}

\subsection{Taylor expansions of equivariant functions}\label{sec:taylor_expansion_equivariant_functions} We noted in the previous section that the central ingredient in the construction of equivariant SSM-based reduced order models are functions ($\mathbf{h},\mathbf{r},\mathbf{t},\mathbf{n}$) with integer power Taylor expansions which are equivariant with respect to a linear symmetry group. In this section we examine the characterisation of the permissible terms in such an equivariant expansion as well as the reduction in degrees of freedom that can be achieved by incorporating the symmetry. To begin with, we introduce the following notation:
\begin{definition}
  For a given integer $k\in \mathbb{N}$, we denote by $\boldsymbol{\phi}_k^{(m)}(\mathbf{x})$ the vector of all monomials of degree $k$ in the variables $\mathbf{x} = (x_1,\dots,x_m)$, i.e.
  \begin{align*}
    \boldsymbol{\phi}_k^{(m)}(\mathbf{x}) = (\mathbf{x}^\mathbf{j})_{|\mathbf{j}|=k} = (x_1^k, x_1^{k-1}x_2, \dots, x_m^k)^\top.
  \end{align*}
\end{definition}
\begin{theorem}[{Taylor expansion of equivariant functions}]\label{thm:equivariant_taylor} Suppose $\mathbf{g}:\mathbb{R}^m\rightarrow\mathbb{R}^n$ is $\mathcal{C}^\infty$ and has a convergent Taylor series with integer powers around $\mathbf{0}$,
  \begin{align*}
    \mathbf{g}(\mathbf{x}) = \sum_{k\geq 0}\mathbf{C}_{k}\boldsymbol{\phi}_k^{(m)}(\mathbf{x}),
  \end{align*}
where $\mathbf{C}_k\in\mathbb{R}^{n\times N_k}$ collects the coefficients of the monomials of degree $k$ and $N_k = \binom{k+m-1}{k}$ is the number of such monomials. Suppose further that $\mathbf{g}$ is equivariant with respect to the compact symmetry group $\mathcal{H}\subset\mathrm{GL}(n,\mathbb{R})$, with Haar measure $\mu$ and representation $\sigma$ in $\mathbb{R}^m$, i.e.
\begin{align}\label{eqn:equivariance_condition_theorem}
  \mathbf{g}(\sigma(S)\mathbf{x}) = S\mathbf{g}(\mathbf{x}), \qquad \forall S\in\mathcal{H},\ \mathbf{x}\in\mathbb{R}^m.
\end{align}
Then the Taylor coefficients $\mathbf{C}_{k}$ satisfy the following three properties:
\begin{enumerate}[label=(\roman*)]
  \item Linear constraints on the coefficients. The equivariance condition \eqref{eqn:equivariance_condition_theorem} imposes linear constraints on the Taylor coefficients $\mathbf{C}_{k}$, which can be expressed as
  \begin{align}\label{eqn:linear_constraints_Taylor_series}
    S\mathbf{C}_{k} = \mathbf{C}_{k}\mathbf{D}_{k}(S), \qquad \forall S\in\mathcal{H},\ k\geq 0,
  \end{align}
where $\mathbf{D}_{k}(S)$ is the representation of $\mathcal{H}$ on the space of homogeneous polynomials of degree $k$ in $\mathbb{R}^m$ as defined in \eqref{eqn:representation_on_polynomials}.
  \item The admissible coefficients in the Taylor expansion of $\mathbf{g}$ are given precisely by
  \begin{align*}
  \mathbf{C}_{k}\in \bigcap_{S\in\mathcal{H}} \operatorname{ker}(\mathbf{M}_k[S]),
  \end{align*}
  where $\mathbf{M}_k[S]$ is the linear operator defined by
  \begin{align*}
    \mathbf{M}_k[S](\mathbf{C}) := S\mathbf{C}_{k} - \mathbf{C}_{k}\mathbf{D}_{k}(S).
  \end{align*}
  \item Projection onto the admissible space. Let us define the group-averaging operator $\mathcal{R}_k:\mathbb{R}^{n\times N_k}\rightarrow \mathbb{R}^{n\times N_k}$,
  \begin{align}\label{eqn:def_R_operator}
    \mathcal{R}_k(\mathbf{C}):=\int_{\mathcal{H}}S\mathbf{C}\mathbf{D}_k(S)^{-1}\mathrm{d}\mu(S),
  \end{align}
  which for finite groups $\mathcal{H}$ simplifies to $\mathcal{R}_k=\frac{1}{|\mathcal{H}|}\sum_{S\in\mathcal{H}}S\mathbf{C}\mathbf{D}_k(S)^{-1}$. Then $\mathcal{R}_k$ is a linear projection ($\mathcal{R}_k\circ\mathcal{R}_k = \mathcal{R}_k$), whose image is exactly the set of admissible coefficients
\begin{align*}
  \mathrm{im}\mathcal{R}_k = \bigcap_{S\in\mathcal{H}} \operatorname{ker}(\mathbf{M}_k[S]).
\end{align*}
\end{enumerate}
\end{theorem}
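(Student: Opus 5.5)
The plan is to first make precise the representation $\mathbf{D}_k$ referred to in \eqref{eqn:representation_on_polynomials}. It is defined by the identity
\begin{align*}
  \boldsymbol{\phi}_k^{(m)}(\sigma(S)\mathbf{x}) = \mathbf{D}_k(S)\,\boldsymbol{\phi}_k^{(m)}(\mathbf{x}), \qquad \forall\,\mathbf{x}\in\mathbb{R}^m,
\end{align*}
which is well defined because a linear substitution maps homogeneous polynomials of degree $k$ to homogeneous polynomials of degree $k$. The monomials of degree $k$ form a basis of that space, so $\mathbf{D}_k(S)$ is the unique matrix satisfying this identity.

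The homomorphism property $\mathbf{D}_k(S_1S_2)=\mathbf{D}_k(S_1)\mathbf{D}_k(S_2)$ follows by substituting twice and using that $\sigma$ is a representation. This also gives invertibility, with $\mathbf{D}_k(S)^{-1}=\mathbf{D}_k(S^{-1})$, which is needed for $\mathcal{R}_k$ to make sense. In addition, $\mathbf{D}_k$ depends continuously (indeed polynomially) on $\sigma(S)$, so the Haar integral in \eqref{eqn:def_R_operator} is well defined.

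For (i), substitute the Taylor series into \eqref{eqn:equivariance_condition_theorem}. This gives
\begin{align*}
  \sum_k \mathbf{C}_k\mathbf{D}_k(S)\boldsymbol{\phi}_k^{(m)}(\mathbf{x}) = \sum_k S\mathbf{C}_k\boldsymbol{\phi}_k^{(m)}(\mathbf{x}).
\end{align*}
Both sides are convergent power series, and the substitution preserves degree. Comparing homogeneous parts of degree $k$ (uniqueness of Taylor coefficients) and then using linear independence of the monomials yields $S\mathbf{C}_k=\mathbf{C}_k\mathbf{D}_k(S)$ for every $S$ and every $k$.

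Part (ii) is then a restatement: $\mathbf{M}_k[S](\mathbf{C}_k)=0$ for all $S\in\mathcal{H}$. For the word ``precisely'' I will also argue the converse. If every $\mathbf{C}_k$ lies in the intersection of kernels, then the series reassembles, term by term, into a function satisfying \eqref{eqn:equivariance_condition_theorem}, on the domain where the series and its composition with $\sigma(S)$ converge. So the admissible coefficient sets are exactly these kernels.

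For (iii), there are three things to check.

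\textbf{Image contained in the admissible space.} Fix $T\in\mathcal{H}$ and compute
\begin{align*}
  T\,\mathcal{R}_k(\mathbf{C})\,\mathbf{D}_k(T)^{-1} = \int_{\mathcal{H}} TS\,\mathbf{C}\,\mathbf{D}_k(TS)^{-1}\,\dd\mu(S) = \mathcal{R}_k(\mathbf{C}).
\end{align*}
The first equality uses the homomorphism property, and the second uses left-invariance of Haar measure under $S\mapsto TS$. Hence $\mathbf{M}_k[T](\mathcal{R}_k\mathbf{C})=0$ for every $T$.

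\textbf{$\mathcal{R}_k$ is the identity on the admissible space.} If $\mathbf{C}$ is admissible, then $S\mathbf{C}\mathbf{D}_k(S)^{-1}=\mathbf{C}$ for every $S$. Integrating against the normalised measure gives $\mathcal{R}_k(\mathbf{C})=\mathbf{C}$.

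\textbf{Conclusion.} The two facts together give $\mathcal{R}_k\circ\mathcal{R}_k=\mathcal{R}_k$ and $\mathrm{im}\,\mathcal{R}_k=\bigcap_S\ker\mathbf{M}_k[S]$. Linearity of $\mathcal{R}_k$ is immediate from its definition.

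The main obstacle is bookkeeping in the definition of $\mathbf{D}_k$. One has to check the order of composition, so that $\mathbf{D}_k$ is a homomorphism and not an anti-homomorphism. This matters because the invariance step in (iii) needs exactly $\mathbf{D}_k(TS)^{-1}=\mathbf{D}_k(S)^{-1}\mathbf{D}_k(T)^{-1}$. If the convention came out reversed, I would replace $\mathbf{D}_k(S)^{-1}$ by $\mathbf{D}_k(S^{-1})$ and use invariance of Haar measure under inversion, which holds because $\mathcal{H}$ is compact and hence unimodular.

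A minor point is that the statement writes $\mathcal{H}\subset\mathrm{GL}(n,\mathbb{R})$ with a separate representation $\sigma$ on $\mathbb{R}^m$. I treat $\mathcal{H}$ abstractly, acting through $S$ on the codomain and through $\sigma(S)$ on the domain. Compactness is used only to guarantee that the normalised Haar measure exists.
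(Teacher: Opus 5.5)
Your proposal is correct and follows essentially the same route as the paper. For (i) you compare degree by degree through the defining identity $\mathbf{D}_k(S)\boldsymbol{\phi}_k^{(m)}(\mathbf{x})=\boldsymbol{\phi}_k^{(m)}(\sigma(S)\mathbf{x})$, and for (iii) you show by Haar invariance that $\mathcal{R}_k(\mathbf{C})$ is fixed by every map $\mathbf{C}\mapsto T\mathbf{C}\mathbf{D}_k(T)^{-1}$ while $\mathcal{R}_k$ is the identity on admissible coefficients; your extra checks (that $\mathbf{D}_k$ is a genuine homomorphism with this convention, the converse in (ii), and the left-invariance reindexing $S\mapsto TS$) are details the paper leaves implicit.
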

\begin{remark}
Theorem~\ref{thm:equivariant_taylor} can, of course, also be applied with $m=n,\sigma(S)=S$ to the case of equivariant vector fields $\mathbf{g}:\mathbb{R}^n\rightarrow\mathbb{R}^n$.
\end{remark}
\begin{remark}
If we write $\operatorname{vec}(\mathbf{C}_k)$ for the representation of the matrix $\mathbf{C}_k$ as a vector in $\mathbb{R}^{nN_k}$ (columns stacked), then the action of $\mathbf{M}_k[S]$ on $\mathbf{C}_k$ can be expressed as a linear operator on $\operatorname{vec}(\mathbf{C}_k)$, i.e.
\begin{align*}
  \mathbf{M}_k[S]=\mathbf{I}_{N_k}\otimes S - \mathbf{D}_k(S)^\top\otimes \mathbf{I}_n,
\end{align*}
where $\mathbf{I}_{N_k}\otimes S$ is the $nN_k\times nN_k$ block-diagonal matrix with $S$ on each
of the $N_k$ diagonal blocks, and $\mathbf{D}_k(S)^\top\otimes \mathbf{I}_n$ is the $nN_k\times nN_k$
matrix whose $(i,j)$-th $n\times n$ block is $\big(\mathbf{D}_k(S)^\top\big)_{ij}\,\mathbf{I}_n$. Similarly, we can define the stacked operator
  \begin{align}\label{eqn:def_stacked_operator}
    \mathbf{M}_k = \begin{bmatrix}
      \mathbf{I}_{N_k}\otimes S_1 - \mathbf{D}_k(S_1)^\top\otimes \mathbf{I}_n\\
      \vdots\\
      \mathbf{I}_{N_k}\otimes S_{|\mathcal{H}|} - \mathbf{D}_k(S_{|\mathcal{H}|})^\top\otimes \mathbf{I}_n
    \end{bmatrix}.
  \end{align}
  This expression turns out to be very useful for computing the admissible coefficients in practice.
\end{remark}
  \begin{proof}[Proof of Theorem~\ref{thm:equivariant_taylor}] \textit{Let us begin by proving property (i).} By linearity of the action of $\mathcal{H}$, the equivariance condition \eqref{eqn:equivariance_condition_theorem} must hold at any order in the Taylor series meaning we have, for every $k\geq 0$,
\begin{align*}
  \mathbf{C}_k\boldsymbol{\phi}^{(m)}_{k}(\sigma(S)\mathbf{x}) = S \mathbf{C}_k \boldsymbol{\phi}_{k}^{(m)}(\mathbf{x}).
\end{align*}
Let us denote by $\mathbf{D}_k(S)$ the representation of the action of $\mathcal{H}$ on the space of homogeneous polynomials of degree $k$ in $\mathbb{R}^m$, i.e.
\begin{align}\label{eqn:representation_on_polynomials}
  \mathbf{D}_k(S)\boldsymbol{\phi}_{k}^{(m)}(\mathbf{x})= \boldsymbol{\phi}_{k}^{(m)}(\sigma(S)\mathbf{x}).
\end{align}
Then the above equation can be rewritten as
\begin{align*}
  \mathbf{C}_k\mathbf{D}_k(S)\boldsymbol{\phi}_{k}^{(m)}(\mathbf{x}) = S \mathbf{C}_k \boldsymbol{\phi}_{k}^{(m)}(\mathbf{x}),
\end{align*}
which must hold for all $\mathbf{x}\in\mathbb{R}^m$, i.e. we must have 
\begin{align*}
  S \mathbf{C}_k = \mathbf{C}_k\mathbf{D}_k(S), \qquad \forall S\in\mathcal{H},\ k\geq 0,
\end{align*}
which is exactly the statement of property (i).

\textit{Property (ii) then immediately follows, since the admissible coefficients are precisely those that satisfy the linear constraints in property (i).} 

\textit{Finally, we prove property (iii).} Consider the linear map $\rho_k(S):\mathbb{R}^{n\times N_k}\to\mathbb{R}^{n\times N_k}$ on the space of Taylor coefficients of degree $k$, given by
$\rho_k(S)\mathbf{C}:=S\,\mathbf{C}\,\mathbf{D}_k(S)^{-1}$. Since $\mathbf{D}_k$ is a representation of $\mathcal{H}$ we have, for any $S,T\in\mathcal{H}$, $\mathbf{D}_k(S)\mathbf{D}_k(T)=\mathbf{D}_k(ST)$, and thus, for any $\mathbf{C}\in\mathbf{R}^{n\times N_k}$,
\begin{align*}
  \rho_k(S)\rho_k(T)\mathbf{C}=S\,T\,\mathbf{C}\,\mathbf{D}_k(T)^{-1}\mathbf{D}_k(S)^{-1}
  = (ST)\,\mathbf{C}\,\mathbf{D}_k(ST)^{-1}
  = \rho_k(ST)\mathbf{C},
\end{align*}
so $\rho_k$ is a representation of $\mathcal{H}$ on the coefficient space. By property (i) a coefficient $\mathbf{C}_k$ is admissible if and only if $\rho_k(S)\mathbf{C}_k=\mathbf{C}_k$ for all $S\in\mathcal{H}$, i.e. if and only if it is a fixed point for all $\rho_k(S)$, i.e. $\mathbf{C}_k\in\operatorname{Fix}(\rho_k)$. Let us now consider $\mathcal{R}_k(\mathbf{C})$. We have, for any $T\in\mathcal{H}$
\begin{align*}
  \rho_k(T)\mathcal{R}_k(\mathbf{C})&=T\int_{\mathcal{H}}S\mathbf{C}\mathbf{D}_k(S)^{-1}\mathrm{d}\mu(S)\mathbf{D}_k(T)^{-1}\\
  &=\int_{\mathcal{H}}(TS)\mathbf{C}\mathbf{D}_k(TS)^{-1}\mathrm{d}\mu(S)=\mathcal{R}_k(\mathbf{C}),
\end{align*}
where in the final equality we reindexed $S\mapsto ST$ using the group invariance property of the Haar measure. Thus $\mathcal{R}_k(\mathbf{C})$ is fixed by $\rho_k(T)$ for all $T\in\mathcal{H}$, and hence $\operatorname{im}\mathcal{R}_k\subseteq\operatorname{Fix}(\rho_k)$. Conversely, if $\mathbf{C}\in\operatorname{Fix}(\rho_k)$ then $\rho_k(S)\mathbf{C}=\mathbf{C}$ for every $S$ and
\begin{align*}
  \mathcal{R}_k(\mathbf{C})=\int_{\mathcal{H}}\rho_k(S)\mathbf{C}\mathrm{d}\mu(S)=\int_{\mathcal{H}}\mathbf{C}\mathrm{d}\mu(S)=\mathbf{C},
\end{align*}
thus $\mathcal{R}_k$ fixes $\operatorname{Fix}(\rho_k)$ pointwise, i.e.\ $\operatorname{Fix}(\rho_k)\subseteq\operatorname{im}\mathcal{R}_k$ and $\mathcal{R}_k^2=\mathcal{R}_k$. Thus
$\mathcal{R}_k$ is a projection with
$\operatorname{im}\mathcal{R}_k=\operatorname{Fix}(\rho_k)=\bigcap_{S}\operatorname{ker}\mathbf{M}_k[S]$, and
$\mathbf{C}_k$ is admissible iff $\mathbf{C}_k=\mathcal{R}_k(\mathbf{C}_k)$.
\end{proof}
Theorem~\ref{thm:equivariant_taylor} can be used to count the number of independent coefficients in the Taylor expansion of an equivariant function:
\begin{corollary} Let $\mathcal{H}$ be a finite group, then the number of admissible coefficients in the Taylor expansion of an equivariant function at degree $k$,
  \begin{align*}
    p_k := \dim\!\Big(\bigcap_{S\in\mathcal{H}}\operatorname{ker}\mathbf{M}_k[S]\Big),
  \end{align*}
  can be characterised by:
   \begin{align*}
    p_k \;=\; \operatorname{tr}\mathcal{R}_k
        \;=\; \frac{1}{|\mathcal{H}|}\sum_{S\in\mathcal{H}}\operatorname{tr}(S)\,
              \operatorname{tr}\!\big(\mathbf{D}_k(S)^{-1}\big),
  \end{align*}
  where $\mathcal{R}_k$ is the operator defined in \eqref{eqn:def_R_operator}.
\end{corollary}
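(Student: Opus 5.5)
The plan is to combine the two facts in part (iii) of Theorem~\ref{thm:equivariant_taylor}. First, $\mathcal{R}_k$ is a linear projection. Second, its image is exactly $\bigcap_{S\in\mathcal{H}}\ker\mathbf{M}_k[S]$. For any projection $P$ on a finite-dimensional space, the eigenvalues are only $0$ and $1$, and $P$ is diagonalisable, since the space splits as $\operatorname{im}P\oplus\ker P$. Hence $\operatorname{tr}P=\operatorname{rank}P=\dim\operatorname{im}P$. Applying this to $\mathcal{R}_k$ on $\mathbb{R}^{n\times N_k}$ gives $p_k=\operatorname{tr}\mathcal{R}_k$ immediately. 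This is the first equality.

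For the second equality I would compute the trace of $\mathcal{R}_k$ explicitly. By linearity of the trace, $\operatorname{tr}\mathcal{R}_k=\frac{1}{|\mathcal{H}|}\sum_{S}\operatorname{tr}\rho_k(S)$, where $\rho_k(S)\mathbf{C}=S\mathbf{C}\mathbf{D}_k(S)^{-1}$ is the representation introduced in the proof of Theorem~\ref{thm:equivariant_taylor}. It therefore suffices to show that the trace of the linear map $\mathbf{C}\mapsto S\mathbf{C}\mathbf{T}$ on $\mathbb{R}^{n\times N_k}$ equals $\operatorname{tr}(S)\operatorname{tr}(\mathbf{T})$, with $\mathbf{T}=\mathbf{D}_k(S)^{-1}$. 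I see two ways to do this. The first is to vectorise, as in the remark following the theorem: $\operatorname{vec}(S\mathbf{C}\mathbf{T})=(\mathbf{T}^\top\otimes S)\operatorname{vec}(\mathbf{C})$, and then use $\operatorname{tr}(\mathbf{T}^\top\otimes S)=\operatorname{tr}(\mathbf{T}^\top)\operatorname{tr}(S)=\operatorname{tr}(\mathbf{T})\operatorname{tr}(S)$. The second is to compute directly on the elementary basis $\mathbf{E}_{ij}$. The diagonal coefficient of $\mathbf{E}_{ij}$ in $S\mathbf{E}_{ij}\mathbf{T}$ is $S_{ii}T_{jj}$, and summing over $i,j$ gives the product of traces. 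Substituting back yields the stated formula.

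I do not expect a genuine obstacle here; the argument is essentially bookkeeping. The two points that need care are as follows. One must check that the finite-group averaging formula for $\mathcal{R}_k$ given in Theorem~\ref{thm:equivariant_taylor} is the operator whose trace is taken, which holds because the normalised counting measure is the Haar measure. One must also keep the Kronecker-product convention consistent with the column-stacked $\operatorname{vec}$. I would also note that the diagonalisability claim for projections follows from $\mathcal{R}_k^2=\mathcal{R}_k$, which was established in the theorem. In the same spirit one could remark that $\operatorname{tr}\mathbf{D}_k(S)^{-1}=\operatorname{tr}\mathbf{D}_k(S^{-1})$, which makes the character-inner-product interpretation visible, but this is not needed for the proof.
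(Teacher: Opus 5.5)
Your proposal is correct and follows the paper's proof: Theorem~\ref{thm:equivariant_taylor}(iii) together with idempotence gives $p_k=\operatorname{rank}\mathcal{R}_k=\operatorname{tr}\mathcal{R}_k$, and the trace formula then follows from $\operatorname{vec}(S\mathbf{C}\mathbf{D}_k(S)^{-1})=(\mathbf{D}_k(S)^{-\top}\otimes S)\operatorname{vec}(\mathbf{C})$ and $\operatorname{tr}(A\otimes B)=\operatorname{tr}(A)\operatorname{tr}(B)$. Your alternative computation on the elementary basis $\mathbf{E}_{ij}$ is equally valid and avoids the Kronecker identity altogether.
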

\begin{proof} We note that $\mathcal{R}_k$ is a
projection onto $\bigcap_{S}\operatorname{ker}\mathbf{M}_k[S]$ by
Theorem~\ref{thm:equivariant_taylor}(iii), so $\operatorname{rank}\mathcal{R}_k=\dim(\operatorname{im}\mathcal{R}_k)=p_k$
and, being idempotent, $\operatorname{rank}\mathcal{R}_k=\operatorname{tr}\mathcal{R}_k$. The final expression for $p_k$ follows from the fact that $\operatorname{tr}(A\otimes B)=\operatorname{tr}(A)\operatorname{tr}(B)$ and that, in vectorised form,
$\operatorname{vec}(\rho_k(S)\mathbf{C})=(\mathbf{D}_k(S)^{-\top}\otimes S)\operatorname{vec}(\mathbf{C})$, for any $S\in\mathcal{H}$.
\end{proof}
\begin{example}[Effect of equivariance on the damped oscillator chain]\label{ex:equivariant_taylor_expansion_damped_oscillator_chain}
We continue the set-up of Example~\ref{ex:chain_of_oscillators} which is equivariant under the symmetry group $\mathcal{G}=\{\mathbf{I},-\mathbf{I}\}$. Letting $E$ be a spectral subspace with associated SSM $\mathcal{W}(E)$, the chart-equivariant parametrisation $\mathbf{h}$ (Prop.~\ref{prop:equivariance_reduced_dynamics}) must satisfy $\mathbf{h}(-\mathbf{z})=-\mathbf{h}(\mathbf{z})$. Moreover, we see that
\begin{align*}
  \mathbf{D}_k(-I) = (-1)^k \mathbf{I}_{N_k},
\end{align*}
thus
\begin{align*}
  \mathcal{R}_k(\mathbf{C}) = \frac{1}{2}\big(\mathbf{C}+(-1)^{k+1}\mathbf{C}\big) = \begin{cases}
    \mathbf{C}, & k\text{ even},\\
    0, & k\text{ odd}.
  \end{cases}
\end{align*}
This means that the equivariant Taylor expansion of $\mathbf{h}$ contains only odd-degree monomials, and thus the number of degrees of freedom in the Taylor expansion is significantly reduced. Note that this ``pruning'' of monomials is a special feature of the sign symmetry $\mathcal{G}=\langle-\mathbf{I}\rangle$; in general, the kernel of $\mathbf{M}_k$ need not be a coordinate subspace, and the admissible coefficients are not obtained by simply discarding certain monomials from the expansion. Instead, we find a reparametrisation $\operatorname{vec}(\mathbf{W}_k)=\mathbf{B}_k\mathbf{w}_k$ to reduce the dimensionality of the fitting problem, as described in further detail in \S\ref{sec:algorithmic_details_essm}.
\end{example}
\section{Data-driven SSM reduction with equivariance}\label{sec:data_driven_ssm_reduction_with_equivariance}
We now show how we can use the above theory to develop a data-driven SSM reduction method that preserves equivariance. The key idea is to incorporate the restrictions obtained in \S\ref{sec:taylor_expansion_equivariant_functions} into the optimisation problem for computing the reduced dynamics from data, which leads to a smaller parameter space and thus a more efficient and robust algorithm.

For the data-driven eSSM reduction we assume that we have access to trajectories sampled uniformly in time, $\mathbf{y}_{k}^{(j)}\in \mathbb{R}^n$, $k=1,\dots,N_j$, $j=1,\dots,J$, generated by the underlying system \eqref{eqn:general_system},
\begin{align}\label{eqn:trajectory_data}
  \mathbf{y}_{k}^{(j)} = \mathbf{x}(t_k;\mathbf{x}_0^{(j)}), \qquad t_k = (k-1)\Delta t, \quad k=1,\dots,N_j,
\end{align}
where $\mathbf{x}(t;\mathbf{x}_0)$ is the solution of \eqref{eqn:general_system} with initial condition $\mathbf{x}_0$ at time $t$. For notational simplicity, we will restrict the presentation of the algorithm to the case of a single trajectory of data, i.e. $J=1$. Information from multiple trajectories can be incorporated into the algorithm by stacking the data matrices:
\begin{align*}
   \mathbf{Y} = [\mathbf{Y}^{(1)},\dots,\mathbf{Y}^{(J)}] \in \mathbb{R}^{n\times \sum_{j=1}^J N_j},
\quad\text{where }\mathbf{Y}^{(j)} = [\mathbf{y}_1^{(j)},\dots,\mathbf{y}_{N_j}^{(j)}] \in \mathbb{R}^{n\times N_j},
\end{align*}
taking care to apply any finite difference approximations of time derivatives per trajectory separately. In addition, we assume that we have knowledge of the \textit{finite} symmetry group $\mathcal{G}$ of the system.

\begin{remark}We explicitly restrict the presentation of the algorithm to finite symmetry groups for clarity of exposition, in principle most steps below can be extended to compact symmetry groups, although additional considerations for the implementation of the algorithm (group quadrature, etc.) would be required.\end{remark}
\subsection{Delay embedding and equivariance}
\label{sec:delay_embedding_and_equivariance}
In applications when observations of the full state $\mathbf{x}\in\mathbb{R}^n$ are not available, it is common practice in SSM reduction and similar data-driven methods to rely on Takens' embedding theorem \cite{takens1981detecting,deyle2011generalized,Sauer91} to reconstruct a suitable state space using delay embedding of lower-dimensional observations \cite{SSMLearnpaper22,FastSSMpaper23}. In particular, for lower-dimensional observations $\mathbf{H}(\mathbf{x})\in\mathbb{R}^m$, $m<n$, a delay embedding is typically constructed as
\begin{align}\label{eqn:delay_embedding}
  \boldsymbol{\Phi}(\mathbf{x})
  = \big(\mathbf{H}(\mathbf{x}),\,\mathbf{H}(\varphi_{\tau}(\mathbf{x})),\,\dots,\,
    \mathbf{H}(\varphi_{(p-1)\tau}(\mathbf{x}))\big) \in \mathbb{R}^{mp},
\end{align}
where $\varphi_t$ denotes the flow map of \eqref{eqn:general_system}, $\tau>0$ is a delay (in practice an integer multiple of the sampling interval $\Delta t$), and $p\in\mathbb{N}$ is the number of delays. By Takens' embedding theorem \cite{takens1981detecting} and its extensions \cite{Sauer91}, for generic observables and $mp \geq 2\dim\mathcal{W}(E)+1$ the map $\boldsymbol{\Phi}$ restricts to an embedding of $\mathcal{W}(E)$, so that the eSSM reduction can be performed on the delay-embedded data.

We note that, if the observable $\mathbf{H}$ is equivariant with respect to the symmetry group $\mathcal{G}$, then the delay embedding $\boldsymbol{\Phi}$ is also equivariant with respect to a lifted representation of $\mathcal{G}$ on the delay-embedded space. This means that the equivariance properties of the original system are preserved under delay embedding, allowing for the application of equivariant SSM reduction techniques even when only lower-dimensional observations are available. This statement is made precise in the following result.

\begin{proposition}\label{prop:delay_embedding_equivariance}
Let \eqref{eqn:general_system} be equivariant with respect to the linear symmetry group $\mathcal{G}\subset GL(n,\mathbb{R})$, and suppose the observable $\mathbf{H}:\mathbb{R}^n\to\mathbb{R}^m$ is $\mathcal{G}$-covariant, i.e. there is a linear representation $\sigma:\mathcal{G}\to GL(m,\mathbb{R})$ such that $\mathbf{H}(S\mathbf{x})=\sigma(S)\mathbf{H}(\mathbf{x})$ for all $S\in\mathcal{G}$, $\mathbf{x}\in\mathbb{R}^n$. Then
\begin{align}\label{eqn:intertwine_identity}
  \boldsymbol{\Phi}(S\mathbf{x})
  = \big(\mathbf{I}_p\otimes\sigma(S)\big)\,\boldsymbol{\Phi}(\mathbf{x}),
  \qquad S\in\mathcal{G},\ \mathbf{x}\in\mathbb{R}^n,
\end{align}
i.e. $\mathcal{G}$ acts on the delay-embedded space $\mathbb{R}^{mp}$ by the block-diagonal representation $\widetilde{\sigma}=\mathbf{I}_p\otimes\sigma$, and the dynamics induced on $\boldsymbol{\Phi}(\mathbb{R}^n)$ are equivariant with respect to $\widetilde{\sigma}(\mathcal{G})$.
\end{proposition}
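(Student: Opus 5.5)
The plan is a direct computation built on the flow equivariance $S\varphi_t(\mathbf{x})=\varphi_t(S\mathbf{x})$. This identity was established in the proof of Theorem~\ref{thm:equivariance-ssm} from the equivariance condition \eqref{eqn:equivariance_condition} and uniqueness of solutions. It holds for every $t$ for which the flow is defined, in particular for $t=0,\tau,\dots,(p-1)\tau$.

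First we verify \eqref{eqn:intertwine_identity} blockwise. The $i$-th block of $\boldsymbol{\Phi}(S\mathbf{x})$ is $\mathbf{H}(\varphi_{(i-1)\tau}(S\mathbf{x}))$. Flow equivariance rewrites this as $\mathbf{H}(S\varphi_{(i-1)\tau}(\mathbf{x}))$, and $\mathcal{G}$-covariance of $\mathbf{H}$ then gives $\sigma(S)\mathbf{H}(\varphi_{(i-1)\tau}(\mathbf{x}))$. This is precisely the $i$-th block of $(\mathbf{I}_p\otimes\sigma(S))\boldsymbol{\Phi}(\mathbf{x})$, since $\mathbf{I}_p\otimes\sigma(S)$ is block diagonal with $\sigma(S)$ in each of the $p$ diagonal blocks. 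Because $\sigma$ is a representation and $(\mathbf{I}_p\otimes A)(\mathbf{I}_p\otimes B)=\mathbf{I}_p\otimes AB$, the map $\widetilde\sigma=\mathbf{I}_p\otimes\sigma$ is again a linear representation of $\mathcal{G}$ on $\mathbb{R}^{mp}$.

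Next we treat the induced dynamics. On $\boldsymbol{\Phi}(\mathbb{R}^n)$, or on the restriction to $\mathcal{W}(E)$ where $\boldsymbol{\Phi}$ is an embedding, the induced flow is $\psi_t:=\boldsymbol{\Phi}\circ\varphi_t\circ\boldsymbol{\Phi}^{-1}$, i.e.\ $\psi_t(\boldsymbol{\Phi}(\mathbf{x}))=\boldsymbol{\Phi}(\varphi_t(\mathbf{x}))$. For $\boldsymbol{\xi}=\boldsymbol{\Phi}(\mathbf{x})$ we compute
\begin{align*}
\psi_t(\widetilde\sigma(S)\boldsymbol{\xi})=\psi_t(\boldsymbol{\Phi}(S\mathbf{x}))=\boldsymbol{\Phi}(\varphi_t(S\mathbf{x}))=\boldsymbol{\Phi}(S\varphi_t(\mathbf{x}))=\widetilde\sigma(S)\boldsymbol{\Phi}(\varphi_t(\mathbf{x}))=\widetilde\sigma(S)\psi_t(\boldsymbol{\xi}).
\end{align*}
This shows that $\widetilde\sigma(S)$ maps $\boldsymbol{\Phi}(\mathbb{R}^n)$ into itself and commutes with the induced flow. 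Differentiating in $t$ at $t=0$ yields equivariance of the induced vector field wherever it is defined.

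The only genuine obstacle is well-definedness of the induced dynamics when $\boldsymbol{\Phi}$ is not injective. Without injectivity, $\psi_t$ need not be a function on the image. We would handle this by stating the dynamical claim on the set where $\boldsymbol{\Phi}$ is an embedding, for example $\mathcal{W}(E)$ under the Takens hypotheses quoted above. There $\boldsymbol{\Phi}^{-1}$ exists and the computation goes through unchanged. By Theorem~\ref{thm:equivariance-ssm}, $S\mathcal{W}(E)=\mathcal{W}(E)$, so $\widetilde\sigma(S)$ preserves $\boldsymbol{\Phi}(\mathcal{W}(E))$. A minor secondary point is global existence of the flow up to time $(p-1)\tau$; if needed we would restrict to a $\mathcal{G}$-invariant neighbourhood, constructed by intersecting over $\mathcal{G}$ as in the proof of Proposition~\ref{prop:equivariance_reduced_dynamics}.
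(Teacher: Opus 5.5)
Your proof is correct and follows the paper's argument: flow equivariance (from the proof of Theorem~\ref{thm:equivariance-ssm}) combined with $\mathcal{G}$-covariance of $\mathbf{H}$, applied block by block, gives \eqref{eqn:intertwine_identity}, and the dynamical claim follows by pushing trajectories through $\boldsymbol{\Phi}$. Your extra care about whether the induced flow is well defined when $\boldsymbol{\Phi}$ is not injective is a genuine tightening; the paper sidesteps this issue by only asserting that $\widetilde{\sigma}(S)$ sends embedded trajectories to embedded trajectories.
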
 
\begin{proof}
Since \eqref{eqn:general_system} is $\mathcal{G}$-equivariant, so is the flow map $\varphi_t$ (see proof of
Theorem~\ref{thm:equivariance-ssm}). Thus, each block of \eqref{eqn:delay_embedding}
satisfies
\begin{align*}
  \mathbf{H}\big(\varphi_{k\tau}(S\mathbf{x})\big)
  = \mathbf{H}\big(S\varphi_{k\tau}(\mathbf{x})\big)
  = \sigma(S)\,\mathbf{H}\big(\varphi_{k\tau}(\mathbf{x})\big),
  \qquad k=0,\dots,p-1,
\end{align*}
which completes the proof of \eqref{eqn:intertwine_identity}. In particular, if $\mathbf{x}(t)$ solves \eqref{eqn:general_system}, then so does
$S\mathbf{x}(t)$, and by \eqref{eqn:intertwine_identity} the corresponding embedded
trajectories are $\boldsymbol{\Phi}(\mathbf{x}(t))$ and
$\widetilde{\sigma}(S)\boldsymbol{\Phi}(\mathbf{x}(t))$. Hence $\widetilde{\sigma}(S)$
maps $\boldsymbol{\Phi}(\mathbb{R}^n)$ into itself and sends embedded trajectories to
embedded trajectories, i.e.\ the dynamics induced on $\boldsymbol{\Phi}(\mathbb{R}^n)$
are equivariant with respect to $\widetilde{\sigma}(S)$ for every $S\in\mathcal{G}$.
\end{proof}

\subsection{Algorithmic details of the eSSM reduction method}\label{sec:algorithmic_details_essm} We formulate the equivariant SSM reduction (eSSM) algorithm as an extension of the SSMLearn method introduced in \cite{SSMLearnpaper22}. Our algorithm consists of two main steps: (i) identification and parametrisation of the spectral submanifold $\mathcal{W}(E)$; and (ii) computation of the reduced dynamics on the SSM in extended normal form style. Each of these steps is described in detail below, along with the interaction of the symmetry group $\mathcal{G}$ with each step. Our trajectory data $\mathbf{Y}= [\mathbf{y}_1,\dots,\mathbf{y}_N] \in \mathbb{R}^{n\times N}$ is assumed to be generated from the underlying system \eqref{eqn:general_system} as in \eqref{eqn:trajectory_data}, and we are only able to access the data $\mathbf{Y}$ and the symmetry group $\mathcal{G}$, but not the underlying system \eqref{eqn:general_system} itself.

\paragraph{Step (i): Identification and parametrisation of the spectral submanifold $\mathcal{W}(E)$ from data}
Our ultimate goal in Step~(i) will be to identify equivariant $\mathbf{U}_1, \mathbf{V}_1$ and $\mathbf{h}$ from data such that the following equivariant least-squares objective is minimised:
\begin{align}\label{eqn:equivariant_least_squares}
  \sum_{i=1}^{N} \big\|\, {\mathbf{y}}_i  - {\mathbf{U}}_1\boldsymbol{\eta}_i
  - \mathbf{h}(\boldsymbol{\eta}_i) \,\big\|_{\mathcal{G}}^2,\,\, \text{where}\,\, \boldsymbol{\eta}_i = \mathbf{V}_1^\top\mathbf{y}_i,
\end{align}
together with the graph parametrisation
\begin{align}\label{eqn:graph_parametrisation_data}
  \mathbf{y} = \mathbf{U}_1\boldsymbol{\eta} + \mathbf{h}(\boldsymbol{\eta}),
  \qquad
  \mathbf{h}(\boldsymbol{\eta}) = \sum_{k=2}^M \mathbf{W}_k\,\boldsymbol{\phi}_{k}(\boldsymbol{\eta}), \quad \mathbf{V}_1^\top \mathbf{W}_k = 0, \quad k=2,\dots,M.
\end{align}
To begin with, we note that the equivariant inner product $\langle \cdot, \cdot \rangle_\mathcal{G}$ can be efficiently computed using the Gram matrix
\begin{align*}
  \mathbf{P}_{\mathcal{G}}=\frac{1}{|\mathcal{G}|}\sum_{S\in\mathcal{G}}S^\top S.
\end{align*}
In particular, if we let $\mathbf{P}_{\mathcal{G}} = \mathbf{L}\mathbf{L}^\top$ be the Cholesky factorisation of $\mathbf{P}_{\mathcal{G}}$, then we can compute the equivariant inner product as $\langle \mathbf{v}, \mathbf{w} \rangle_\mathcal{G} = \langle \mathbf{L}^\top \mathbf{v}, \mathbf{L}^\top \mathbf{w} \rangle$. We refer to the coordinates $\widetilde{\mathbf{y}}_i = \mathbf{L}^\top \mathbf{y}_i$ as the whitened coordinates, and we denote the corresponding whitened data matrix as $\widetilde{\mathbf{Y}} = [\widetilde{\mathbf{y}}_1,\dots,\widetilde{\mathbf{y}}_N]$. In these coordinates, the least-squares objective \eqref{eqn:equivariant_least_squares} can be rewritten as
\begin{align}\label{eqn:equivariant_least_squares_whitened}
  \sum_{i=1}^{N} \big\|\, \widetilde{\mathbf{y}}_i  - \widetilde{\mathbf{U}}_1\boldsymbol{\eta}_i
  - \widetilde{\mathbf{h}}(\boldsymbol{\eta}_i) \,\big\|^2,\,\, \text{where}\,\, \boldsymbol{\eta}_i = \widetilde{\mathbf{V}}_1^\top\widetilde{\mathbf{y}}_i,
\end{align}
where $\widetilde{\mathbf{U}}_1 = \mathbf{L}^\top \mathbf{U}_1$ and $\widetilde{\mathbf{V}}_1 = \mathbf{L}^{-1}\mathbf{V}_1$ are the whitened versions of $\mathbf{U}_1$ and $\mathbf{V}_1$, and $\widetilde{\mathbf{h}}(\boldsymbol{\eta}) = \mathbf{L}^\top \mathbf{h}(\boldsymbol{\eta})$ is the whitened version of $\mathbf{h}$. The advantage of this change of coordinates is that we can now work with the standard inner product in the whitened coordinates, which allows us to use standard SVD to solve {linear} $\|\cdot\|_{\mathcal{G}}$-least-squares problems.

\paragraph{Step (i.1): Symmetry-adapted initialisation: fixing the similarity class of $E$}

To begin with, a central property that we want to incorporate in our parametrisation of $\mathcal{W}(E)$ is the $\mathcal{G}$-invariance of $E$ (cf.\ Theorem~\ref{thm:equivariance-ssm}). The identification of such an invariant $E$ necessarily involves a discrete choice which we fix in the symmetry adapted initialisation. In particular, we will show in the following that $d$-dimensional $\mathcal{G}$-invariant subspaces of $\mathbb{R}^n$ can be characterised through similarity classes of the representation of $\mathcal{G}$ restricted to the corresponding subspace. Let us make this more precise: for any $\mathcal{G}$-invariant $d$-dimensional subspace
$F\subseteq\mathbb{R}^n$ with basis matrix $\mathbf{U}\in\mathbb{R}^{n\times d}$ (full
column rank), the columns of $S\mathbf{U}$ lie again in $F$, so that
\begin{align}\label{eqn:restricted_representation1}
  S\mathbf{U}=\mathbf{U}\,\mathbf{R}_{\mathbf{U}}(S),
  \,\,\text{with\,\,}
  \mathbf{R}_{\mathbf{U}}(S)
  =\big(\mathbf{U}^\top\mathbf{U}\big)^{-1}\mathbf{U}^\top S\,\mathbf{U}
  \in GL(d,\mathbb{R}),
  \,\, S\in\mathcal{G},
\end{align}
and injectivity of $\mathbf{U}$ gives
$\mathbf{R}_{\mathbf{U}}(S_1S_2)=\mathbf{R}_{\mathbf{U}}(S_1)\mathbf{R}_{\mathbf{U}}(S_2)$,
i.e.\ $\mathbf{R}_{\mathbf{U}}$ corresponds to the matrix representation of $\mathcal{G}|_F$ in the coordinates $\mathbf{U}$ on $F$. The similarity class $[\mathbf{R}_F]$ of matrices $\mathbf{R}_{\mathbf{U}}$ is then, by definition, invariant under a change of basis of $F$, and thus induces an equivalence relation on $\mathcal{G}$-invariant subspaces.
\begin{definition}[Equivalence of $\mathcal{G}$-invariant subspaces]
  We say $F_1,F_2\in\mathcal{S}_{d,\mathcal{G}}$ are related, $F_1\sim F_2$, if and only if the similarity classes of corresponding matrix representations of $\mathcal{G}$ are equal, $[\mathbf{R}_{F_1}]=[\mathbf{R}_{F_2}]$.
\end{definition}
Writing
\begin{align*}
  \mathcal{S}_{d,\mathcal{G}}:=\{E\subset \mathbb{R}^n,\ \mathrm{dim}E=d,\ \text{$E$ is $\mathcal{G}$-invariant}\},
\end{align*}
it is then straightforward to show that this relation induces an equivalence relation on the set $\mathcal{S}_{d,\mathcal{G}}$, and thus $\mathcal{S}_{d,\mathcal{G}}$ can be written as a disjoint union of such equivalence classes, henceforth referred to as \textit{similarity classes of invariant subspaces}. In addition, we have the following result.
\begin{proposition}[Classes of invariant subspaces]\label{prop:invariant_subspace_classes}
Let $\mathcal{G}\subset GL(n,\mathbb{R})$ be a finite linear group and $1\le d\le n$.
\begin{enumerate}[label=(\roman*)]
  \item \emph{Finiteness.} Up to similarity, $\mathcal{G}$ admits only finitely many
  $d$-dimensional matrix representations. In particular, $F\mapsto[\mathbf{R}_F]$
  partitions $\mathcal{S}_{d,\mathcal{G}}$ into finitely many similarity classes $[\mathcal{E}_1],\dots,[\mathcal{E}_L]$ of invariant subspaces.
  \item \emph{Rigidity.} If $[0,1]\ni t\mapsto\mathbf{U}(t)\in\mathbb{R}^{n\times d}$
  is continuous with full column rank and every $F(t):=\operatorname{span}\mathbf{U}(t)$
  is $\mathcal{G}$-invariant, then $\mathbf{R}_{\mathbf{U}(t)}\sim\mathbf{R}_{\mathbf{U}(0)}$
  for all $t$, i.e. a continuous path of invariant subspaces never leaves its class, $[F(t)]=[F(0)]$.
\end{enumerate}
\end{proposition}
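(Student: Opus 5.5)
The proof of both parts rests on character theory for finite groups. Because $\mathcal{G}$ is finite, every real matrix representation $\mathbf{R}:\mathcal{G}\to GL(d,\mathbb{R})$ is completely reducible (Maschke's theorem; concretely, averaging an inner product over $\mathcal{G}$ as in Definition~\ref{def:equivariant_inner_product} makes $\mathbf{R}$ orthogonal). Complexifying and using the standard character theory, a representation is determined up to similarity by its character $\chi_{\mathbf{R}}(S)=\operatorname{tr}\mathbf{R}(S)$. For real representations, two real representations that are similar over $\mathbb{C}$ are already similar over $\mathbb{R}$, by the usual Noether--Deuring argument: $\det(\mathbf{X}+s\mathbf{Y})$ is a nonzero polynomial in $s$, so it has a real non-root.

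\emph{Part (i).} The complex irreducible representations $\rho_1,\dots,\rho_r$ of $\mathcal{G}$ are finite in number, with $r$ equal to the number of conjugacy classes. By complete reducibility, every $d$-dimensional representation decomposes as $\bigoplus_i m_i\rho_i$ with $\sum_i m_i\dim\rho_i=d$ and $m_i\in\mathbb{N}$. There are only finitely many such multiplicity vectors, so there are only finitely many characters. Hence there are finitely many similarity classes, first over $\mathbb{C}$ and then, by the remark above, over $\mathbb{R}$. Since $F\mapsto[\mathbf{R}_F]$ is well defined by the discussion preceding the definition, its image is finite, and its fibres give the partition $[\mathcal{E}_1],\dots,[\mathcal{E}_L]$.

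\emph{Part (ii).} Here I would use continuity together with the discreteness of characters. Since $\mathbf{U}(t)$ has full column rank, $(\mathbf{U}(t)^\top\mathbf{U}(t))^{-1}$ depends continuously on $t$. By \eqref{eqn:restricted_representation1}, each map $t\mapsto\mathbf{R}_{\mathbf{U}(t)}(S)$ is therefore continuous, and so is $t\mapsto\chi_t(S)=\operatorname{tr}\mathbf{R}_{\mathbf{U}(t)}(S)$ for every $S\in\mathcal{G}$.

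The character is also determined by the multiplicities: $\chi_t=\sum_i m_i(t)\chi_{\rho_i}$, where
\[
m_i(t)=\langle\chi_t,\chi_{\rho_i}\rangle=\tfrac1{|\mathcal{G}|}\sum_{S}\chi_t(S)\overline{\chi_{\rho_i}(S)}
\]
by orthogonality of irreducible characters. Thus each $m_i(t)$ is a continuous function of $t$ taking values in $\mathbb{N}$. On the connected interval $[0,1]$ it must be constant. Consequently $\chi_t=\chi_0$ for all $t$, so $\mathbf{R}_{\mathbf{U}(t)}\sim\mathbf{R}_{\mathbf{U}(0)}$ over $\mathbb{C}$, and hence over $\mathbb{R}$.

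An alternative for (ii) that avoids the multiplicity decomposition is to note that $\chi_t(S)$ is a sum of $d$ roots of unity of order dividing $|\mathcal{G}|$. Such sums form a finite set, so a continuous function into it on $[0,1]$ is constant.

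The main obstacle is not the continuity argument but the passage from complex to real similarity: the statement is about real matrix similarity, while character theory most directly gives complex similarity. I will handle this with the polynomial-determinant lemma. An intertwiner $\mathbf{X}+i\mathbf{Y}$ with $\mathbf{X},\mathbf{Y}$ real gives real intertwiners $\mathbf{X}$ and $\mathbf{Y}$, and some real combination $\mathbf{X}+s\mathbf{Y}$ is invertible. If needed, it would suffice to cite this fact as standard rather than prove it.
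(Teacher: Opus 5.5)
Your proposal is correct and follows essentially the same route as the paper: only finitely many characters occur, equal characters give similarity over $\mathbb{C}$, you descend to real similarity via a real non-root of $\det(\mathbf{X}+s\mathbf{Y})$, and in (ii) you combine continuity of $\chi_t(S)$ with its discreteness. The only differences are cosmetic. You count characters via multiplicity vectors of the irreducible decomposition, and in (ii) via the integer-valued multiplicities $m_i(t)$, whereas the paper uses the finite set of sums of $d$ roots of unity; that is exactly the argument you give as your alternative.
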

\begin{proof}
Since every $S\in\mathcal{G}$ satisfies $S^{|\mathcal{G}|}=\mathbf{I}$ (Lagrange's theorem), we have
\begin{align*}
  \mathbf{R}(S)^{|\mathcal{G}|} = \mathbf{R}(S^{|\mathcal{G}|}) = \mathbf{R}(\mathbf{I}) = \mathbf{I},
\end{align*}
thus all eigenvalues of $\mathbf{R}(S)$ must be $|\mathcal{G}|$-th roots of unity. We now resort to the use of character theory of linear representations of groups as per \cite[\S 2]{serre1977linear}: firstly we write $\chi_{\mathbf{R}}(S):=\operatorname{tr}\mathbf{R}(S)$ for the character of a representation $\mathbf{R}$. Secondly, we observe that $\chi_{\mathbf{R}}$ maps $\mathcal{G}$ into the finite set $\Sigma_d$ of sums of $d$ such roots of unity, so only finitely many characters occur. Now we note that Corollary 2 of \cite[\S2.3]{serre1977linear} states that representations with the same character are similar over $\mathbb{C}$, i.e.\ if $\chi_{\mathbf{R}}=\chi_{\mathbf{R}'}$, then there is an invertible $\mathbf{X}\in GL(d,\mathbb{C})$ such that $\mathbf{R}'\mathbf{X}=\mathbf{X}\mathbf{R}$. Writing $\mathbf{P}=\Re\mathbf{X},\mathbf{Q}=\Im\mathbf{X}$ for the real and imaginary part respectively, we have
\begin{align*}
  \mathbf{R}'\mathbf{P}=\mathbf{P}\mathbf{R},\quad \mathbf{R}'\mathbf{Q}=\mathbf{Q}\mathbf{R},
\end{align*}
and since $f:t\mapsto \det (P+t\mathbf{Q})$ is a non-zero polynomial on $\mathbb{C}$ ($f(i)=\operatorname{det}(\mathbf{X})\neq 0$), there is a real $t^*$ such that $f(t^*)\neq 0$, i.e. for which $\mathbf{P}+t^*\mathbf{Q}$ is invertible. Therefore $\mathbf{R}'$ and $\mathbf{R}$ are similar over $\mathbb{R}$ and so $[\mathbf{R}]=[\mathbf{R}']$. This completes the proof of (i).

For (ii), we note that the formula in
\eqref{eqn:restricted_representation1} shows that
$\chi_t(S):=\operatorname{tr}\mathbf{R}_{\mathbf{U}(t)}(S)$ is continuous in $t$. Moreover, $\mathbf{R}_{\mathbf{U}(t)}(S)$ represents the restriction of $S$ to $F(t)$, so its spectrum consists of $d$ eigenvalues of $S$, counted with multiplicity, and thus
$\chi_t(S)$ can only take finitely many values. Any continuous function taking a finite number of values on $[0,1]$ is constant, so $\chi_t=\chi_0$ and the similarity $\mathbf{R}_{\mathbf{U}(t)}\sim\mathbf{R}_{\mathbf{U}(0)}$ follows as in the proof of (i).
\end{proof}

The immediate consequence of this result is that once we fix the invariant subspace similarity class $[E]$ of $E$, no continuous optimisation algorithm on $\mathcal{S}_{d,\mathcal{G}}$ can move the equivariant subspace away from $[E]$. Thus it makes sense to fix $[E]$ once (at initialisation) through the representation $\mathbf{R}$ of $\mathcal{G}$ on $E$ before using continuous optimisation in Step~(i.3) to move inside $[E]$ jointly with the graph parametrisation of $\mathcal{W}(E)$. Conveniently it is this rigidity which allows us to fix the equivariant basis in Step~(i.2) thus resulting in a tangible and efficient algorithm. This initialisation is precisely the purpose of Step~(i.1): we identify from data the correct class $[E_0]$ (equivalently, the restricted representation $\mathbf{R}_0$ of $\mathcal{G}$) together with an initial subspace $E_0$ within it.

To enforce $\mathcal{G}$-invariance of $E_0$ given a finite data sample, we symmetrise the data by orbit augmentation. In particular, representing the action of an element $S \in \mathcal{G}$ in the whitened coordinates as $\widetilde{S} := \mathbf{L}^\top S \mathbf{L}^{-\top} \in O(n,\mathbb{R})$ we can, for finite symmetry groups $\mathcal{G}$ define the orbit-augmented snapshot matrix
\begin{align}\label{eqn:orbit_augmented_snapshot_matrix_Y_G}
  \widetilde{\mathbf{Y}}_{\mathcal{G}} := \big[\,\widetilde{S}\widetilde{\mathbf{Y}}\,\big]_{S \in \mathcal{G}}
  = \big[\,\widetilde{S}_1\widetilde{\mathbf{Y}} \mid \cdots \mid \widetilde{S}_{|\mathcal{G}|}\widetilde{\mathbf{Y}}\,\big]
  \in \mathbb{R}^{n \times |\mathcal{G}| N}.
\end{align}
\begin{lemma}\label{lem:invariance_of_eigenspaces_orbit_augmented}
  Any eigenspace of $\widetilde{\mathbf{Y}}_{\mathcal{G}}\widetilde{\mathbf{Y}}_{\mathcal{G}}^\top$ is invariant under the whitened action of the symmetry group $\mathcal{G}$. In particular, the leading left singular vectors of $\widetilde{\mathbf{Y}}_{\mathcal{G}}$ span a $\mathcal{G}$-invariant subspace provided $\sigma_d>\sigma_{d+1}$, where $\sigma_d$ and $\sigma_{d+1}$ are the $d$-th and $(d+1)$-th singular values of $\widetilde{\mathbf{Y}}_{\mathcal{G}}$, respectively.
\end{lemma}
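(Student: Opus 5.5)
The plan is to show that the Gram matrix $\mathbf{K}:=\widetilde{\mathbf{Y}}_{\mathcal{G}}\widetilde{\mathbf{Y}}_{\mathcal{G}}^\top$ commutes with every whitened group element $\widetilde{S}$. Then the eigenspaces of $\mathbf{K}$ are invariant, because commuting operators preserve each other's eigenspaces. Expanding the block structure in \eqref{eqn:orbit_augmented_snapshot_matrix_Y_G}, we have
\begin{align*}
  \mathbf{K}=\sum_{S\in\mathcal{G}}\widetilde{S}\,\widetilde{\mathbf{Y}}\widetilde{\mathbf{Y}}^\top\widetilde{S}^\top .
\end{align*}
The first step is to verify that each $\widetilde{S}=\mathbf{L}^\top S\mathbf{L}^{-\top}$ is orthogonal, as claimed in the text. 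This follows from $S^\top\mathbf{P}_{\mathcal{G}}S=\mathbf{P}_{\mathcal{G}}$, which comes from reindexing the finite sum defining $\mathbf{P}_{\mathcal{G}}$ by $T\mapsto TS$. With $\mathbf{P}_{\mathcal{G}}=\mathbf{L}\mathbf{L}^\top$ this gives $\widetilde{S}^\top\widetilde{S}=\mathbf{L}^{-1}S^\top\mathbf{L}\mathbf{L}^\top S\mathbf{L}^{-\top}=\mathbf{I}$. The second step is to note that $S\mapsto\widetilde{S}$ is a homomorphism, since it is conjugation by $\mathbf{L}^\top$. The set $\{\widetilde{S}:S\in\mathcal{G}\}$ is therefore a group, and left multiplication by a fixed $\widetilde{T}$ permutes it.

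With these in hand, for any $T\in\mathcal{G}$ we compute, using $\widetilde{T}^\top=\widetilde{T}^{-1}$,
\begin{align*}
  \widetilde{T}\mathbf{K}\widetilde{T}^{\top}=\sum_{S\in\mathcal{G}}\widetilde{TS}\,\widetilde{\mathbf{Y}}\widetilde{\mathbf{Y}}^\top\widetilde{TS}^\top=\mathbf{K}.
\end{align*}
This means $\widetilde{T}\mathbf{K}=\mathbf{K}\widetilde{T}$. So if $\mathbf{K}\mathbf{v}=\lambda\mathbf{v}$, then $\mathbf{K}\widetilde{T}\mathbf{v}=\lambda\widetilde{T}\mathbf{v}$, and each eigenspace of $\mathbf{K}$ is $\widetilde{T}$-invariant. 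This proves the first claim.

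For the second claim, recall that the left singular vectors of $\widetilde{\mathbf{Y}}_{\mathcal{G}}$ are eigenvectors of $\mathbf{K}$, with eigenvalues $\sigma_j^2$. If $\sigma_d>\sigma_{d+1}$, the span of the leading $d$ left singular vectors is well defined and equals the direct sum of the eigenspaces of $\mathbf{K}$ with eigenvalues $\geq\sigma_d^2$. This holds regardless of how the individual vectors within a degenerate eigenspace are chosen. A direct sum of invariant subspaces is invariant, which completes the argument.

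The only point requiring care is this last one. Without the gap, the $d$-th singular value could share an eigenspace with the $(d+1)$-th, and an arbitrary choice of vectors inside that eigenspace need not span an invariant subspace. The gap hypothesis is exactly what rules this out. Everything else is a routine reindexing of finite sums.
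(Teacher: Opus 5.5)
Your proof is correct and follows the paper's approach: show that $\widetilde{\mathbf{Y}}_{\mathcal{G}}\widetilde{\mathbf{Y}}_{\mathcal{G}}^\top$ commutes with each $\widetilde{S}$ by reindexing the group sum and using orthogonality of $\widetilde{S}$. You also verify the orthogonality and homomorphism properties that the paper takes for granted. You further spell out why the gap $\sigma_d>\sigma_{d+1}$ makes the leading singular subspace a sum of whole eigenspaces, a point the paper leaves implicit.
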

\begin{proof}
  It suffices to show that, for any $S_0\in\mathcal{G}$, the left Gram matrix $\widetilde{\mathbf{Y}}_{\mathcal{G}}\widetilde{\mathbf{Y}}_{\mathcal{G}}^\top$ commutes with $\widetilde{S}_0$. We have
  \begin{align*}
    \widetilde{S}_0 \widetilde{\mathbf{Y}}_{\mathcal{G}}\widetilde{\mathbf{Y}}_{\mathcal{G}}^\top &= \sum_{S\in\mathcal{G}} \widetilde{S}_0 \widetilde{S}\,\widetilde{\mathbf{Y}}\widetilde{\mathbf{Y}}^\top\,\widetilde{S}^\top=\sum_{S_1=S_0S\in\mathcal{G}} \widetilde{S}_1\,\widetilde{\mathbf{Y}}\widetilde{\mathbf{Y}}^\top\,\widetilde{S}_1^\top\widetilde{S}_0 = \widetilde{\mathbf{Y}}_{\mathcal{G}}\widetilde{\mathbf{Y}}_{\mathcal{G}}^\top \widetilde{S}_0,
  \end{align*}
  where in the second equality we changed the dummy variable of the sum to $S_1=S_0S$ which is still in $\mathcal{G}$ since $\mathcal{G}$ is a group, and used orthogonality of $\widetilde{S}_0$.
\end{proof}

 The leading left singular vectors of $\widetilde{\mathbf{Y}}_{\mathcal{G}}$ then span an exactly $\mathcal{G}$-invariant subspace, which we take as the initial tangent space of the SSM. To obtain a $d$-dimensional reduced order model we thus compute the $d$-dimensional truncated SVD on the augmented snapshot matrix $\widetilde{\mathbf{Y}}_{\mathcal{G}} $,
\begin{align*}
  \widetilde{\mathbf{Y}}_{\mathcal{G}}  \approx \widetilde{\mathbf{U}}\widetilde{\boldsymbol{\Sigma}}\widetilde{\mathbf{V}}^\top,
\end{align*}
where $\widetilde{\mathbf{U}} \in \mathbb{R}^{n\times d}$, $\widetilde{\boldsymbol{\Sigma}} \in \mathbb{R}^{d\times d}$, and $\widetilde{\mathbf{V}} \in \mathbb{R}^{|\mathcal{G}|N\times d}$.

\begin{remark}
  In practice, we do not need to form the orbit-augmented snapshot matrix $\widetilde{\mathbf{Y}}_{\mathcal{G}}$ explicitly. Since $\widetilde{\mathbf{Y}}_{\mathcal{G}}\widetilde{\mathbf{Y}}_{\mathcal{G}}^{\top} = \sum_{S\in\mathcal{G}} \tilde{S}\,\widetilde{\mathbf{Y}}\widetilde{\mathbf{Y}}^{\top}\tilde{S}^{\top}$, its truncated SVD reduces to a symmetric eigenproblem, which we solve for large state dimensions by randomised subspace iteration \cite{halko2011finding}, requiring only matrix-vector products with $\widetilde{\mathbf{Y}}$, its transpose, and the representation matrices.
\end{remark}

Writing $\widetilde{\mathbf{U}}_0:=\widetilde{\mathbf{U}}$ for the orthonormal leading left singular
vectors of $\widetilde{\mathbf{Y}}_{\mathcal{G}}$, the initial primal and dual bases of $E$
(cf.\ \S\ref{sec:equivariant_ssm_reduction}) are then chosen as
\begin{align*}
  \mathbf{U}_{1,0} = \mathbf{L}^{-\top}\widetilde{\mathbf{U}}_0, \qquad
  \mathbf{V}_{1,0} = \mathbf{L}\,\widetilde{\mathbf{U}}_0 .
\end{align*}
The representation $\mathbf{R}_0$ of $\mathcal{G}$ on $E$ is then fixed by $\mathbf{R}_0(S)=\mathbf{V}_{1,0}^{\top}S\mathbf{U}_{1,0}$.

\begin{remark}
Since $\widetilde{\mathbf{U}}_0$ has orthonormal columns, the reduced coordinate
$\boldsymbol{\eta}=\mathbf{V}_{1,0}^\top\mathbf{y}=\widetilde{\mathbf{U}}_0^\top\widetilde{\mathbf{y}}$
inherits the scale of the data along $E$: its entries are of the order of the singular values
$\widetilde{\boldsymbol{\Sigma}}$, which are typically small and of disparate magnitude across the
retained modes, so the monomials $\boldsymbol{\eta}^{\mathbf{m}}$, $2\le|\mathbf{m}|\le M$, span many
orders of magnitude and the least-squares system \eqref{eqn:equivariant_least_squares} becomes severely ill-conditioned. As in
\cite{SSMLearnpaper22,FastSSMpaper23}, we remedy this by rescaling the reduced coordinate, in a way that leaves the reduced model unchanged. We use the diagonal rescaling
\begin{align}\label{eqn:reduced_coordinate_rescaling}
  \mathbf{D}=\frac{1}{\sqrt{|G|N}}\widetilde{\boldsymbol{\Sigma}},
\end{align}
and expand the graph in the rescaled coordinate $\mathbf{D}^{-1}\boldsymbol{\eta}$, i.e.\
$\widetilde{\mathbf{h}}(\boldsymbol{\eta})=\sum_{k=2}^{M}\widetilde{\mathbf{W}}_k\,\boldsymbol{\phi}_k(\mathbf{D}^{-1}\boldsymbol{\eta})$.
This scaling is compatible with the equivariant structure of the problem since $\widetilde{\boldsymbol{\Sigma}}$ commutes with the representation $\mathbf{R}_0$ of $\mathcal{G}$ on $E$. Indeed, combining Lemma~\ref{lem:invariance_of_eigenspaces_orbit_augmented} with the $\mathcal{G}$-invariance of $E$, i.e.\
$\widetilde{S}\widetilde{\mathbf{U}}_0=\widetilde{\mathbf{U}}_0\mathbf{R}_0(S)$, and the exact
eigen-relation
$\widetilde{\mathbf{Y}}_{\mathcal{G}}\widetilde{\mathbf{Y}}_{\mathcal{G}}^\top\widetilde{\mathbf{U}}_0
=\widetilde{\mathbf{U}}_0\widetilde{\boldsymbol{\Sigma}}^2$, we find that
$\widetilde{\boldsymbol{\Sigma}}^2$ commutes with $\mathbf{R}_0$, thus so does $\mathbf{D}$. The rescaling \eqref{eqn:reduced_coordinate_rescaling} is similar to the rescaling used in \cite{SSMLearnpaper22,FastSSMpaper23}, but is more convenient for the equivariant setting since it is expressed in terms of the whitened coordinates.
\end{remark}

\paragraph{Step (i.2): Equivariant bases for the chart $\widetilde{\mathbf{U}}_1$ and the graph
coefficients $\widetilde{\mathbf{W}}_k$ of $\widetilde{\mathbf{h}}$}
Step~(i.1) fixes the reduced representation
$\mathbf{R}_0(S)=\widetilde{\mathbf{U}}_0^\top\widetilde{S}\,\widetilde{\mathbf{U}}_0$
of $\mathcal{G}$ on $E$, in this step we will construct the equivariant bases for the chart $\widetilde{\mathbf{U}}_1$ and the graph coefficients $\widetilde{\mathbf{W}}_k$. To keep the representation of $\mathcal{G}$ unchanged as we refine the
chart, $\widetilde{\mathbf{U}}_1$ must satisfy $\widetilde{S}\,\widetilde{\mathbf{U}}_1
=\widetilde{\mathbf{U}}_1\mathbf{R}_0(S)$ for all $S\in\mathcal{G}$, which is precisely the
degree-one instance of the equivariance condition of Theorem~\ref{thm:equivariant_taylor}
(the case $k=1$, $\mathbf{D}_1=\mathbf{R}_0$). The chart and the graph coefficients are therefore
the admissible blocks of equivariant monomial coefficients: letting $\widetilde{\mathbf{B}}_k$ denote a
whitened orthonormal basis of the nullspace of the stacked operator $\mathbf{M}_k$
\eqref{eqn:def_stacked_operator} built from $\mathbf{R}_0$, we write
\begin{align}\label{eqn:chart_and_graph_bases}
  \operatorname{vec}(\widetilde{\mathbf{U}}_1)=\widetilde{\mathbf{B}}_1\,\mathbf{c},
  \quad \mathbf{c}\in\mathbb{R}^{p_1},
  \qquad
  \operatorname{vec}(\widetilde{\mathbf{W}}_k)=\widetilde{\mathbf{B}}_k\,\mathbf{w}_k,
  \quad \mathbf{w}_k\in\mathbb{R}^{p_k},\ \ k=2,\dots,M.
\end{align}
In this parametrisation we have, for any $S\in\mathcal{G}$,
\begin{align*}
  \widetilde{S}\widetilde{\mathbf{U}}_1=\widetilde{\mathbf{U}}_1\mathbf{R}_0(S),
\end{align*}
by construction, thus the reduced representation of $\mathcal{G}$ on $E$ is preserved. This ensures also that the parametrisations $\operatorname{unvec}(\widetilde{\mathbf{B}}_k\mathbf{w}_k), 2\leq k\leq M,$ satisfy the linear constraints \eqref{eqn:linear_constraints_Taylor_series} for any choice of $\mathbf{c}$ and, hence, the bases $\widetilde{\mathbf{B}}_k$ are independent of $\mathbf{c}$ and never need recomputing during the optimisation of Step~(i.3). Finally we note that the rescaling $\mathbf{D}$ introduced in \eqref{eqn:reduced_coordinate_rescaling} commutes with $\mathbf{R}_0$, meaning that the $\mathcal{G}$-equivariant Taylor series expansion for the rescaled coordinate
$\mathbf{D}^{-1}\boldsymbol{\eta}$ carries the same representation $\mathbf{R}_0$, and the bases
$\widetilde{\mathbf{B}}_k$ are unchanged by the rescaling.

\paragraph{Step (i.3): The reduced constrained least-squares problem}
Substituting \eqref{eqn:chart_and_graph_bases} into the whitened objective
\eqref{eqn:equivariant_least_squares_whitened} reduces the fit to a nonlinear constrained least-squares problem in the finite
coefficient vectors $\mathbf{c}$ and $\mathbf{w}=(\mathbf{w}_2,\dots,\mathbf{w}_M)$,
\begin{align}\label{eqn:reduced_constrained_ls}
  \min_{\mathbf{c},\,\mathbf{w}}\ \sum_{i=1}^{N}\Big\|\,\widetilde{\mathbf{y}}_i
  -\widetilde{\mathbf{U}}_1\boldsymbol{\eta}_i
  -\sum_{k=2}^{M}\widetilde{\mathbf{W}}_k\,\boldsymbol{\phi}_k(\mathbf{D}^{-1}\boldsymbol{\eta}_i)\,\Big\|^2,
  \qquad \boldsymbol{\eta}_i=\widetilde{\mathbf{U}}_1^\top\widetilde{\mathbf{y}}_i,
\end{align}
where $\widetilde{\mathbf{U}}_1=\operatorname{unvec}(\widetilde{\mathbf{B}}_1\mathbf{c})$, $\widetilde{\mathbf{W}}_k=\operatorname{unvec}(\widetilde{\mathbf{B}}_k\mathbf{w}_k)$ are the matrix forms of
$\widetilde{\mathbf{B}}_1\mathbf{c}$, $\widetilde{\mathbf{B}}_k\mathbf{w}_k$, subject to
\begin{align}\label{eqn:reduced_constraints}
  \text{(a)}\ \ \widetilde{\mathbf{U}}_1^\top\widetilde{\mathbf{U}}_1=\mathbf{I}_d,
  \qquad
  \text{(b)}\ \ \widetilde{\mathbf{U}}_1^\top\widetilde{\mathbf{W}}_k=\mathbf{0},\ \ k=2,\dots,M,
\end{align}
where (b) is
the graph condition, placing the nonlinear part off the tangent space so that
$\boldsymbol{\eta}_i=\widetilde{\mathbf{U}}_1^\top\widetilde{\mathbf{y}}_i$ is consistent. We solve this using a standard quasi-Newton constrained solver, warm-started at the $\mathbf{c}$ representing $\widetilde{\mathbf{U}}_0$ and with $\mathbf{w}_k=\mathbf{0}$.

\paragraph{Step (ii): Computation of the reduced dynamics on the SSM in extended normal form style} 
We first estimate the
linear part $\mathbf{B}=D\mathbf{r}(\mathbf{0})$ of the reduced dynamics $\dot{\boldsymbol{\eta}}=\mathbf{r}(\boldsymbol{\eta})$
by regression from the projected snapshot data $\boldsymbol{\Xi},\boldsymbol{\Xi}'\in\mathbb{R}^{d\times (N-1)}$, given by
\begin{align*}
  \boldsymbol{\Xi} = [\boldsymbol{\eta}_1,\dots,\boldsymbol{\eta}_{N-1}],\quad
  \boldsymbol{\Xi}' = [\boldsymbol{\eta}_2,\dots,\boldsymbol{\eta}_{N}],
\end{align*}
where $\boldsymbol{\eta}_i = \mathbf{V}_1^\top\mathbf{y}_i$ are the projected data points in the reduced coordinates. Since the true (unknown) value of $\mathbf{B}$ is $\mathcal{G}|_{E}$-equivariant, we proceed similarly to Step~(i) by forming the orbit-augmented projected snapshot matrices
\begin{align*}
  \boldsymbol{\Xi}_{\mathcal{G}} = [\,S|_E\boldsymbol{\Xi}\,]_{S\in\mathcal{G}} \in \mathbb{R}^{d\times |\mathcal{G}| (N-1)}, \qquad
  \boldsymbol{\Xi}'_{\mathcal{G}} = [\,S|_E\boldsymbol{\Xi}'\,]_{S\in\mathcal{G}} \in \mathbb{R}^{d\times |\mathcal{G}| (N-1)},
\end{align*}
analogously to \eqref{eqn:orbit_augmented_snapshot_matrix_Y_G}, where $S|_E=\mathbf{V}_1^\top S\mathbf{U}_1$, and then compute the least-squares estimate of $\mathbf{B}_{\Delta t}=D\boldsymbol{\phi}_{\mathbf{r}}^{\Delta t}(\mathbf{0})=e^{\mathbf{B}\Delta t}$, the linearised one-step map, from the orbit-augmented data as
\begin{align*}
  \mathbf{B}_{\Delta t} = \operatorname*{argmin}_{\mathbf{B}_{\Delta t}}\sum_{S\in\mathcal{G}}\sum_{i=1}^{N-1}\big\|\,S|_E\boldsymbol{\eta}_{i+1}-\mathbf{B}_{\Delta t}S|_E\boldsymbol{\eta}_i\,\big\|_2^2,
  \qquad
  \boldsymbol{\eta}_i = \mathbf{V}_1^\top\mathbf{y}_i,
\end{align*}
i.e.\ $\mathbf{B}_{\Delta t}
= \boldsymbol{\Xi}'_{\mathcal{G}}\boldsymbol{\Xi}_{\mathcal{G}}^{\dagger}
= \boldsymbol{\Xi}'_{\mathcal{G}}\boldsymbol{\Xi}_{\mathcal{G}}^{\top}
  \big(\boldsymbol{\Xi}_{\mathcal{G}}\boldsymbol{\Xi}_{\mathcal{G}}^{\top}\big)^{-1}$
is the dynamic mode decomposition in the reduced coordinates $\boldsymbol{\eta}$.

Since $\mathbf{B}_{\Delta t}$
is the linearised time-$\Delta t$ map, we recover the continuous-time generator $\mathbf{B}=D\mathbf{r}(\mathbf{0})$ from \eqref{eqn:def_B_and_Lambda} by $\mathbf{B}=\tfrac{1}{\Delta t}\log\mathbf{B}_{\Delta t}$.
In practice we compute the eigendecomposition $\mathbf{B}_{\Delta t}=\mathbf{W}\operatorname{diag}(\mu_1,\dots,\mu_d)\mathbf{W}^{-1}$
and set
\begin{align*}
  \mathbf{B}=\mathbf{W}\boldsymbol{\Lambda}\mathbf{W}^{-1},\qquad
  \boldsymbol{\Lambda}=\tfrac{1}{\Delta t}\operatorname{diag}(\log\mu_1,\dots,\log\mu_d),
\end{align*}
so that the eigenvectors $\mathbf{W}$ and continuous eigenvalues $\boldsymbol{\lambda}$ form the basis of the
extended normal-form parametrisation and enter the resonance condition $\mathbf{m}\cdot\boldsymbol{\lambda}-\lambda_j$
directly. We will proceed to determine ${\mathbf{t}}$ and $\mathbf{n}$ using truncated Taylor expansions up to order $M_{\text{ROD}}$, i.e.
\begin{align}\label{eqn:expansion_for_n}
  \mathbf{n}(\mathbf{z};\mathbf{N}) &= \boldsymbol{\Lambda}\mathbf{z} + \sum_{k=2}^{M_{\text{ROD}}}\mathbf{N}_k\,\boldsymbol{\phi}_{k}(\mathbf{z}),\\\label{eqn:expansion_for_t}
  \mathbf{t}(\mathbf{z};\mathbf{T}) &= \mathbf{W}\Big(\mathbf{z} + \sum_{k=2}^{M_{\text{ROD}}}\mathbf{T}_k\,\boldsymbol{\phi}_{k}(\mathbf{z})\Big),\quad
  \mathbf{t}^{-1}(\boldsymbol{\eta};\mathbf{T}^{\star}) = \mathbf{W}^{-1}\boldsymbol{\eta} + \sum_{k=2}^{M_{\text{ROD}}}\mathbf{T}^{\star}_k\,\boldsymbol{\phi}_{k}(\mathbf{W}^{-1}\boldsymbol{\eta}).
\end{align}
To determine the coefficients $\mathbf{N}_k$ and $\mathbf{T}_k^{\star}$ we proceed in three steps: (a) support selection, (b) equivariant reparametrisation, and (c) coefficient fit.

\textit{(a) Support selection.} In the above expansions, the support of the coefficients $\mathbf{N}_k, \mathbf{T}_k$ and $\mathbf{T}_k^\star$ is determined by the near resonance condition \eqref{eqn:extended_normal_form_solution}, i.e. we set
\begin{align*}
  \operatorname{supp}(\mathbf{N}_k) = \mathcal{I}_\delta,\qquad
  \operatorname{supp}(\mathbf{T}_k) = \operatorname{supp}(\mathbf{T}_k^{\star})= \mathcal{I}_\delta^{\,c},\qquad k=2,\dots,M,
\end{align*}
where $\mathcal{I}_\delta = \{(j,\mathbf{m}) : |\operatorname{Im}(\mathbf{m}\cdot\boldsymbol{\lambda}-\lambda_j)| < \delta\}$.

\textit{(b) Equivariant reparametrisation.} Proposition~\ref{prop:equivariant_normal_form} shows that, if the support of a Taylor series is chosen according to the near-resonance condition \eqref{eqn:extended_normal_form_solution}, then the support of the coefficients is invariant under the action of the symmetry group $\mathcal{G}$. This means, that the support restriction and the equivariance constraint are compatible and may be imposed simultaneously. Concretely, fix a degree $k$ and split the degree-$k$ coefficient matrices according to the partition of the
index set $\{1,\dots,d\}\times\{|\mathbf{m}|=k\}$ into resonant and non-resonant entries,
\begin{align*}
  \mathbb{C}^{d\times N_k}=V_k^{\mathbf{n}}\oplus V_k^{\mathbf{t}},
\end{align*}
where $V_k^{\mathbf{n}}$ ($V_k^{\mathbf{t}}$) consists of those $\mathbf{C}$ supported on the resonant
indices $\mathcal{I}_\delta$ (on their complement). 
 By the proof of
Proposition~\ref{prop:equivariant_normal_form} the diagonalised action $\widehat{S}$ commutes with
$\boldsymbol{\Lambda}$, hence preserves the resonance value $\mathbf{m}\cdot\boldsymbol{\lambda}-\lambda_j$ of every
monomial; the equivariance operator
\begin{align*}
  \mathbf{M}_k[\widehat{S}](\mathbf{C})=\widehat{S}\mathbf{C}-\mathbf{C}\,\mathbf{D}_k(\widehat{S})
\end{align*}
therefore maps each of $V_k^{\mathbf{n}}$ and $V_k^{\mathbf{t}}$ into itself. We may consequently restrict
$\mathbf{M}_k[\widehat{S}]$ to each support subspace and impose equivariance there, so that the admissible
coefficients of $\mathbf{n}$ and $\mathbf{t}^{-1}$ are
\begin{align*}
  \mathbf{N}_k\in\bigcap_{S\in\mathcal{G}}\ker\!\big(\mathbf{M}_k[\widehat{S}]\big|_{V_k^{\mathbf{n}}}\big),
  \qquad
  \mathbf{T}^{\star}_k\in\bigcap_{S\in\mathcal{G}}\ker\!\big(\mathbf{M}_k[\widehat{S}]\big|_{V_k^{\mathbf{t}}}\big).
\end{align*}
If we write $\mathbf{M}_k[\widehat{S}]$ in vectorised form as $\mathbf{I}_{N_k}\otimes\widehat{S}-\mathbf{D}_k(\widehat{S})^{\!\top}\otimes\mathbf{I}_d$,
then the restriction to the support subspaces is simply the deletion of the rows and columns indexed outside the relevant support, and the admissible coefficients are the nullspace of the resulting smaller matrix, computed by SVD analogously to Step~(ii). Writing $\mathbf{B}_k^{\mathbf{n}}$ and $\mathbf{B}_k^{\mathbf{t}}$ for bases of these nullspaces we have, similarly to \eqref{eqn:chart_and_graph_bases},
\begin{align}\label{eqn:equivariant_basis_N_and_T}
  \operatorname{vec}(\mathbf{N}_k)=\mathbf{B}_k^{\mathbf{n}}\,\mathbf{c}_k^{\mathbf{n}},\qquad
  \operatorname{vec}(\mathbf{T}_k)=\mathbf{B}_k^{\mathbf{t}}\,\mathbf{c}_k^{\mathbf{t}},\qquad
  \operatorname{vec}(\mathbf{T}^{\star}_k)=\mathbf{B}_k^{\mathbf{t}}\,\mathbf{c}_k^{\mathbf{t},\star},
\end{align}
with free coefficient vectors $\mathbf{c}_k^{\mathbf{n}},\mathbf{c}_k^{\mathbf{t}}$ counting the equivariant
resonant, respectively non-resonant, monomials at degree $k$.

\textit{(c) Coefficient fit.} The final step is to use this reduced parametrisation together with the trajectory data to infer the free coefficients $\mathbf{c}_k^{\mathbf{n}},\mathbf{c}_k^{\mathbf{t},\star}$ of the reduced dynamics and the normal form transformation. We do this by firstly plugging the equivariant parametrisation \eqref{eqn:equivariant_basis_N_and_T} into the expansions \eqref{eqn:expansion_for_n}-\eqref{eqn:expansion_for_t}, and then minimising the error in the conjugacy equation \eqref{eqn:conjugacy} over the data:
\begin{align*}
  (\mathbf{c}^{\mathbf{n}},\mathbf{c}^{\mathbf{t},\star})
  =\operatorname*{argmin}_{\mathbf{c}^{\mathbf{n}},\mathbf{c}^{\mathbf{t},\star}}
  \sum_{i}\Big\|
  D\mathbf{t}^{-1}(\boldsymbol{\eta}_i)\,\dot{\boldsymbol{\eta}}_i
  -\mathbf{n}\big(\mathbf{t}^{-1}(\boldsymbol{\eta}_i)\big)\Big\|_2^2 ,
\end{align*}
where the time derivatives $\dot{\boldsymbol{\eta}}_i$ are approximated with finite differences from data (our implementation is using a sixth-order central stencil). This is a nonlinear least-squares problem in the free coefficients $(\mathbf{c}^{\mathbf{n}},\mathbf{c}^{\mathbf{t},\star})$, which we solve using Gauss--Newton, with initial condition $\mathbf{c}^{\mathbf{n}},\mathbf{c}^{\mathbf{t},\star}=\mathbf{0}$. Finally, we recover the coefficients $\mathbf{c}^{\mathbf{t}}$ of the forward transformation $\mathbf{t}$ by regression on the following linear least-squares problem:
\begin{align*}
  \mathbf{c}^{\mathbf{t}}=\operatorname*{argmin}_{\mathbf{c}^{\mathbf{t}}}\sum_{i}\Big\|
  \boldsymbol{\eta}_i-\mathbf{t}(\mathbf{t}^{-1}(\boldsymbol{\eta}_i;\mathbf{c}^{\mathbf{t},\star});\mathbf{c}^{\mathbf{t}})\Big\|_2^2.
\end{align*}

\begin{remark}
  The finite-difference estimate for the velocities $\dot{\boldsymbol{\eta}}_i$ is sufficiently accurate provided that the sampling time $\Delta t$ is small relative to the fastest timescale of the SSM dynamics. If this is not the case, we can instead use the discrete-time formulation of SSM theory as described in Appendix~\ref{app:discrete_dynamical_systems}.
\end{remark}

\begin{remark}\label{rem:nf_rescaling}
As in Step~(i.1), the raw coordinates entering the polynomial expansions
\eqref{eqn:expansion_for_n}--\eqref{eqn:expansion_for_t} must be conditioned
before coefficients are fitted: unless the data amplitudes
$\boldsymbol{\zeta}_i=\mathbf{W}^{-1}\boldsymbol{\eta}_i$ are close to unity, the
monomials $\boldsymbol{\phi}_k(\boldsymbol{\zeta})$, $2\le k\le M$, span many
orders of magnitude and the least-squares problem of (c) becomes severely
ill-conditioned. For simplicity we thus use the rescaling $\widetilde{\mathbf{z}} = \mathbf{z}/s$, $s^{2} = \frac{1}{d\,N}\sum_{i=1}^{N}
          \big\|\mathbf{W}^{-1}\boldsymbol{\eta}_i\big\|_2^2$, which commutes with $\boldsymbol{\Lambda}$ and with every $\widehat{S}$. The near-resonant support $\mathcal{I}_\delta$ of Step~(ii.a)
and the equivariant bases $\mathbf{B}_k^{\mathbf{n}},\mathbf{B}_k^{\mathbf{t}}$
of Step~(ii.b) carry over to the scaled frame verbatim.
\end{remark}

\paragraph{Summary of the eSSM algorithm} For clarity of presentation we summarise the above procedure in Algorithm~\ref{alg:eSSM}. The analogous discrete-time version of the algorithm is presented in Algorithm~\ref{alg:discrete_eSSM} in Appendix~\ref{app:discrete_dynamical_systems}. We note that, by construction, the resulting SSM reduction is exactly $\mathcal{G}$-equivariant.

\begin{algorithm}[h!]
\caption{eSSM reduction method}
\label{alg:eSSM}
\begin{algorithmic}[1]
\Statex \textbf{Inputs:} snapshot matrix $\mathbf{Y}\in\mathbb{R}^{n\times N}$, symmetry group
$\mathcal{G}$, reduced dimension $d$, manifold degree $M$, time step $\Delta t$ in $\mathbf{Y}$, resonance tolerance $\delta$, $M_{\text{ROD}}$ for reduced-order dynamics.
\Statex
\Statex \emph{Step (i): Spectral submanifold identification}
\State \textbf{Compute} the Cholesky factorisation $\mathbf{P}_{\mathcal{G}} = \mathbf{L}\mathbf{L}^\top$
of $\mathbf{P}_{\mathcal{G}} = \frac{1}{|\mathcal{G}|}\sum_{S\in\mathcal{G}} S^\top S$.
\State \textbf{Whiten} $\widetilde{\mathbf{Y}} = \mathbf{L}^\top \mathbf{Y}$ and \textbf{stack}
$\widetilde{\mathbf{Y}}_{\mathcal{G}} = \big[\,\widetilde{S}\widetilde{\mathbf{Y}}\,\big]_{S\in\mathcal{G}}$.
\State \textbf{Compute} the $d$-truncated SVD
$\widetilde{\mathbf{Y}}_{\mathcal{G}} \approx \widetilde{\mathbf{U}}_0\widetilde{\boldsymbol{\Sigma}}\widetilde{\mathbf{V}}^\top$.
\State \textbf{Freeze} 
$\mathbf{R}_0(S)=\mathbf{V}_{1,0}^{\top}S\mathbf{U}_{1,0}$, $S\in\mathcal{G}$ and \textbf{set} $\mathbf{D} = \operatorname{diag}(\widetilde{\boldsymbol{\Sigma}})/\sqrt{|\mathcal{G}|N}$.
\For{$k=1,\dots,M$}
  \State \textbf{Compute} the nullspace basis $\mathbf{B}_k$ of $\mathbf{M}_k$ via SVD and whiten it, $\widetilde{\mathbf{B}}_k=(\mathbf{I}_{N_k}\otimes\mathbf{L}^\top)\mathbf{B}_k$.
\EndFor
\State \textbf{Warm start} $\mathbf{c} \gets \widetilde{\mathbf{B}}_1^\top\operatorname{vec}(\widetilde{\mathbf{U}}_0)$,
\quad $\mathbf{w}_k \gets \mathbf{0}$, $k=2,\dots,M$
\State \textbf{Solve} the constrained nonlinear least-squares problem
\eqref{eqn:reduced_constrained_ls}--\eqref{eqn:reduced_constraints}.
\State \textbf{Recover} $\mathbf{U}_1 = \mathbf{L}^{-\top}\widetilde{\mathbf{U}}_1$,
\quad $\mathbf{V}_1 = \mathbf{L}\widetilde{\mathbf{U}}_1$,
\quad $\mathbf{W}_k = \mathbf{L}^{-\top}\widetilde{\mathbf{W}}_k$.
\Statex
\Statex \emph{Step (ii): Compute extended normal form on $\mathcal{W}(E)$}
\State \textbf{Form} the orbit-augmented reduced snapshots $\boldsymbol{\Xi}_{\mathcal{G}}=[\,S|_E\boldsymbol{\Xi}\,]_{S\in\mathcal{G}}$, $\boldsymbol{\Xi}'_{\mathcal{G}}=[\,S|_E\boldsymbol{\Xi}'\,]_{S\in\mathcal{G}}$.
\State \textbf{Estimate} the linearised one-step map $\mathbf{B}_{\Delta t}=\boldsymbol{\Xi}'_{\mathcal{G}}\boldsymbol{\Xi}_{\mathcal{G}}^{\dagger}$, and \textbf{set} $\mathbf{B}=\tfrac{1}{\Delta t}\log\mathbf{B}_{\Delta t}$.
\For{$k=2,\dots,M$}
  \State \textbf{Select} the resonant support $\mathcal{I}_\delta=\{(j,\mathbf{m}):|\operatorname{Im}(\mathbf{m}\cdot\boldsymbol{\lambda}-\lambda_j)|<\delta\}$.
  \State \textbf{Compute} the equivariant bases $\mathbf{B}_k^{\mathbf{n}},\mathbf{B}_k^{\mathbf{t}}$ of $\mathbf{M}_k[\widehat{S}]$ restricted to $\operatorname{supp}(\mathcal{I}_\delta), \operatorname{supp}(\mathcal{I}_\delta^{\,c})$.
\EndFor
\State \textbf{Fit} $(\mathbf{c}^{\mathbf{n}},\mathbf{c}^{\mathbf{t},\star})$ on the conjugacy residual \eqref{eqn:conjugacy}, initialised at $\mathbf{0}$.
\State \textbf{Recover} $\mathbf{c}^{\mathbf{t}}$ by linear least squares and \textbf{assemble} $\mathbf{N}_k,\mathbf{T}_k,\mathbf{T}_k^{\star}$ via \eqref{eqn:equivariant_basis_N_and_T}.\vspace{-0.2cm}
\Statex
\Statex \textbf{Outputs:} SSM parametrisation
$(\mathbf{U}_1,\mathbf{V}_1,\{\mathbf{W}_k\}_{k=2}^{M},\mathbf{D})$, and reduced
dynamics $(\mathbf{W},\boldsymbol{\Lambda},
\{\mathbf{N}_k,\mathbf{T}_k,\mathbf{T}^{\star}_k\}_{k=2}^{M_{\text{ROD}}})$ on
$\mathcal{W}(E)$ in extended normal form.
\end{algorithmic}
\end{algorithm}

\subsection{Simulation using the reduced order model} With the output of Algorithm~\ref{alg:eSSM}, we can then simulate the reduced-order dynamics from a given initial condition $\mathbf{y}_0\in\mathbb{R}^n$ as follows. Firstly, $\mathbf{y}_0$ is projected to the reduced coordinates through the chart,
$\boldsymbol{\eta}_0=\mathbf{V}_1^\top\mathbf{y}_0$, and mapped to normal-form
coordinates, $\mathbf{z}_0=\mathbf{t}^{-1}(\boldsymbol{\eta}_0)$, using
\eqref{eqn:expansion_for_t}. Secondly, we integrate the $d$-dimensional normal-form dynamics $\dot{\mathbf{z}}=\mathbf{n}(\mathbf{z})$ from \eqref{eqn:expansion_for_n}. Thirdly, the trajectory is mapped back to the original coordinates $\boldsymbol{\eta}(t)=\mathbf{t}(\mathbf{z}(t))$ via \eqref{eqn:expansion_for_t} on $E$. Finally, the full state is recovered through the graph parametrisation \eqref{eqn:graph_parametrisation_data},\vspace{-0.2cm}
\begin{align*}
  \mathbf{y}(t)
  = \mathbf{U}_1\boldsymbol{\eta}(t)
  + \sum_{k=2}^{M}\mathbf{W}_k\,
    \boldsymbol{\phi}_k\big(\mathbf{D}^{-1}\boldsymbol{\eta}(t)\big).
\end{align*}\vspace{-1.2cm}\\

\section{Numerical examples}\label{sec:examples}

Following the above exposition of the eSSM reduction method, we now present several examples comparing this new, equivariant method to standard SSM reduction and similar data-driven methods introduced in earlier work. All of the following experiments were conducted on an Apple M2 Max with 64 GB RAM.

\subsection{Example 1: chain of oscillators}

In this first example we apply the eSSM reduction method to the chain of oscillators described in Example~\ref{ex:chain_of_oscillators}. We use this simple example to understand how much the free parameter count can be reduced by enforcing equivariance and to provide a direct comparison against a non-equivariant SSM reduction method. For this we compare the following three methods:
\begin{itemize}
  \item \textbf{eSSM:} Our new method as described in \S\ref{sec:data_driven_ssm_reduction_with_equivariance} and Algorithm~\ref{alg:eSSM}. The corresponding Python implementation used in the following examples is available at \url{https://github.com/GeorgAUT/eSSM}.
  \item \textbf{SSMLearn:} The original Matlab implementation of the data-driven SSM discovery method introduced in \cite{SSMLearnpaper22}, available at \url{https://github.com/haller-group/SSMLearn}.
  \item \textbf{SSMLearnPy:} The original Python implementation of the SSMLearn method, available at \url{https://github.com/haller-group/SSMLearnPy}.
  \item \textbf{SSMLearn (Python):} For fairness of runtime comparison, we also compare against our eSSM Python implementation with trivial group $\mathcal{G}=\{I\}$ which effectively reduces to the original SSMLearn algorithm.
\end{itemize}
We will compare the performance of these methods in terms of accuracy, number of free parameters and wall-clock time.\vspace{-0.2cm}

\paragraph{Experimental setup}
We consider the chain of Example~\ref{ex:chain_of_oscillators} with up to
$100$ masses, i.e.\ state-space dimension $n = 200$.  The dataset consists of
four trajectories released from four different initial conditions integrated
over $t\in[0,2000]$ and sampled at $\Delta t = 0.05$.  The first
$500$ samples of every trajectory ($25$ time units) are discarded as the
off-manifold transient towards the slow SSM. We train the SSM methods on two trajectories and evaluate on two unseen trajectories at new initial conditions.  The symmetry
group supplied to eSSM is the parity group
$\mathcal{G}=\{\mathbf{I},-\mathbf{I}\}$ of
Example~\ref{ex:chain_of_oscillators}. Configurations of all methods are chosen identically where possible: reduced dimension $d = 2$, manifold order $M$, and resonance tolerance $\delta = 10^{-4}$. The largest singular values of the orbit-augmented training data and a fitted spectral subspace and SSM are shown in Figure~\ref{fig:fitting_chain_oscillators}.\vspace{-0.2cm}

\begin{figure}[h!]
    \centering
    \begin{subfigure}[t]{0.48\textwidth}
        \centering
        \includegraphics[width=0.85\linewidth]{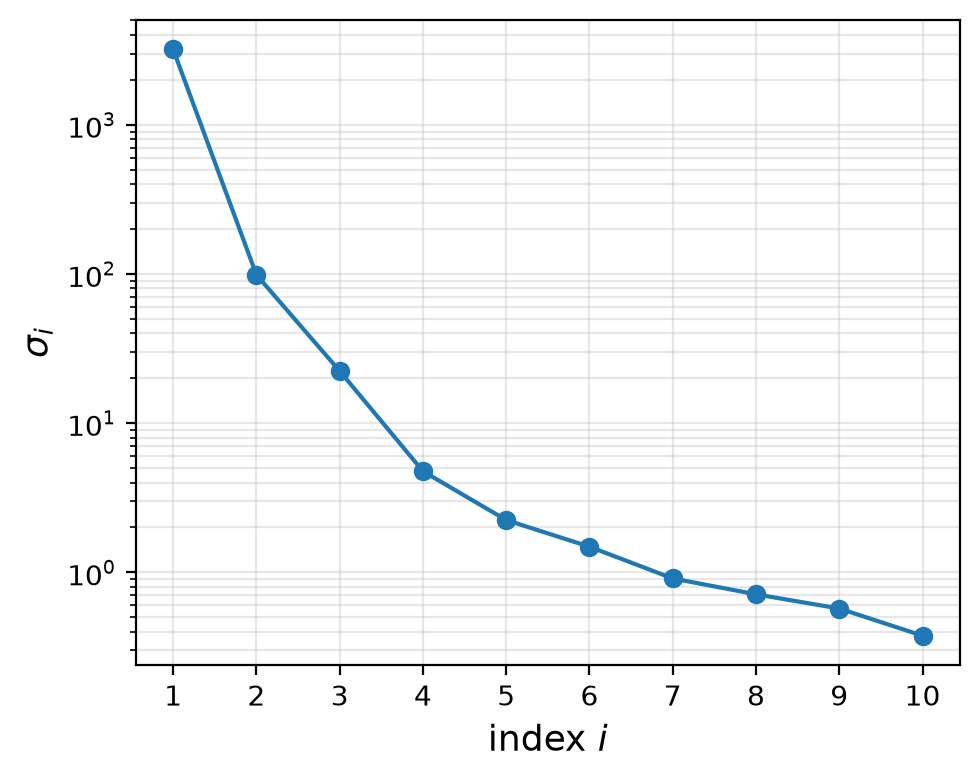}
        \caption{Leading singular values $\sigma_i$ of the orbit-augmented
        training data.}
        \label{fig:chain_spectral_gap}
    \end{subfigure}
    \hfill
    \begin{subfigure}[t]{0.48\textwidth}
        \centering
        \includegraphics[width=\linewidth]{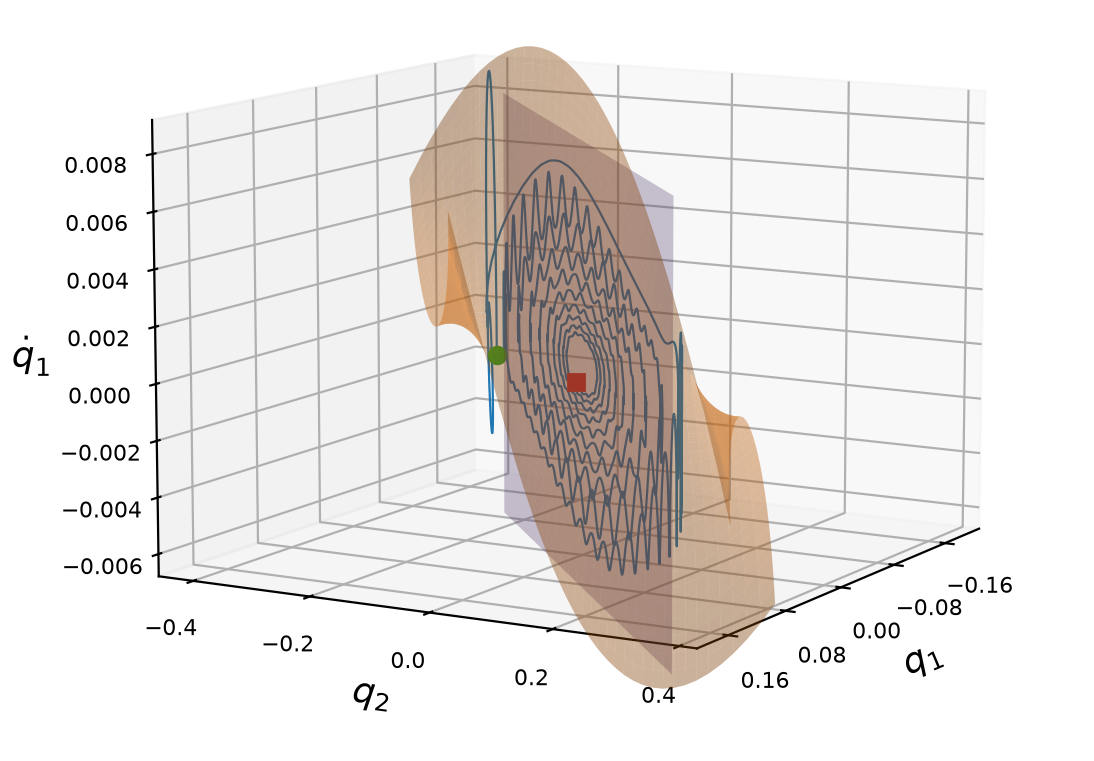}
        \caption{Fitted spectral subspace (purple plane) and SSM (orange surface), together with a
        training trajectory (blue).}
        \label{fig:chain_manifold_fit}
    \end{subfigure}
    \caption{Manifold identification for the oscillator chain ($n = 100$,
    $d = 2$, $M = 3$).}
    \label{fig:fitting_chain_oscillators}
\end{figure}

All iterative fits are run to first-order
optimality tolerance $10^{-9}$ with an iteration cap of $2000$, where possible (in the SSMLearn Matlab implementation these handles were not fully available as is discussed further below). Our quality metric is the normalised mean trajectory error (NMTE) on the test set,
\begin{align}\label{eqn:nmte}
  \mathrm{NMTE}
  = \frac{\big\|\hat{\mathbf{y}}_i-\mathbf{y}_i\big\|_2}
    {\max_{i}\|\mathbf{y}_i\|_2}\times 100\,\%,
\end{align}
where $\hat{\mathbf{y}}_i$ is the model prediction of the $i$-th test trajectory $\mathbf{y}_i$.

\paragraph{Experimental results} In our first experiment we examine the behaviour of the methods as the manifold order $M$ is varied. The results of this experiment can be seen in Figure~\ref{fig:chain_prediction_M}. As demonstrated in Example~\ref{ex:equivariant_taylor_expansion_damped_oscillator_chain} the parity symmetry of the chain of oscillators eliminates all even-degree monomials from the Taylor expansions of the SSM parametrisation and the reduced dynamics.  This reduces the number of free parameters in the model significantly (cf. Table~\ref{tab:chain_oscillators_results}). 
In practice, we observe that this reduced parameter count leads to a significant reduction in wall-clock time of the fit (cf. Figure~\ref{fig:chain_prediction_M_time}), while maintaining the accuracy of the reduced dynamics on the test trajectory (cf. Figure~\ref{fig:chain_prediction_M_nmte}).
\begin{table}[h!]
  \centering
  \begin{tabular}{rrrr}
\toprule
$M$ & SSMLearn & eSSM & reduction \\
\midrule
1 &  202 &  202 & 0.0\% \\
2 &  508 &  202 & 60.2\% \\
3 &  916 &  610 & 33.4\% \\
4 & 1426 &  610 & 57.2\% \\
5 & 2038 & 1222 & 40.0\% \\
6 & 2752 & 1222 & 55.6\% \\
\bottomrule
\end{tabular}

\caption{Total number of fitted parameters in the case $n=100, d=2$ as a function of $M=M_{\text{ROD}}$.}
\label{tab:chain_oscillators_results}
\end{table}

\begin{figure}[h!]
\centering
\begin{subfigure}[t]{0.5\textwidth}
    \centering
    \includegraphics[width=\linewidth]{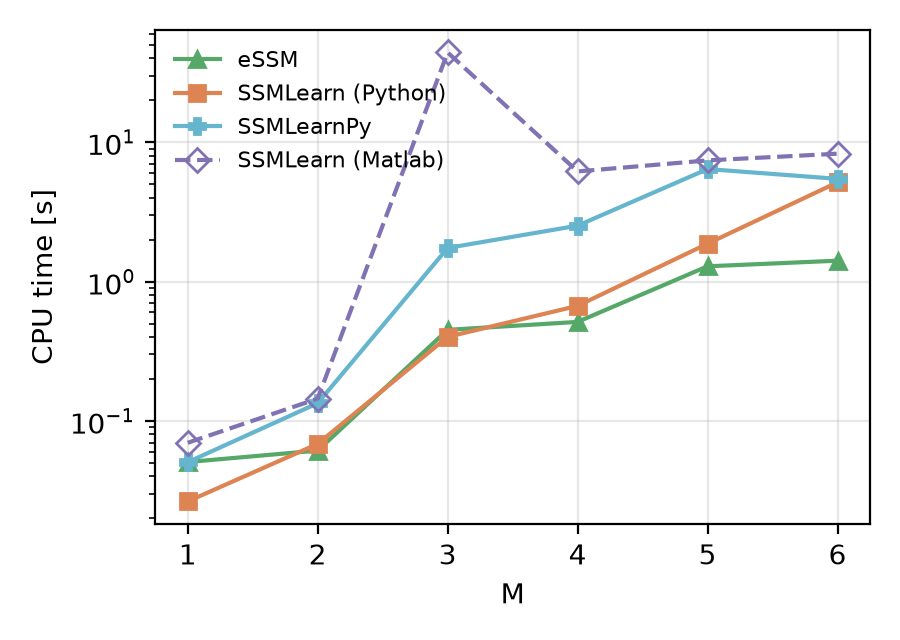}
    \caption{Wall-clock time of the full fit.}
    \label{fig:chain_prediction_M_time}
\end{subfigure}%
\begin{subfigure}[t]{0.5\textwidth}
    \centering
    \includegraphics[width=\linewidth]{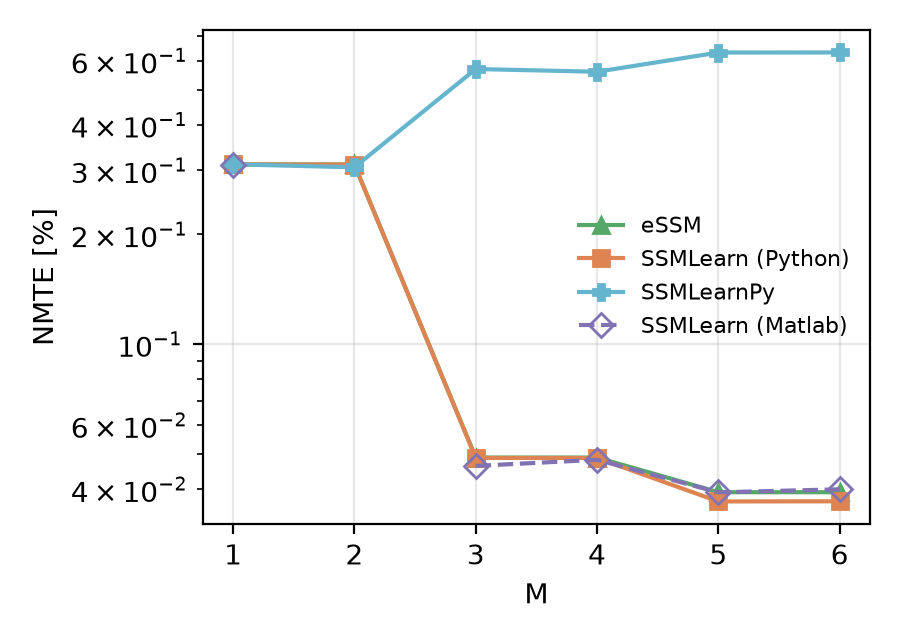}
    \caption{NMTE on the held-out trajectories.}
    \label{fig:chain_prediction_M_nmte}
\end{subfigure}
\caption{Prediction benchmark versus the manifold order $M$ ($n = 100$,
$d = 2$, $M_{\text{ROD}} = M$).}
\label{fig:chain_prediction_M}
\end{figure}

In our second experiment we fix the manifold order to $M = 3$ and vary the ambient dimension $n$ of the chain of oscillators. The results of this experiment can be seen in Figure~\ref{fig:chain_prediction_n}. As expected, the wall-clock time of the fit increases with increasing ambient dimension (cf. Figure~\ref{fig:chain_prediction_n_time}), while the accuracy of the reduced dynamics on the test trajectory remains largely unaffected (cf. Figure~\ref{fig:chain_prediction_n_nmte}, noting the scale of y-axis). As in the previous experiment, the reduced parameter count of the eSSM method leads to a significant reduction in wall-clock time of the fit, while maintaining the accuracy of the reduced dynamics on the test trajectory.

\begin{figure}[h!]
    \centering
\begin{subfigure}[t]{0.5\textwidth}
        \centering
        \includegraphics[width=\linewidth]{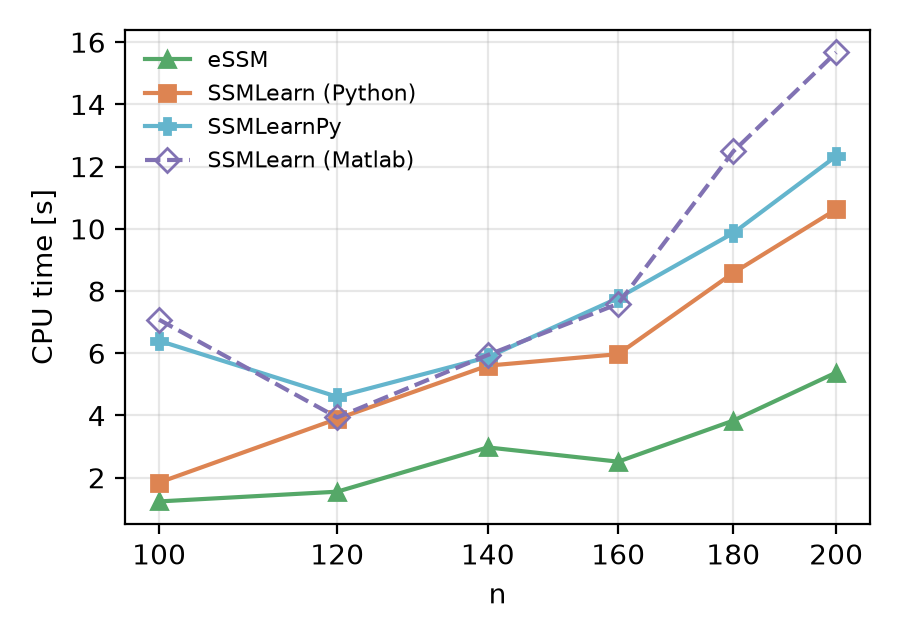}
        \caption{Wall-clock time.}
        \label{fig:chain_prediction_n_time}
    \end{subfigure}%
   \begin{subfigure}[t]{0.5\textwidth}
        \centering
        \includegraphics[width=\linewidth]{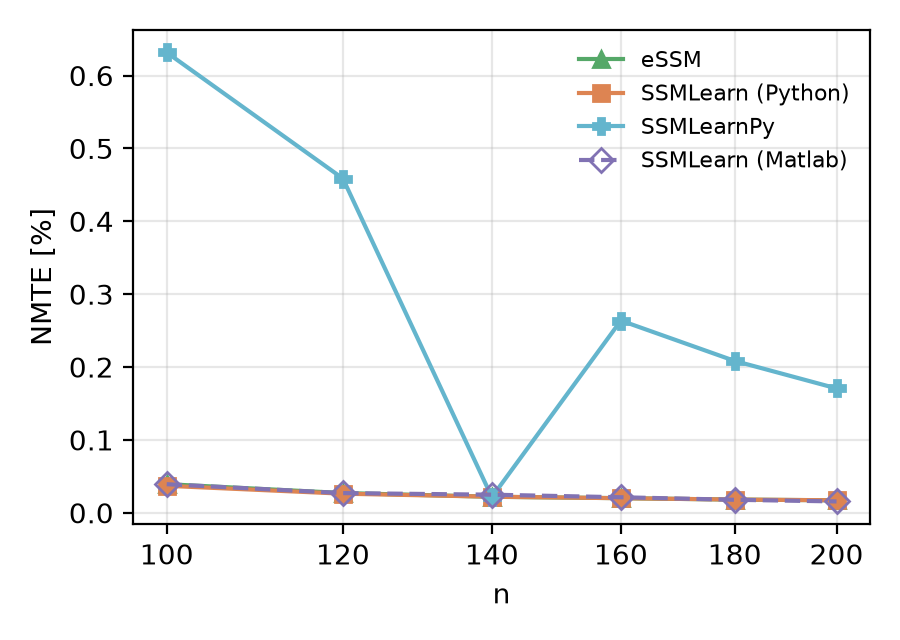}
        \caption{NMTE on the test set.}
        \label{fig:chain_prediction_n_nmte}
    \end{subfigure}
    \caption{Performance of the methods as a function of the ambient dimension
    $n$ at fixed $M = 3$ ($d = 2$, $M_{\text{ROD}} = 3$).}
    \label{fig:chain_prediction_n}
\end{figure}

\newpage\subsection{Example 2: dissipative shallow-water equations on the sphere}\label{sec:swe_sphere}

Our second example is a two-dimensional PDE whose symmetry is inherited from the geometry of the underlying domain: the viscous shallow-water equations on a rotating sphere, $S^2 = \{\mathbf{x}\in\mathbb{R}^3 : |\mathbf{x}| = 1\}$, in the formulation of \cite{galewsky2004initial},
\begin{align}\label{eqn:swe_sphere}
  \partial_t \mathbf{u} + \nu\Delta^2\mathbf{u} + g\nabla h
    + f\,\hat{\mathbf{k}}\times\mathbf{u} + \gamma\,\mathbf{u}
    &= -(\mathbf{u}\cdot\nabla)\mathbf{u},\\
  \partial_t h + \nu\Delta^2 h + H\nabla\cdot\mathbf{u}
    &= -\nabla\cdot(h\,\mathbf{u}),\nonumber
\end{align}
where $\mathbf{u}$ is the tangential velocity field, $h$ the perturbation of the surface fluid about the constant mean depth $H$, $f = 2\Omega\sin\varphi$ the Coriolis parameter at latitude $\varphi$, $g$ is the gravitational constant, $\hat{\mathbf{k}}$ is the outward unit normal, and $\nabla$, $\nabla\cdot$, $\Delta$ the intrinsic surface differential operators. The state of rest $(\mathbf{u}, h) = (\mathbf{0}, 0)$ is a fixed point and the geometry induces a natural symmetry group of rotations about the polar axis. Our time series data are generated by observing the height perturbation $h$ and the relative vorticity $\zeta = \hat{\mathbf{k}}\cdot(\nabla\times\mathbf{u})$ at a finite number of $36$ sensor locations on the sphere (cf. Figure~\ref{fig:swe_sensor_locations}).

\begin{figure}[h!]
\centering
    \centering
    \includegraphics[width=0.53\linewidth]{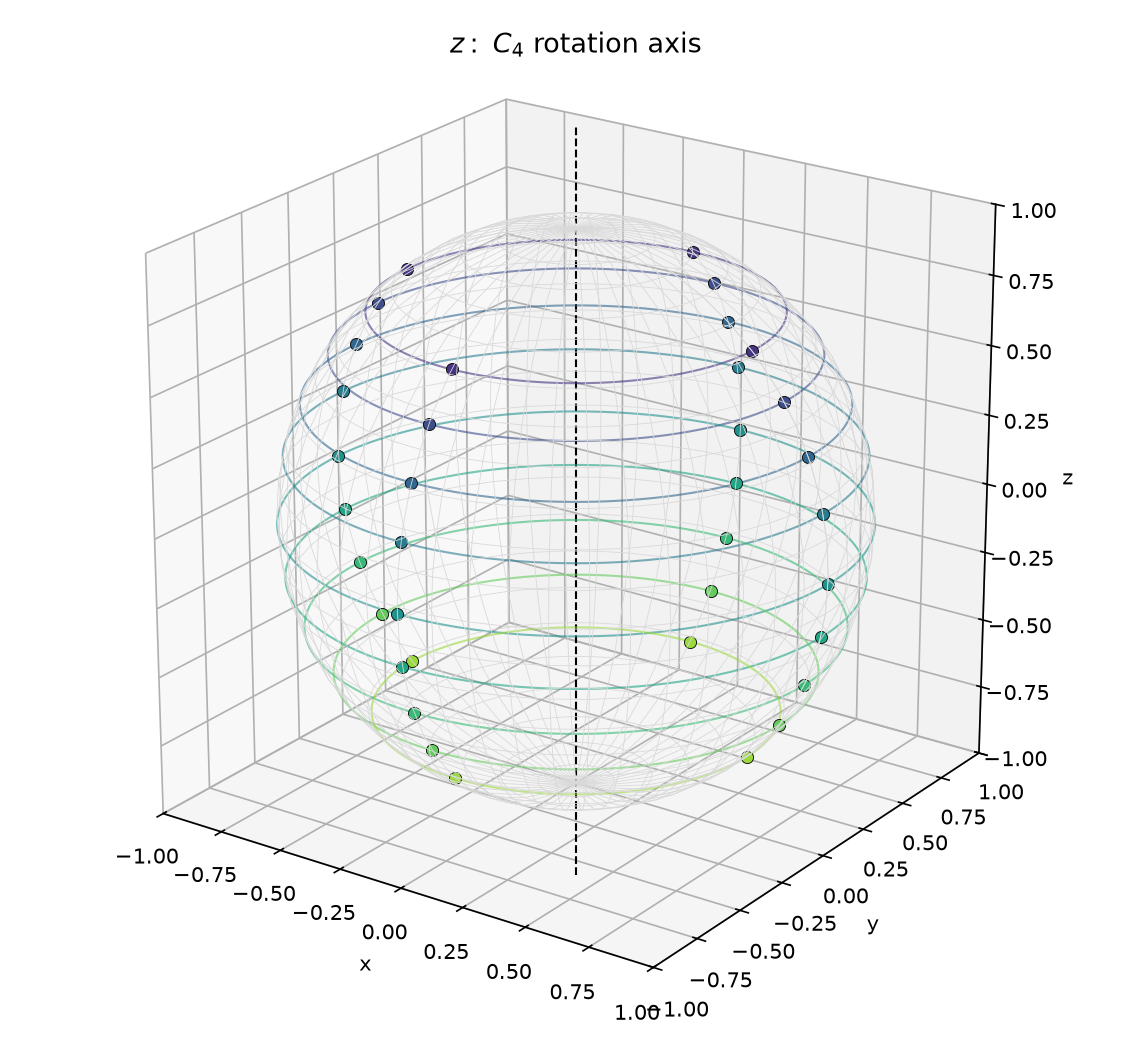}
    \caption{Sensor locations in our SWE experiment at four longitudes and nine latitudinal rings.}
    \label{fig:swe_sensor_locations}
\end{figure}
The system \eqref{eqn:swe_sphere} is equivariant under the group of rotations about the polar axis, which implies that the observations inherit a discrete cyclic symmetry, in the discrete group of rotations $\mathcal{G}=C_4$, from our sensor placement.

\paragraph{Experimental setup}
Data is generated with the spectral solver Dedalus \cite{dedalus2020}, using the publicly available spherical shallow-water example \verb|vp_sphere_shallow_water| with minor modifications, including the implementation of the drag term and the modified initial conditions
\begin{align*}
  h_0 &= \tilde{h}_0 - \int_{{S}^2} \tilde{h}_0 \, d\sigma, \quad \tilde{h}_0= A\Big[\cos\varphi\cos\lambda
  + \varepsilon\,e^{-(1 - \cos d(\lambda,\varphi))/w}\Big]\\
  \qquad \mathbf{u}_0 &= \mathbf{0},
\end{align*}
with $d$ the great-circle distance to $(\lambda_0,\varphi_0) = (0, \pi/6)$ and $A = 0.4\,H$. Our parameter choices are $\Omega = 0.2625$, $g = 19.95$, $H = 1.570\times 10^{-3}$, $\nu = 1.75\times 10^{-3}$, $\gamma = 2\times 10^{-3}$, $\varepsilon = 0.15$, $w = 0.15$, and the spectral method resolution used in Dedalus is $(N_\lambda, N_\theta) = (128, 64)$. A visualisation of the states of this system can be seen in Figure~\ref{fig:swe_setup}.

\begin{figure}[h!]
\centering
\begin{subfigure}[t]{0.5\textwidth}
    \centering
    \includegraphics[width=0.9\linewidth]{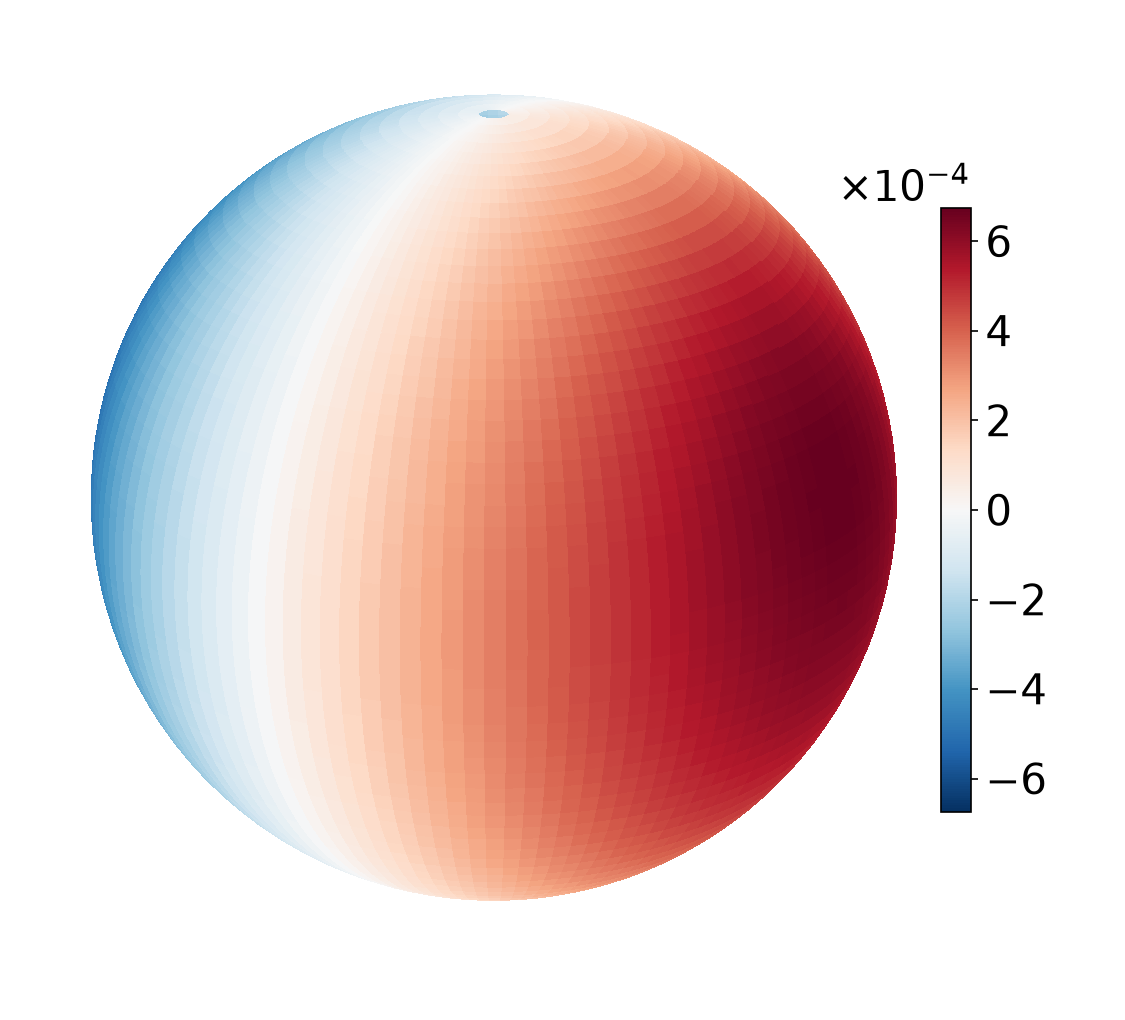}
    \caption{$h$ at $t=0$.}
\end{subfigure}%
\begin{subfigure}[t]{0.5\textwidth}
    \centering
    \includegraphics[width=0.9\linewidth]{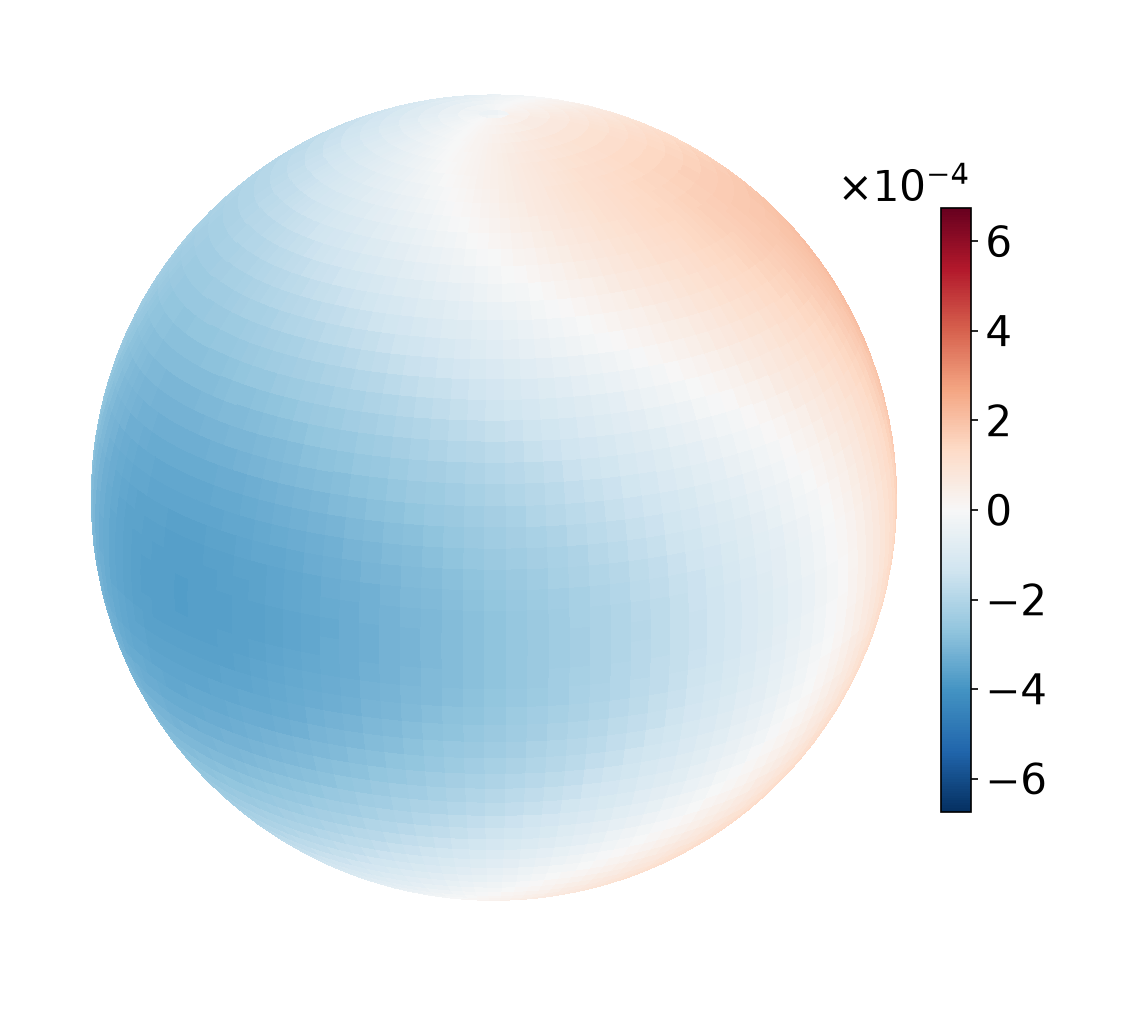}
    \caption{$h$ at $t=8$.}
\end{subfigure}
\caption{Visualisation of the $h$-perturbation in \eqref{eqn:swe_sphere}.}
\label{fig:swe_setup}
\end{figure}

The trajectory is integrated over $t\in[0, 400]$ and sampled at $\Delta t = 0.25$; the first $600$ steps are discarded as the off-manifold transient towards the slow SSM. Our observations live in $\mathbb{R}^{72}$ (two scalar fields) and $\mathcal{G} \cong C_4$. Configurations of all methods are again chosen identically where possible. Given the quadratic nature of the nonlinearity in \eqref{eqn:swe_sphere} the reduced-dynamics order $M_{\text{ROD}}$ was fixed to $2$ throughout this example. We fit the SSM dynamics on a single trajectory as above, and evaluate on two metrics (using the NMTE \eqref{eqn:nmte} as in Example 1):
\begin{itemize}
  \item \textbf{Self:} corresponding to the training trajectory, which is used to assess the accuracy of the reduced dynamics on the SSM.
  \item \textbf{Rotated:} corresponding to a trajectory obtained by applying a 90-degree rotation to the training trajectory, which is used to assess the accuracy of the reduced dynamics on the SSM under symmetry transformations.
\end{itemize}
Given the more complex nature of this example we commence with a sweep over the manifold dimension $d$ to identify a suitable reduced dimension for the SSM, fixing $M=2$ given the quadratic nature of the nonlinearity in \eqref{eqn:swe_sphere}. The results of this sweep can be seen in Figure~\ref{fig:swe_prediction_d}. We observe that the NMTE appears to be smallest at $d=6$ thus suggesting that $d = 6$ is a suitable reduced dimension for the SSM. As in Example 1 we notice that eSSM achieves comparable accuracy to SSMLearn with a significantly reduced CPU time. We note in this example SSMLearn Matlab is underperforming in terms of accuracy (with $d=8$ the method did not converge). This is most likely due to the fact that the Matlab implementation of SSMLearn enforces $M_{\text{ROD}} \geq 3$ (which is a sensible constraint for exact normal forms) and thus is unable to fit the reduced dynamics on the extended normal form with $M_{\text{ROD}} = 2$ directly. Apparently, this dynamics fitting problem becomes badly conditioned when $M_{\text{ROD}}\geq 3$ thus leading to the poor fit observed in Figure~\ref{fig:swe_prediction_d_orbit}.

\begin{figure}[h!]
\centering
\begin{subfigure}[t]{0.5\textwidth}
    \centering
    \includegraphics[width=\linewidth]{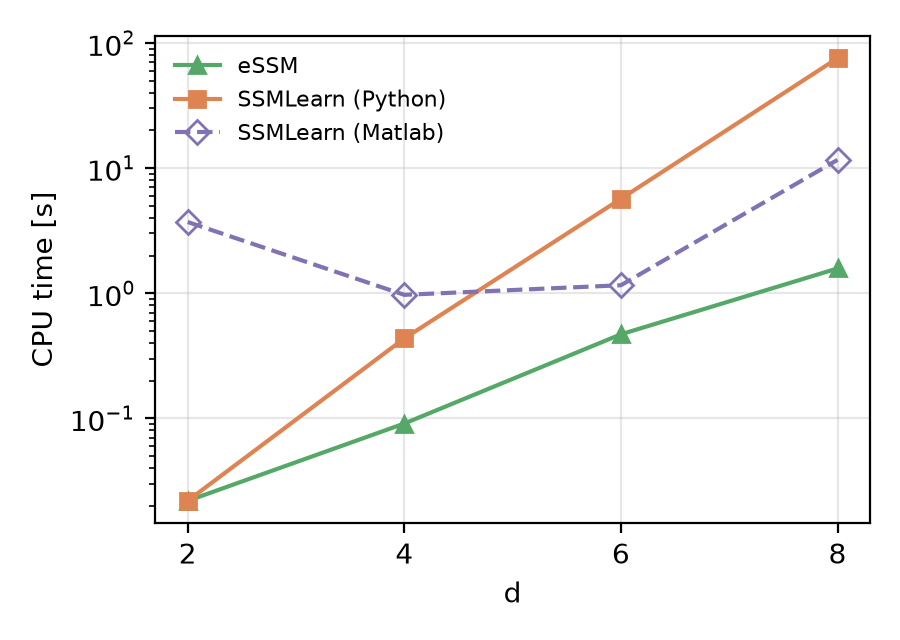}
    \caption{Wall-clock time of the full fit.}
\end{subfigure}%
\begin{subfigure}[t]{0.5\textwidth}
    \centering
    \includegraphics[width=\linewidth]{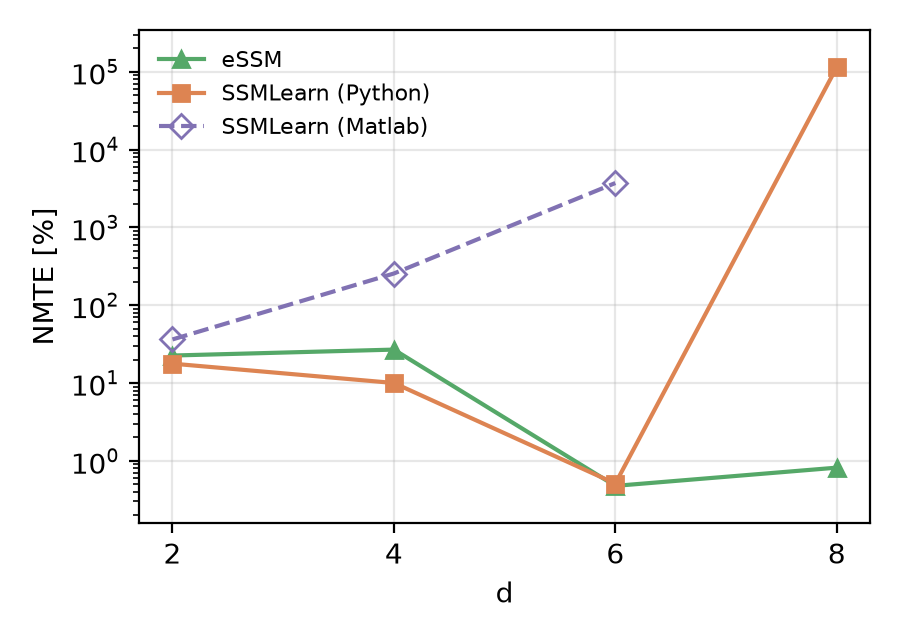}
    \caption{Prediction error on the training trajectory.}
    \label{fig:swe_prediction_d_orbit}
\end{subfigure}
\caption{Performance of the methods as a function of the reduced dimension $d$
at fixed $M=2$.}
\label{fig:swe_prediction_d}
\end{figure}

In our second experiment we fix the reduced dimension to $d = 6$ and vary the manifold order $M$ (with $M_{\text{ROD}} = 2$). The results of this experiment can be seen in Figure~\ref{fig:swe_prediction_M}. In this experiment the fourth order symmetry group $C_4$ leads to a nearly 75\% reduction in the number of free parameters at every manifold order (cf. Table~\ref{tab:swe_results}).

\begin{table}[h!]
\centering
\begin{tabular}{rrrr}
\toprule
$M$ & SSMLearn & eSSM & reduction \\
\midrule
1 &   566 &  143 & 74.7\% \\
2 &  2078 &  521 & 75.0\% \\
3 &  6110 & 1529 & 75.0\% \\
4 & 15182 & 3797 & 75.0\% \\
5 & 33326 & 8333 & 75.0\% \\
\bottomrule
\end{tabular}
\caption{Total number of fitted parameters for
the shallow-water example ($n = 72$, $d = 6$) as a function of the manifold
order $M$.}
\label{tab:swe_results}
\end{table}

The result in terms of practical performance can be seen in Figure~\ref{fig:swe_prediction_M}. We observe that the reduced parameter count leads to a significant reduction in wall-clock time of the fit at matched prediction accuracy on the training trajectory (Figure~\ref{fig:swe_prediction_M}), in particular we observe a roughly 50\% cost reduction throughout and a more significant reduction at $M=1,2$ where the unconstrained SSMLearn method struggles to fit the reduced dynamics on the extended normal form with $M_{\text{ROD}} = 2$ directly. We note that the accuracy of the reduced dynamics on the training trajectory remains largely unaffected by the manifold order $M$ (Figure~\ref{fig:swe_prediction_M_orbit}), however the accuracy of the reduced dynamics on the symmetry-transformed trajectory is significantly improved by enforcing equivariance.\vspace{-0.3cm}

\begin{figure}[h!]
\centering
\begin{subfigure}[t]{0.5\textwidth}
    \centering
    \includegraphics[width=\linewidth]{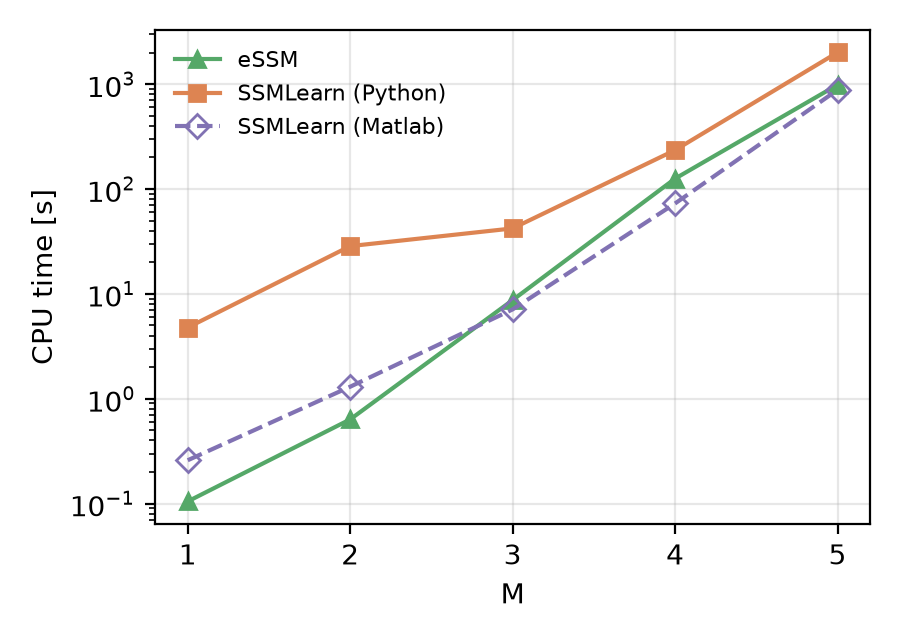}\vspace{-0.2cm}
    \caption{Wall-clock time of the full fit.}
    \label{fig:swe_prediction_M_time}
\end{subfigure}%
\begin{subfigure}[t]{0.5\textwidth}
    \centering
    \includegraphics[width=\linewidth]{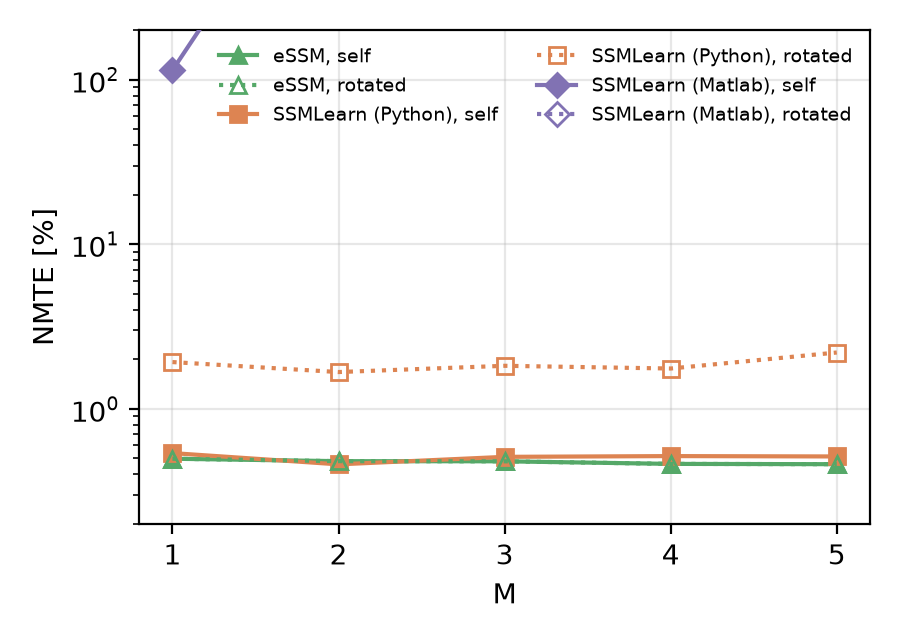}\vspace{-0.2cm}
    \caption{Prediction error on training and rotated trajectories.}
    \label{fig:swe_prediction_M_orbit}
\end{subfigure}
\caption{Prediction benchmark versus the manifold order $M$ ($n = 72$, $d = 6$).}
\label{fig:swe_prediction_M}
\end{figure}

\subsection{Example 3: KS equation on periodic domain (CTF4Science benchmark)}\label{sec:KS_equation}
In this final example we will benchmark the eSSM method on the Kuramoto--Sivashinsky (KS) equation challenge of the CTF4Science project \cite{wyder2026common}. The KS equation,
\begin{align}\label{eqn:KS_equation}
  \partial_t u + u\,\partial_x u + \partial_{xx} u + \mu\,\partial_{xxxx} u = 0,
  \qquad x\in[0, 32\pi],
\end{align}
with periodic boundary conditions, is a canonical example of spatio-temporal chaos in one dimension. The benchmark provides training trajectories of \eqref{eqn:KS_equation} on an $1024$-point grid (time step and initial conditions undisclosed) and evaluates predictions on hidden test data. Strictly speaking, this setting lies outside the scope of our theory: although the origin $u = 0$ is a fixed point of \eqref{eqn:KS_equation}, the CTF4Science data explores a chaotic attractor rather than a decaying transient towards a stable equilibrium. We include this example to test the robustness of SSM-based forecasting of dynamical systems on a standardised benchmark against a broad field of data-driven methods. We focus on the forecasting task (Test~1 of \cite{wyder2026common}), which is scored by a short-time (``weather'') and a long-time (``climate'') metric,
\begin{align}\label{eqn:ctf_scores}
  E_1 = 100\,\big(1 - S_{\mathrm{ST}}\big), \qquad
  E_2 = 100\,\big(1 - S_{\mathrm{LT}}\big),
\end{align}
where $S_{\mathrm{ST}}$ is the relative error of the predicted state over the
$m$ snapshots of the forecast window,
\begin{align*}
  S_{\mathrm{ST}}
  = \frac{\Big(\sum_{i=1}^{m}\big\|\hat{\mathbf{u}}_i-\mathbf{u}_i\big\|_2^2\Big)^{1/2}}
         {\Big(\sum_{i=1}^{m}\big\|\mathbf{u}_i\big\|_2^2\Big)^{1/2}},
\end{align*}
with $\mathbf{u}_i$ the $i$-th test snapshot and $\hat{\mathbf{u}}_i$ its
prediction, and $S_{\mathrm{LT}}$ is the corresponding relative error of the
log power spectral density restricted to the lowest $k_{\max} = 100$
wavenumbers,
\begin{align*}
  S_{\mathrm{LT}}
  = \frac{\Big(\sum_{i=1}^{m}\sum_{|k|\leq k_{\max}}
      \big(\hat{p}_{i,k}-p_{i,k}\big)^2\Big)^{1/2}}
         {\Big(\sum_{i=1}^{m}\sum_{|k|\leq k_{\max}}
      p_{i,k}^2\Big)^{1/2}},
  \qquad
  p_{i,k} = \ln\big|(\mathcal{F}\mathbf{u}_i)_k\big|^2,
\end{align*}
where $(\mathcal{F}\mathbf{u}_i)_k$ denotes the $k$-th discrete Fourier
coefficient of the $i$-th snapshot and $\hat{p}_{i,k}$ the same quantity for
the prediction. The scores are normalised such that a forecast of zeros leads to a score of $0$ in both metrics, and a score of $100$ corresponds to a perfect match with the hidden test data. For further details the reader is referred to \cite{wyder2026common}.

\paragraph{Experimental setup} 

We use the discrete-time eSSM method of Algorithm~\ref{alg:discrete_eSSM} for this forecasting task. The training trajectory is of size $\mathbf{Y}\in\mathbb{R}^{10000 \times 1024}$ and we are asking the model to continue this same trajectory for an additional $1000$ steps. In our setup the state is subsampled to an $n_x$-point spatial grid before the fit and interpolated back to the full grid by trigonometric interpolation. We use a time-delay embedding (cf. \S~\ref{sec:delay_embedding_and_equivariance}) with $q$ copies at lag $q_{\mathrm{lag}}$ steps. Since the CTF measures error directly in the full state space starting at the final step of the training trajectory, the metric is sensitive to the off-manifold residual that arises when projecting the initial condition onto the fitted SSM. To mitigate this, we use a simple exponential decay of the off-manifold residual during the forecast: letting $\boldsymbol{\eta}_0 = \mathbf{V}_1^\top\mathbf{u}_0$ be the initial condition on the fitted SSM, we define the initial off-manifold residual
\begin{align*}
  \mathbf{r}_0 \;:=\; \mathbf{u}_0 \;-\; \mathbf{U}_1\boldsymbol{\eta}_0
                 \;-\; \mathbf{h}(\boldsymbol{\eta}_0),
\end{align*}
i.e.\ the part of $\mathbf{u}_0$ that is not captured by the manifold parametrisation
$\boldsymbol{\eta}\mapsto\mathbf{U}_1\boldsymbol{\eta}+\mathbf{h}(\boldsymbol{\eta})$.
We then correct the forecast $\hat{\mathbf{u}}_i$ at step $i$ by adding the decayed residual
$\mathbf{r}_0\,e^{-i/\tau}$, where $\tau$ is a time constant (in steps) that controls the decay
rate. This ensures that the forecast matches the observed initial condition exactly at $i=0$ and
relaxes onto the SSM prediction over $\mathcal{O}(\tau)$ steps, which removes the initial jump
that would otherwise be introduced by projecting onto the manifold. On the periodic grid, \eqref{eqn:KS_equation} is equivariant under the cyclic group of grid translations and we supply to eSSM the subgroup of shifts by multiples of $2^{k}$ grid points,
\begin{align*}
  \mathcal{G} \cong C_{n_x/2^{k}}, k=0,\dots, \log_2(n_x)-1,
\end{align*}
with $\mathbf{S}$ the elementary cyclic shift. To evaluate our method fairly against the prepopulated CTF4Science leaderboard \cite{wyder2026common}, we follow the original evaluation methodology of \cite{wyder2026common} and use Ray Tune \cite{liaw2018tune} as a hyperparameter tuner to select the optimal configuration of the eSSM method based on an 80/20 split of the training data (Ray Tune does not see the held-out test data used to compute the scores in Table~\ref{tab:ks_results}). We also compare against our Python implementation of SSMLearn (the original Matlab implementation is not compatible with the CTF Python codebase). The optimal configurations found in this way are given in Table~\ref{tab:ks_config}.

\begin{table}[!ht]
  \centering
  \begin{tabular}{lrrrrrrrr}
  \toprule
  & $d$ & $M$ & $M_{\text{ROD}}$ & $n_x$ & $q$ & $q_{\mathrm{lag}}$ & stride & $\mathcal{G}$ \\
  \midrule
  eSSM     & 2 & 4 & 2 & 64  & 2 & 2 & 2  & $C_8$ \\
  SSMLearn (Python) & 2 & 4 & 2 & 64  & 2 & 2 & 2  & $-$ \\
  \bottomrule
  \end{tabular}
  \caption{Ray-Tune-selected configurations ($\tau = 500$ steps for both): $d$
  the SSM dimension, $M$ the manifold order, $M_{\text{ROD}}$ the
  reduced-dynamics order, $n_x$ the subsampled grid size, $q$ delay copies at
  lag $q_{\mathrm{lag}}$, stride is the temporal subsampling of the fit samples,
  and $\mathcal{G}$ the symmetry group supplied to the method.}
  \label{tab:ks_config}
\end{table}

\paragraph{Experimental results} Table~\ref{tab:ks_results} reports the scores against the CTF4Science field. Both SSM-based entries rank near the top of the leaderboard on this first CTF task, with only the reservoir computing method clearly ahead. We note that the equivariant and non-equivariant fits reach comparable accuracy, but, consistent with the parameter-count reductions of the previous examples, the eSSM fit takes $1.48\,$s against $3.69\,$s for SSMLearn. We note in particular, that eSSM clearly outperforms linear methods such as DMD and PyKoopman.

\begin{table}[h!]
    \centering
    \begin{tabular}{l|r|r|r}
        \textbf{Model} & \textbf{Average} & \textbf{E1} & \textbf{E2} \\ \hline
        Reservoir~\cite{jaeger_echo_no_date, maass_computational_2004, pathak_model-free_2018} & 93.21 & 99.97 & 86.45 \\ \hline
        LSTM~\cite{hochreiter1997lstm} & 50.00 & 95.22 & 4.78 \\ \hline
        \textbf{eSSM} & \textbf{48.22} & \textbf{92.85} & \textbf{3.59} \\ \hline
        ODE-LSTM~\cite{coelho2024odelstm} & 47.89 & 80.09 & 15.68 \\ \hline
        \textbf{SSMLearn} (Python) & \textbf{43.56} & \textbf{92.77} & \textbf{-5.65} \\ \hline
        SINDy~\cite{sindy, ensemblesindy} & 41.28 & 84.38 & -1.82 \\ \hline
        Opt DMD~\cite{askham2018variable} & 34.47 & 53.36 & 15.58 \\ \hline
        PyKoopman~\cite{bruntonkutzkoopmanreview22,Pan2024} & 26.59 & 14.60 & 38.58 \\ \hline
        DeepONet~\cite{deeponet} & 23.23 & 36.52 & 9.94 \\ \hline
        KAN~\cite{liu2025kan} & 6.15 & -4.43 & 16.74 \\ \hline
        \textcolor{gray}{Baseline Zeros} & \textcolor{gray}{0.00} & \textcolor{gray}{0.00} & \textcolor{gray}{0.00} \\ \hline
        FNO~\cite{li2021fourier} & -0.50 & 99.00 & -100.00 \\ \hline
        NeuralODE~\cite{chen2018neural} & -22.74 & -36.06 & -9.43 \\ \hline
        Spacetime~\cite{zhang2023spacetime} & -28.25 & 43.49 & -100.00 \\ \hline
        HigherOrder DMD~\cite{LeClainche2017} & -100.00 & -100.00 & -100.00 
    \end{tabular}
    \caption{CTF4Science KS forecasting task (Test 1): short-time score $E_1$,
    long-time score $E_2$ \eqref{eqn:ctf_scores} and their average, ranked by
    average. Bold rows correspond to scores obtained in the present work, the remaining scores are those
    reported in \cite{wyder2026common}.}
    \label{tab:ks_results}
\end{table}\vspace{-0.5cm}

\section{Conclusions}\label{sec:conclusions}

In this work we introduced equivariant spectral submanifold (eSSM) reduction
as a means of computing accurate nonlinear reduced order models of
high-dimensional systems with symmetries. We showed
that SSMs of equivariant systems are themselves equivariant submanifolds, that suitably chosen charts, the reduced
dynamics and the extended normal form all inherit induced actions of the
symmetry group, and we characterised the admissible
Taylor coefficients of equivariant maps. Building on these results, we
developed the eSSM reduction algorithm (Algorithm~\ref{alg:eSSM} and its
discrete-time counterpart, Algorithm~\ref{alg:discrete_eSSM}), whose output is
exactly equivariant by construction, for any input data. In our numerical
experiments the resulting reduction in free parameters translated into
significantly faster fits at matched predictive accuracy, improved fidelity of
the reduced model under symmetry transformations of the data, and competitive
performance on the CTF4Science Kuramoto--Sivashinsky benchmark
\cite{wyder2026common} at a fraction of the computational cost of the
unconstrained method.

\appendix
\section{Description of the method for discrete dynamical systems}\label{app:discrete_dynamical_systems}
The eSSM reduction method extends, with only minor modifications, to discrete dynamical systems of the form
\begin{align}\label{eqn:discrete_system}
  \mathbf{x}_{k+1} = \mathbf{F}(\mathbf{x}_k) = \mathbf{A}\,\mathbf{x}_k + \mathbf{f}(\mathbf{x}_k), \qquad \mathbf{x}_k \in \mathbb{R}^n,
\end{align}
where $\mathbf{f} = \mathcal{O}(\|\mathbf{x}\|^2)$ is smooth and $\mathbf{x}=\mathbf{0}$ is a hyperbolic fixed point, i.e. $\operatorname{Spect}(\mathbf{A})$ does not intersect the unit circle. The discrete system \eqref{eqn:discrete_system} is said to be equivariant with respect to a linear symmetry group $\mathcal{G}$ if
\begin{align*}
  S\mathbf{F}(\mathbf{x}) = \mathbf{F}(S\mathbf{x}), \qquad \forall S\in\mathcal{G}, \mathbf{x} \in \mathbb{R}^n.
\end{align*}
The theory of equivariant spectral submanifolds and the associated reduced dynamics carries over to the discrete setting with only minor straightforward modifications and is therefore, in the interest of brevity, not repeated here. Instead we focus on presenting the main differences in the data-driven eSSM reduction method described in \S\ref{sec:data_driven_ssm_reduction_with_equivariance} and Algorithm~\ref{alg:eSSM} in this discrete setting. Indeed, Step~(i) of Algorithm~\ref{alg:eSSM} only relies on point-values in the observed data and is therefore identical in the discrete setting. The only modifications occur in Step~(ii) of Algorithm~\ref{alg:eSSM}. In the discrete setting the reduced dynamics is the one-step map
\begin{align}\label{eqn:discrete_reduced_dynamics}
  \boldsymbol{\eta}_{k+1} = \mathbf{r}(\boldsymbol{\eta}_k) = \mathbf{V}_1^\top\,\mathbf{F}\big(\mathbf{U}_1\boldsymbol{\eta}_k + \mathbf{h}(\boldsymbol{\eta}_k)\big),
\end{align}
whose linear part $\mathbf{B} = D\mathbf{r}(\mathbf{0})$ is estimated, exactly as in Step~(ii), by the orbit-augmented dynamic mode decomposition
\begin{align*}
  \mathbf{B} = \boldsymbol{\Xi}'_{\mathcal{G}}\boldsymbol{\Xi}_{\mathcal{G}}^{\dagger},
  \qquad
  \boldsymbol{\Xi} = [\boldsymbol{\eta}_1,\dots,\boldsymbol{\eta}_{N-1}],\quad
  \boldsymbol{\Xi}' = [\boldsymbol{\eta}_2,\dots,\boldsymbol{\eta}_{N}].
\end{align*}
However, in contrast to the continuous case, {no matrix logarithm is required}. To derive the extended normal form of the discrete reduced dynamics \eqref{eqn:discrete_reduced_dynamics} we follow the same steps as in \S\ref{sec:computing_ssms}, starting with the eigendecomposition of the linear part $\mathbf{B}$:
\begin{align*}
  \mathbf{B} = \mathbf{W}\boldsymbol{\Lambda}_{\mu}\mathbf{W}^{-1}, \qquad
  \boldsymbol{\Lambda}_{\mu} = \operatorname{diag}(\mu_1,\dots,\mu_d).
\end{align*}
The normal form is now sought as a conjugate one-step map: with the same truncated expansions \eqref{eqn:expansion_for_n}--\eqref{eqn:expansion_for_t}, but with $\boldsymbol{\Lambda}$ replaced by $\boldsymbol{\Lambda}_{\mu}$, we seek
\begin{align*}
  \mathbf{z}_{k+1} = \mathbf{n}(\mathbf{z}_k), \qquad \boldsymbol{\eta} = \mathbf{t}(\mathbf{z}),
\end{align*}
where the differential conjugacy \eqref{eqn:conjugacy} is replaced by its composition form
\begin{align}\label{eqn:conjugacy_discrete}
  \mathbf{t}\big(\mathbf{n}(\mathbf{z})\big) = \mathbf{r}\big(\mathbf{t}(\mathbf{z})\big).
\end{align}
Matching coefficients order by order in \eqref{eqn:conjugacy_discrete} yields the discrete homological equations
\begin{align*}
  \big(\boldsymbol{\mu}^{\mathbf{m}} - \mu_j\big)\,\widetilde{t}_{j,\mathbf{m}} + n_{j,\mathbf{m}} = g_{j,\mathbf{m}},
  \qquad \boldsymbol{\mu}^{\mathbf{m}} := \prod_{l=1}^{d}\mu_l^{m_l},
\end{align*}
in place of \eqref{eqn:homological}. The discrete multipliers can be regarded as the discrete-time analogue of the continuous-time eigenvalues with $\mu_j = e^{\lambda_j\Delta t}$ and, as a result, the additive resonance quantity $\mathbf{m}\cdot\boldsymbol{\lambda} - \lambda_j$ is replaced by its multiplicative counterpart $\boldsymbol{\mu}^{\mathbf{m}} - \mu_j$, and a monomial can be removed from the reduced dynamics precisely when $\boldsymbol{\mu}^{\mathbf{m}} \neq \mu_j$, with small denominators arising whenever $\boldsymbol{\mu}^{\mathbf{m}} \approx \mu_j$. Thus it is natural to characterise the near-resonant monomials in the discrete setting by the index set
\begin{align}\label{eqn:near_resonant_set_discrete}
  \mathcal{I}_\delta := \Big\{ (j,\mathbf{m}) :
  \big|\arg\big(\boldsymbol{\mu}^{\mathbf{m}}/\mu_j\big)\big| \leq \delta
  \Big\}.
\end{align}
We can show analogously to the continuous case (cf. Proposition~\ref{prop:equivariant_normal_form}) that the near-resonant classification \eqref{eqn:near_resonant_set_discrete} is compatible with the symmetry group $\mathcal{G}$, so that the equivariance constraints on the normal form coefficients can be imposed in the same way as in \S\ref{sec:algorithmic_details_essm}. Finally, the coefficient fit of Step~(ii.c) from \S\ref{sec:algorithmic_details_essm} is replaced by the discrete conjugacy fit
\begin{align}\label{eqn:discrete_conjugacy_fit}
  (\mathbf{c}^{\mathbf{n}},\mathbf{c}^{\mathbf{t},\star})
  = \operatorname*{argmin}_{\mathbf{c}^{\mathbf{n}},\,\mathbf{c}^{\mathbf{t},\star}}
  \sum_{i=1}^{N-1}\Big\|\,
  \mathbf{t}^{-1}(\boldsymbol{\eta}_{i+1})
  - \mathbf{n}\big(\mathbf{t}^{-1}(\boldsymbol{\eta}_i)\big)
  \Big\|_2^2.
\end{align}

The discrete-time eSSM reduction method is summarised in Algorithm~\ref{alg:discrete_eSSM}.

\begin{algorithm}
\caption{Discrete-time eSSM reduction method}
\label{alg:discrete_eSSM}
\begin{algorithmic}[1]
\Statex \textbf{Inputs:} as in Algorithm~\ref{alg:eSSM} without $\Delta t$.
\Statex
\Statex \emph{Step (i): identical to Step (i) of Algorithm~\ref{alg:eSSM}.}
\Statex
\Statex \emph{Step (ii): Compute reduced one-step dynamics on $\mathcal{W}(E)$ in extended normal form.}
\State \textbf{Form} the orbit-augmented reduced snapshots
$\boldsymbol{\Xi}_{\mathcal{G}}=[\,S|_E\boldsymbol{\Xi}\,]_{S\in\mathcal{G}}$,
$\boldsymbol{\Xi}'_{\mathcal{G}}=[\,S|_E\boldsymbol{\Xi}'\,]_{S\in\mathcal{G}}$.
\State \textbf{Estimate} the one-step map
$\mathbf{B}=\boldsymbol{\Xi}'_{\mathcal{G}}\boldsymbol{\Xi}_{\mathcal{G}}^{\dagger}$
and \textbf{eigendecompose} $\mathbf{B}=\mathbf{W}\boldsymbol{\Lambda}_{\mu}\mathbf{W}^{-1}$.
\For{$k=2,\dots,M_{\text{ROD}}$}
  \State \textbf{Select} the resonant support
  $\mathcal{I}_\delta=\{(j,\mathbf{m}):|\arg(\boldsymbol{\mu}^{\mathbf{m}}/\mu_j)|\leq\delta\,\Delta t\}$.
  \State \textbf{Compute} the equivariant bases $\mathbf{B}_k^{\mathbf{n}},\mathbf{B}_k^{\mathbf{t}}$
  of $\mathbf{M}_k[\widehat{S}]$ restricted to
  $\operatorname{supp}(\mathcal{I}_\delta),\operatorname{supp}(\mathcal{I}_\delta^{\,c})$.
\EndFor
\State \textbf{Fit} $(\mathbf{c}^{\mathbf{n}},\mathbf{c}^{\mathbf{t},\star})$ on the one-step conjugacy residual \eqref{eqn:discrete_conjugacy_fit}, initialised at $\mathbf{0}$.
\State \textbf{Recover} $\mathbf{c}^{\mathbf{t}}$ by linear least squares and
\textbf{assemble} $\mathbf{N}_k,\mathbf{T}_k,\mathbf{T}_k^{\star}$ via \eqref{eqn:equivariant_basis_N_and_T}.
\Statex
\Statex \textbf{Outputs:} SSM parametrisation
$(\mathbf{U}_1,\mathbf{V}_1,\{\mathbf{W}_k\}_{k=2}^{M},\mathbf{D})$ and reduced one-step dynamics
$(\mathbf{W},\boldsymbol{\Lambda}_{\mu},\{\mathbf{N}_k,\mathbf{T}_k,\mathbf{T}^{\star}_k\}_{k=2}^{M_{\text{ROD}}})$
on $\mathcal{W}(E)$ in extended normal form.
\end{algorithmic}
\end{algorithm}

Using the outputs of Algorithm~\ref{alg:discrete_eSSM}, we can then simulate the
dynamics of \eqref{eqn:discrete_system} similarly to the continuous case, by
iterating the reduced normal form $\mathbf{z}_{k+1} = \mathbf{n}(\mathbf{z}_k)$,
after mapping the initial condition $\mathbf{x}_0$ into normal-form coordinates
via $\mathbf{z}_0 = \mathbf{t}^{-1}(\mathbf{V}_1^\top\mathbf{x}_0)$, and then mapping the trajectory back to the full state space via
$\mathbf{x}_k = \mathbf{U}_1\mathbf{t}(\mathbf{z}_k) + \mathbf{h}\big(\mathbf{t}(\mathbf{z}_k)\big)$.

\section*{Acknowledgments}
The author gratefully acknowledges funding in form of a Henslow Fellowship of the Cambridge Philosophical Society. The author thanks Matt Colbrook (University of Cambridge) for helpful feedback on an early draft of the manuscript.

\bibliographystyle{siamplain}
\bibliography{references}

\end{document}